\newtheorem{theorem}{Theorem}
\newtheorem{cor}{Corollary}
\newenvironment{innerthm}[1]
  {\innercustomthm}
  {\endinnercustomthm}
\newenvironment{innercor}[1]
  {\innercustomcor}
  {\endinnercustomcor}
\definecolor{mydarkblue}{rgb}{0,0.08,0.45}
\begin{document}

%%%%%%%%% TITLE
\title{Regularizing Neural Networks via Adversarial Model Perturbation}

\author{Yaowei Zheng\\
BDBC and SKLSDE\\
Beihang University, China\\
{\tt\small hiyouga@buaa.edu.cn}
\and
Richong Zhang\thanks{Corresponding author}\\
BDBC and SKLSDE\\
Beihang University, China\\
{\tt\small zhangrc@act.buaa.edu.cn}
\and
Yongyi Mao\\
School of EECS\\
University of Ottawa, Canada\\
{\tt\small ymao@uottawa.ca}
}

\maketitle

\begin{abstract}

Effective regularization techniques are highly desired in deep learning for alleviating overfitting and improving generalization. This work proposes a new regularization scheme, based on the understanding that the flat local minima of the empirical risk cause the model to generalize better. This scheme is referred to as adversarial model perturbation (AMP), where instead of directly minimizing the empirical risk, an alternative ``AMP loss'' is minimized via SGD. Specifically, the AMP loss is obtained from the empirical risk by applying the ``worst'' norm-bounded perturbation on each point in the parameter space. Comparing with most existing regularization schemes, AMP has strong theoretical justifications, in that minimizing the AMP loss can be shown theoretically to favour flat local minima of the empirical risk. Extensive experiments on various modern deep architectures establish AMP as a new state of the art among regularization schemes. Our code is available at \url{https://github.com/hiyouga/AMP-Regularizer}.

\end{abstract}

\let\thefootnote\relax\footnotetext{Accepted to CVPR 2021}

\section{Introduction}

To date, the generalization behaviour of deep neural networks is still a mystery, despite some recent progress (see, \eg, \cite{arora2019fine,belkin2019reconciling,hochreiter1997flat,jacot2018neural,keskar2017large,yang2020rethinking,zhang2017understanding}). A commonly accepted and empirically verified understanding in this regard is that the model parameter that corresponds to a flat minimum of the empirical risk tends to generalize better. For example, the authors of \cite{hochreiter1997flat,keskar2017large} argue that flat minima correspond to simple models, which are less likely to overfit. This understanding has inspired great effort studying factors in the optimization process (such as learning rate and batch size) that impacting the flatness of the found minima \cite{goyal2017accurate,hoffer2017train,jastrzkebski2017three} so as to better understand the generalization behaviour of deep networks.

Meanwhile developing effective regularization techniques remains as the most important approach in practice to alleviate overfitting and force model towards better generalization (\eg, \cite{guo2019mixup,krizhevsky2012imagenet,krogh1992simple,srivastava2014dropout,szegedy2016rethinking,zhang2018mixup}). Some recent research in fact suggests that the effectiveness of certain regularization techniques is due to their ability to find flatter minima \cite{ishida2020we,wei2020implicit}.

\begin{figure}[t]
\centering
\begin{subfigure}{0.48\columnwidth}%
\centering%
\includegraphics[width=0.95\columnwidth]{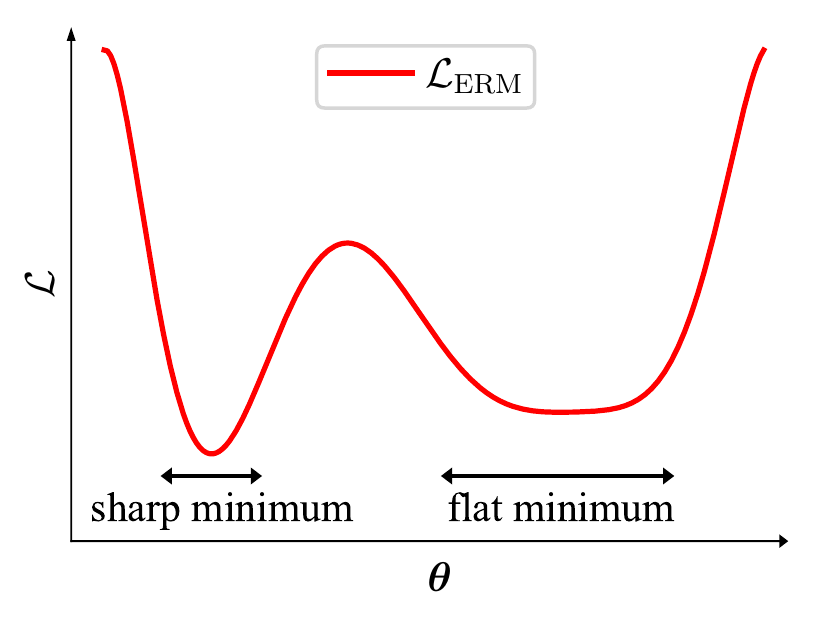}%
\end{subfigure}%
\begin{subfigure}{0.48\columnwidth}%
\centering%
\includegraphics[width=0.95\columnwidth]{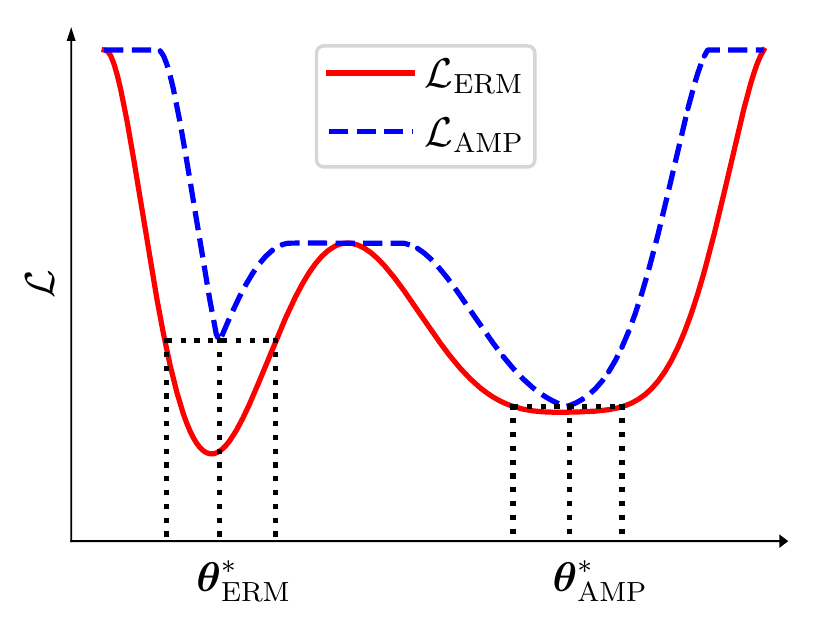}%
\end{subfigure}%
\caption{An example showing an empirical risk curve (left) and its corresponding AMP loss curve (right, blue).}
\label{fig:motivation}
\end{figure}

Additionally, there have been significant research advances in recent years in developing more effective regularization schemes, which include, for example, MixUp, Flooding \cite{ishida2020we,zhang2018mixup}. Despite their great success, these techniques usually fall short of strong principles or theoretical justifications. Thus one expects more principled and more powerful regularization schemes are yet to be discovered. 

This work sets out to develop a powerful regularization scheme under the principle of finding flat local minima of the empirical risk. To that end, we propose a novel regularization scheme which can be strongly justified in terms of its ability to finding flat minima. This scheme is referred to as {\em Adversarial Model Perturbation} or AMP, where instead of minimizing the empirical risk $\mathcal{L}_\mathrm{ERM}(\boldsymbol{\theta})$ over model parameter $\boldsymbol{\theta}$, it minimizes an alternative ``AMP loss''. Briefly, the AMP loss $\mathcal{L}_\mathrm{AMP}(\boldsymbol{\theta})$ at a parameter setting $\boldsymbol{\theta}$ is the worst (or highest) empirical risk of all perturbations of $\boldsymbol{\theta}$ with the perturbation norm no greater than a small value $\epsilon$, namely,
\begin{equation}
\mathcal{L}_\mathrm{AMP}(\boldsymbol{\theta}):=\max_{\Delta:\Vert\Delta\Vert\le\epsilon}\mathcal{L}_\mathrm{ERM}(\boldsymbol{\theta}+\Delta)
\end{equation}

To see why minimizing the AMP loss provides opportunities to find flat local minima of the empirical risk, consider the example in Figure~\ref{fig:motivation}. Figure~\ref{fig:motivation} (left) sketches an empirical risk curve $\mathcal{L}_\mathrm{ERM}$, which contains two local minima, a sharp one on the left and a flat one on the right. The process of obtaining the AMP loss from the empirical risk can be seen as a ``max-pooling'' operation, which slides a window of width $2\epsilon$ (in high dimension, more precisely, a sphere with radius $\epsilon$) across the parameter space and, at each location, returns the maximum value inside the window (resp. sphere). The resulting AMP loss is shown as the blue curve in Figure~\ref{fig:motivation} (right). Since the right minimum in the AMP loss is lower than the left one, minimizing the AMP loss gives the right minimum as its solution. 

In this paper, we formally analyze the AMP loss minimization problem and its preference of flat local minima. Specifically, we show that this minimization problem implicitly uses the ``narrowest width'' of a local minimum as a notion of flatness, and tries to penalize the minima that are not flat in this sense.

\begin{figure}[t]
\centering
\begin{subfigure}{0.48\columnwidth}%
\centering%
\includegraphics[width=0.95\columnwidth]{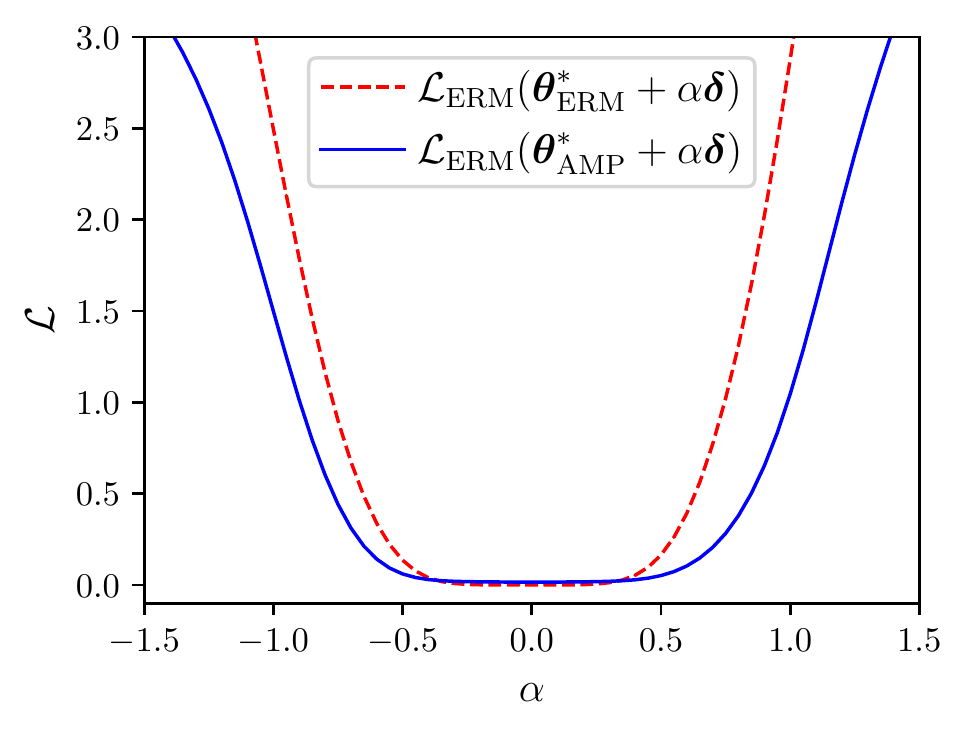}%
\end{subfigure}%
\begin{subfigure}{0.48\columnwidth}%
\centering%
\includegraphics[width=0.95\columnwidth]{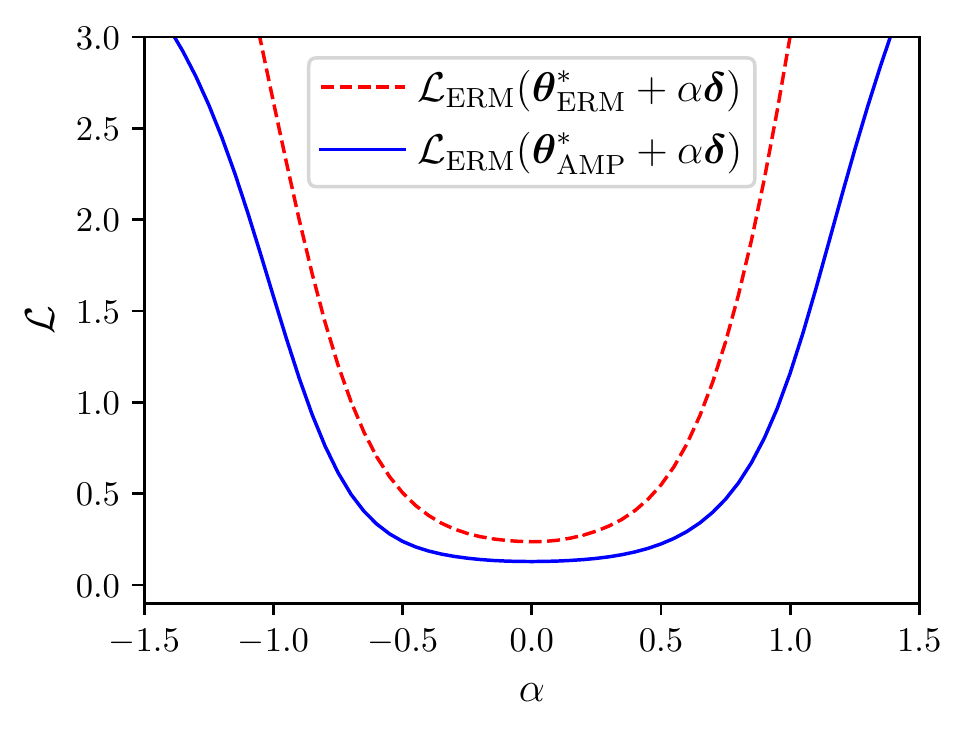}%
\end{subfigure}%
\caption{Landscapes of the empirical risks obtained from the PreActResNet18 \cite{he2016identity} models trained with ERM (red) and AMP (blue) on CIFAR-10. Left: on training set; right: on test set. These curves are computed using the technique presented in \cite{li2018visualizing}, where $\boldsymbol{\delta}$ indicates a random direction and $\alpha$ is a displacement in that direction.}
\label{fig:validation}
\end{figure}

We derive a mini-batch SGD algorithm for solving this minimization problem, which gives rise to the proposed AMP regularization scheme. Interestingly, we show that this algorithm can also be seen as the regular empirical risk minimization with an additional penalty term on the gradient norm. This provides an alternative justification of the AMP scheme. Figure~\ref{fig:validation} contains an experimental result suggesting that AMP indeed selects flatter minima than ERM does. 

We conduct experiments on several benchmark image classification datasets (SVHN, CIFAR-10, CIFAR-100) to validate the effectiveness of the proposed AMP scheme. Compared with other popular regularization schemes, AMP demonstrates remarkable regularization performance, establishing itself as a new state of the art. 

Our contributions can be summarized as follows.

1) Motivated by the understanding that flat minima help generalization, we propose adversarial model perturbation (AMP) as a novel and efficient regularization scheme.

2) We theoretically justify that AMP is capable of finding flatter local minima, thereby improving generalization.

3) Extensive experiments on the benchmark datasets demonstrate that AMP achieves the best performance among the compared regularization schemes on various modern neural network architectures.

\section{Related Work}

\subsection{Flat Minima and Generalization}

There has been a rich body of works that investigate the relationship between the flatness of the local minima and the generalization of a deep neural network \cite{chaudhari2017entropy,hochreiter1997flat,keskar2017large,li2018visualizing}. \cite{hochreiter1997flat} suggests that flat minima correspond to low-complexity networks, which tend to generalize well under the principle of minimum description length \cite{rissanen1978modeling}. \cite{chaudhari2017entropy} presents another explanation supporting this argument through the lens of Gibbs free energy. \cite{li2018visualizing} demonstrates the better generalization of flat minima by visualizing the loss landscape. The empirical results in \cite{keskar2017large} show that large-batch SGD finds sharp minima while small-batch SGD leads to flatter minima and provides better generalization. On the other hand, \cite{dinh2017sharp} argues that sharp minima do not necessarily lead to poor generalization. The argument is that due to the parameterization redundancy in deep networks and under a certain notion of flatness, one can transform a flat minimum to an equivalent sharp one. Nonetheless, it is in general accepted and empirically verified that flat minima tend to give better generalization performance, and this understanding underlies the design of several recent regularization techniques (\eg, \cite{ishida2020we,izmailov2018averaging}). A concurrent work of \cite{foret2021sharpness} further provides a PAC-Bayesian justification as to why the flatness of the minima helps generalization. A similar technique designed by \cite{wu2020adversarial} suggests that flat minima also improve robust generalization. 

\subsection{Regularization}

Regularization may broadly refer to any training techniques that help to improve generalization. Despite the well-known regularization techniques such as weight decay \cite{krogh1992simple}, Dropout \cite{srivastava2014dropout}, normalization tricks \cite{ba2016layer,ioffe2015batch} and data augmentation \cite{cubuk2019autoaugment,krizhevsky2012imagenet,lim2019fast}, various techniques have been developed. Label smoothing \cite{szegedy2016rethinking} mixes the one-hot label of the training example with a uniform distribution. Shake-Shake regularization \cite{gastaldi2017shake} combines parallel branches with a stochastic affine function in multi-branch networks. ShakeDrop regularization \cite{yamada2019shakedrop} extends Shake-Shake to single-branch architectures. Cutout \cite{devries2017improved} randomly masks out some regions of input images. MixUp \cite{zhang2018mixup} regularizes deep networks by perturbing training samples along the direction of other samples. Flooding \cite{ishida2020we} forces training loss to stay above zero to avoid overfitting. Adversarial training \cite{goodfellow2015explaining}, originally designed for improving the model's adversarial robustness, is also shown to have great regularization effect when their parameters are carefully chosen \cite{miyato2016adversarial}. But it is also observed that training the model excessively towards adversarial robustness may hurt generalization \cite{tsipras2018robustness}. Some recent progresses (\eg, \cite{garipov2018ensembling,izmailov2018averaging}) exploit multiple states of the model parameter while training and ensembling them to improve generalization. The concurrent work \cite{foret2021sharpness} also independently discovered a similar regularization scheme as we present in this paper. 

\section{AMP: Adversarial Model Perturbation}

\subsection{From Empirical Risk to AMP Loss}

Consider a classification setting, where we aim at finding a classifier $f:\mathcal{X}\to\mathcal{Y}$ that maps the input space $\mathcal{X}$ to the label space $\mathcal{Y}$. Note that we may take $\mathcal{Y}$ as the set of all distributions over the set of possible labels, so that not only $f(\boldsymbol{x})$ belongs to $\mathcal{Y}$, each ground-truth label also belongs to $\mathcal{Y}$ since it can be written as a degenerated distribution or a ``one-hot'' vector.

We will take $f$ as a neural network parameterized by $\boldsymbol{\theta}$, taking values from its weight space $\Theta$. For each training example $(\boldsymbol{x},\boldsymbol{y})\in\mathcal{X}\times\mathcal{Y}$, we will denote by $\ell(\boldsymbol{x},\boldsymbol{y};\boldsymbol{\theta})$ the loss of the prediction $f(\boldsymbol{x})$ by the model with respect to the true label $\boldsymbol{y}$. That is, the function $\ell$ absorbs the function $f$ within.

Under the empirical risk minimization (ERM) principle \cite{vapnik1998statistical}, the training of the neural network using a training set $\mathcal{D}$ is performed by minimizing the following {\em empirical risk} (which we also refer to as ``ERM loss'') $\mathcal{L}_\mathrm{ERM}$ over parameter $\boldsymbol{\theta}$:
\begin{equation}
\mathcal{L}_\mathrm{ERM}(\boldsymbol{\theta}):=\frac{1}{|\mathcal{D}|}\sum_{(\boldsymbol{x},\boldsymbol{y})\in\mathcal{D}}\ell(\boldsymbol{x},\boldsymbol{y};\boldsymbol{\theta})
\end{equation}

It is however well known that ERM training is prone to overfitting \cite{vapnik1998statistical} and that the learned model often fails to generalize well on the unseen examples. 

This work is motivated by the need of effective regularization schemes to improve the model's generalization capability. Specifically, our objective is to develop a technique that forces the training process to find flatter minima or low-norm solutions when minimizing the empirical risk. The technique we develop is termed ``adversarial model perturbation'' or AMP, which we now elaborate. 

For any positive $\epsilon$ and any $\boldsymbol{\mu}\in\Theta$, let ${\rm\bf B}(\boldsymbol{\mu};\epsilon)$ denote the norm ball in $\Theta$ with radius $\epsilon$ centred at $\boldsymbol{\mu}$. That is
\begin{equation}
{\rm\bf B}(\boldsymbol{\mu};\epsilon):=\{\boldsymbol{\theta}\in\Theta:\Vert\boldsymbol{\theta}-\boldsymbol{\mu}\Vert\le\epsilon\}
\end{equation}

We note that the norm used in defining the norm ball
is chosen as $\text{L}_2$ norm. However, we expect that the norm can be extended beyond this choice.

We now define an ``AMP loss'' $\mathcal{L}_\mathrm{AMP}$ as follows.
\begin{equation}\label{eq:L_AMP}
\mathcal{L}_\mathrm{AMP}(\boldsymbol{\theta}):=\max_{\Delta\in{\rm\bf B}(\boldsymbol{0};\epsilon)}\frac{1}{|\mathcal{D}|}\sum_{(\boldsymbol{x},\boldsymbol{y})\in\mathcal{D}}\ell(\boldsymbol{x},\boldsymbol{y};\boldsymbol{\theta}+\Delta)
\end{equation}
where the $\epsilon$ is a small positive value serving as a hyper-parameter. 

Figure~\ref{fig:motivation} sketches an example demonstrating that minimizing the AMP loss $\mathcal{L}_\mathrm{AMP}$ over $\boldsymbol{\theta}$ presents opportunities of finding flatter minima of the ERM loss $\mathcal{L}_\mathrm{ERM}$. Theoretical justifications for AMP's capability of finding flatter minima are given in Section~\ref{sec:just}.

\subsection{Training Algorithm}

Practical considerations on computation complexity and training speed motivate a mini-batched training approach to minimizing the AMP loss $\mathcal{L}_\mathrm{AMP}$. Specifically, we may approximate the AMP loss in (\ref{eq:L_AMP}) using the corresponding loss obtained from a random mini-batch ${\mathcal B}$, namely,
\begin{align}
\mathcal{L}_\mathrm{AMP}(\boldsymbol{\theta})
&\approx\max_{\Delta_{\mathcal{B}}\in{\rm\bf B}(\boldsymbol{0};\epsilon)}\frac{1}{|\mathcal{B}|}\sum_{(\boldsymbol{x},\boldsymbol{y})\in\mathcal{B}}\ell(\boldsymbol{x},\boldsymbol{y};\boldsymbol{\theta}+\Delta_\mathcal{B})\nonumber\\
&:=\mathcal{J}_{\mathrm{AMP},\mathcal{B}}(\boldsymbol{\theta})
\end{align}

Then the AMP loss minimization problem can be approximated as minimizing the expected batch-level AMP loss $\mathcal{J}_{\mathrm{AMP},\mathcal{B}}$, formally as finding
\begin{equation}
\boldsymbol{\theta}^\ast_\mathrm{AMP}:=\arg\min_{\boldsymbol{\theta}}\mathop{\mathbb{E}}_\mathcal{B}\mathcal{J}_{\mathrm{AMP},\mathcal{B}}(\boldsymbol{\theta})
\end{equation}

Under this formulation, each batch $\mathcal{B}$ is associated with a perturbation vector $\Delta_\mathcal{B}$ on the model parameter $\boldsymbol{\theta}$. The training involves an inner maximization nested inside and outer minimization: in the inner maximization, a fixed number, say $N$, of steps are used to update $\Delta_\mathcal{B}$ in the direction of increasing the ERM loss  so as to obtain $\mathcal{J}_{\mathrm{AMP},\mathcal{B}}$; the outer minimization loops over random batches and minimizes $\mathbb{E}_\mathcal{B}\mathcal{J}_{\mathrm{AMP},\mathcal{B}}$ using mini-batched SGD. The precise algorithm is described in Algorithm~\ref{alg:amp}. Note that at the end of training, the learned $\boldsymbol{\theta}^\ast_\mathrm{AMP}$ is used as model parameter for predictions; that is, no perturbation is applied in testing.

\begin{algorithm}[!htbp]
\caption{Adversarial Model Perturbation Training}
\label{alg:amp}
\begin{algorithmic}[1]
\REQUIRE Training set $\mathcal{D}=\{(\boldsymbol{x},\boldsymbol{y})\}$, Batch size $m$, Loss function $\ell$, Initial model parameter $\boldsymbol{\theta}_0$, Outer learning rate $\eta$, Inner learning rate $\zeta$, Inner iteration number $N$, $\text{L}_2$ norm ball radius $\epsilon$
\WHILE{$\boldsymbol{\theta}_k$ not converged}
\STATE Update iteration: $k\gets k+1$
\STATE Sample $\mathcal{B}=\{(\boldsymbol{x}_i,\boldsymbol{y}_i)\}_{i=1}^m$ from training set $\mathcal{D}$
\STATE Initialize perturbation: $\Delta_\mathcal{B}\gets\boldsymbol{0}$
\FOR{$n\gets1\text{ to }N$}
\STATE Compute gradient: \\ $\!\!\!\!\!\!\qquad\nabla\mathcal{J}_{\mathrm{AMP},\mathcal{B}}\gets\sum_{i=1}^m\nabla_{\boldsymbol{\theta}}\ell(\boldsymbol{x}_i,\boldsymbol{y}_i;\boldsymbol{\theta}_k+\Delta_\mathcal{B})/m$
\STATE Update perturbation: $\Delta_\mathcal{B}\gets\Delta_\mathcal{B}+\zeta\nabla\mathcal{J}_{\mathrm{AMP},\mathcal{B}}$
\IF{$\Vert\Delta_\mathcal{B}\Vert_2>\epsilon$}
\STATE Normalize perturbation: $\Delta_\mathcal{B}\gets\epsilon\Delta_\mathcal{B}/\Vert\Delta_\mathcal{B}\Vert_2$
\ENDIF
\ENDFOR
\STATE Compute gradient: \\ $\qquad\nabla\mathcal{J}_{\mathrm{AMP},\mathcal{B}}\gets\sum_{i=1}^m\nabla_{\boldsymbol{\theta}}\ell(\boldsymbol{x}_i,\boldsymbol{y}_i;\boldsymbol{\theta}_k+\Delta_\mathcal{B})/m$
\STATE Update parameter: $\boldsymbol{\theta}_{k+1}\gets\boldsymbol{\theta}_k-\eta\nabla\mathcal{J}_{\mathrm{AMP},\mathcal{B}}$
\ENDWHILE
\end{algorithmic}
\end{algorithm}

\section{Theoretical Justifications of AMP}\label{sec:just}

\subsection{AMP Finds Flatter Local Minima}

For the ease of obtaining analytic results, we will assume that
the loss surface of each local minimum in $\mathcal{L}_\mathrm{ERM}$ can be approximated as an inverted Gaussian surface. 

More precisely, suppose $\Theta=\mathbb{R}^K$. For any scalar $C$, any positive scalar $A$, any $\boldsymbol{\mu}\in\Theta$ and any $K\times K$ positive definite matrix $\boldsymbol{\kappa}$, let
\begin{equation}
\gamma(\boldsymbol{\theta};\boldsymbol{\mu},\boldsymbol{\kappa},A,C)\!:=\!C\!-\!A\exp\left(-(\boldsymbol{\theta}\!-\!\boldsymbol{\mu})^T\boldsymbol{\kappa}^{-1}(\boldsymbol{\theta}\!-\!\boldsymbol{\mu})/2\right)
\end{equation}

Note that this function is merely an inverted Gaussian surface over the space $\Theta$. 

\noindent\underline{\em Locally Gaussian Assumption of Empirical Risk} Using the above notation, we assume that each minimum of the empirical risk can be {\em locally} approximated by such an inverted Gaussian surface, namely, that 
if $\boldsymbol{\mu}\in\Theta$ gives a local minimum of $\mathcal{L}_\mathrm{ERM}$, then there exist some $\epsilon>0$, a positive definite matrix $\boldsymbol{\kappa}$ and scalars $A$ and $C$ with $A>0$, such that
\begin{equation}
\mathcal{L}_\mathrm{ERM}(\boldsymbol{\theta})=\gamma(\boldsymbol{\theta};\boldsymbol{\mu},\boldsymbol{\kappa},A,C)
\end{equation}
at any $\boldsymbol{\theta}\in{\rm\bf B}(\boldsymbol{\mu};2\epsilon)$.
We note that in general, such an assumption is only an approximation. But for small $\epsilon$, the approximation is arguably accurate. 

We will use $\gamma^\ast(\boldsymbol{\mu},\boldsymbol{\kappa},A,C)$ to denote the minimum value of function $\gamma(\boldsymbol{\theta};\boldsymbol{\mu},\boldsymbol{\kappa},A,C)$ (obtained by minimizing over $\boldsymbol{\theta}$). It is apparent that 
\begin{equation}
\gamma^\ast(\boldsymbol{\mu},\boldsymbol{\kappa},A,C)=C-A
\end{equation}

Let $\gamma_\mathrm{AMP}$ denote the AMP loss derived from $\gamma$, namely,
\begin{equation}
\gamma_\mathrm{AMP}(\boldsymbol{\theta};\boldsymbol{\mu},\boldsymbol{\kappa},A,C)\!:=\!\!\!\max_{\Delta\in{\rm\bf B}(\boldsymbol{0};\epsilon)}\!\!\gamma(\boldsymbol{\theta}+\Delta;\boldsymbol{\mu},\boldsymbol{\kappa},A,C)
\end{equation}

We will use $\gamma^\ast_\mathrm{AMP}(\boldsymbol{\mu},\boldsymbol{\kappa},A,C)$ to denote the minimum value of $\gamma_\mathrm{AMP}(\boldsymbol{\theta};\boldsymbol{\mu},\boldsymbol{\kappa},A,C)$ (minimized over $\boldsymbol{\theta}$). 

\begin{theorem}
For any given $(\boldsymbol{\mu},\boldsymbol{\kappa},A,C)$, the function $\gamma_\mathrm{AMP}(\boldsymbol{\theta};\boldsymbol{\mu},\boldsymbol{\kappa},A,C)$ is minimized when $\boldsymbol{\theta}=\boldsymbol{\mu}$ and the minimum value is
\begin{equation}
\gamma_\mathrm{AMP}^\ast(\boldsymbol{\mu},\boldsymbol{\kappa},A,C)=C-A\exp\left(-\frac{\epsilon^2}{2\sigma^2}\right)
\end{equation}
where $\sigma^2$ is the smallest eigenvalue of $\boldsymbol{\kappa}$.
\end{theorem}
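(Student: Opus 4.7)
The plan is to push the maximum inside through the monotone map $x \mapsto C - A\exp(-x/2)$ and reduce the claim to a clean min-max problem over quadratic forms. Since $A>0$ and $\exp$ is monotone, $\gamma(\boldsymbol{\theta}+\Delta;\boldsymbol{\mu},\boldsymbol{\kappa},A,C)$ is maximized in $\Delta$ exactly when the quadratic form $Q(\Delta) := (\boldsymbol{\theta}+\Delta-\boldsymbol{\mu})^T\boldsymbol{\kappa}^{-1}(\boldsymbol{\theta}+\Delta-\boldsymbol{\mu})$ is maximized. Setting $\boldsymbol{v} := \boldsymbol{\theta}-\boldsymbol{\mu}$, the problem therefore reduces to computing
\begin{equation}
M^\ast := \min_{\boldsymbol{v}\in\mathbb{R}^K}\;\max_{\|\Delta\|\le\epsilon}\;(\boldsymbol{v}+\Delta)^T\boldsymbol{\kappa}^{-1}(\boldsymbol{v}+\Delta),
\end{equation}
and showing that $M^\ast = \epsilon^2/\sigma^2$, attained at $\boldsymbol{v}=\boldsymbol{0}$.

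The upper bound on $M^\ast$ is the easy direction. At $\boldsymbol{v}=\boldsymbol{0}$ the inner max becomes $\max_{\|\Delta\|\le\epsilon}\Delta^T\boldsymbol{\kappa}^{-1}\Delta$, and by the variational characterization of eigenvalues this equals $\epsilon^2\lambda_{\max}(\boldsymbol{\kappa}^{-1}) = \epsilon^2/\sigma^2$, since the largest eigenvalue of $\boldsymbol{\kappa}^{-1}$ is the reciprocal of the smallest eigenvalue $\sigma^2$ of the positive definite matrix $\boldsymbol{\kappa}$.

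The matching lower bound is the crux. For arbitrary $\boldsymbol{v}$, I would exhibit a specific admissible $\Delta$ that already forces $Q(\Delta)\ge \epsilon^2/\sigma^2$. Let $\boldsymbol{u}$ be a unit eigenvector of $\boldsymbol{\kappa}^{-1}$ with eigenvalue $1/\sigma^2$, and choose the sign $s\in\{+1,-1\}$ so that $s\,\boldsymbol{u}^T\boldsymbol{v}\ge 0$. Taking $\Delta = s\epsilon\boldsymbol{u}$, which clearly satisfies $\|\Delta\|=\epsilon$, the expansion
\begin{equation}
Q(\Delta) = \boldsymbol{v}^T\boldsymbol{\kappa}^{-1}\boldsymbol{v} + \tfrac{2s\epsilon}{\sigma^2}\boldsymbol{u}^T\boldsymbol{v} + \tfrac{\epsilon^2}{\sigma^2}
\end{equation}
has a nonnegative middle term by construction and a nonnegative first term by positive definiteness of $\boldsymbol{\kappa}^{-1}$, so $Q(\Delta)\ge \epsilon^2/\sigma^2$ for every $\boldsymbol{v}$. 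Consequently the inner max is $\ge \epsilon^2/\sigma^2$ for all $\boldsymbol{v}$, which combined with the equality at $\boldsymbol{v}=\boldsymbol{0}$ gives $M^\ast = \epsilon^2/\sigma^2$ and pins the minimizer at $\boldsymbol{\theta}=\boldsymbol{\mu}$. Substituting into $C - A\exp(-M^\ast/2)$ yields the claimed value $C - A\exp(-\epsilon^2/(2\sigma^2))$.

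The only mildly delicate step is the sign-choice trick that converts the cross term into a nonnegative contribution; without it, one might worry that a nonzero $\boldsymbol{v}$ could reduce the maximum (the cross term is linear in $\Delta$ and can have either sign). The trick exploits the symmetry of the constraint set $\{\Delta:\|\Delta\|\le\epsilon\}$ under negation, which is why this argument works for the $\mathrm{L}_2$ ball used in the paper.
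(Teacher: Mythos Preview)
Your proof is correct and is in fact more rigorous than the paper's own argument. The paper asserts, essentially by appeal to ``the properties of a Gaussian surface,'' that the outer minimum occurs at $\boldsymbol{\theta}=\boldsymbol{\mu}$ and that the inner maximum is attained at $\Delta=\epsilon\widehat{\boldsymbol{q}}$ (the eigendirection of smallest $\boldsymbol{\kappa}$-eigenvalue), then simply computes the resulting value via the eigendecomposition $\boldsymbol{\kappa}=\boldsymbol{Q}\boldsymbol{\Lambda}\boldsymbol{Q}^T$. By contrast, your argument actually \emph{proves} the minimizer claim: the monotone reduction to the min--max over the quadratic form, followed by the upper bound at $\boldsymbol{v}=\boldsymbol{0}$ and the matching lower bound via the sign-choice $\Delta=s\epsilon\boldsymbol{u}$, closes the gap cleanly without relying on a symmetry heuristic. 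The paper's route is shorter and more geometric; yours is self-contained and elementary, and it makes explicit why no nonzero $\boldsymbol{v}$ can lower the inner maximum---precisely the point the paper leaves to intuition.
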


\begin{proof}
The properties of a Gaussian surface suggest that the minimum of $\gamma_\mathrm{AMP}=(\boldsymbol{\theta};\boldsymbol{\mu},\boldsymbol{\kappa},A,C)$ is given by $\boldsymbol{\theta}=\boldsymbol{\mu}$ and the inner maximum is reached at $\Delta=\epsilon\widehat{\boldsymbol{q}}$, where $\widehat{\boldsymbol{q}}$ is the normalized eigenvector corresponding to the smallest eigenvalue of $\kappa$. That is because that the direction of $\widehat{\boldsymbol{q}}$ is one of the fastest increase of $\gamma_\mathrm{AMP}$. Such a direction can also be regarded as the direction of the ``narrowest width'' of the Gaussian surface, which is illustrated in Figure~\ref{fig:gaussian}.

The symmetric positive definite matrix $\boldsymbol{\kappa}$ can be factorized as $\boldsymbol{Q}\boldsymbol{\Lambda}\boldsymbol{Q}^T$, where $\boldsymbol{Q}$ is the $K\times K$ orthogonal matrix whose $i$-th column is the normalized eigenvector $\boldsymbol{q}_i$ of the matrix $\boldsymbol{\kappa}$, and $\boldsymbol{\Lambda}$ is the diagonal matrix whose diagonal elements are the corresponding eigenvalues. Let $\sigma^2$ denote the smallest eigenvalue of $\boldsymbol{\kappa}$, and suppose that the eigenvalues are in ascending order along the diagonal of $\boldsymbol{\Lambda}$. Let $\boldsymbol{e}_1$ denote the vector $(1,0,\cdots,0)^T$ whose first element is $1$ and others are $0$. We have:
\begin{align}
\gamma^\ast_\mathrm{AMP}(\boldsymbol{\mu},\boldsymbol{\kappa},A,C)
&=C-A\exp\left(-\frac{\epsilon^2\widehat{\boldsymbol{q}}^T\boldsymbol{Q}^T\boldsymbol{\Lambda}^{-1}\boldsymbol{Q}\widehat{\boldsymbol{q}}}{2}\right)\nonumber\\
&=C-A\exp\left(-\frac{\epsilon^2\boldsymbol{e}_1^T\boldsymbol{\Lambda}^{-1}\boldsymbol{e}_1}{2}\right)\nonumber\\
&=C-A\exp\left(-\frac{\epsilon^2}{2\sigma^2}\right)
\end{align}

This proves the theorem.
\end{proof}

\begin{figure}[t]
\centering
\includegraphics[width=0.6\columnwidth]{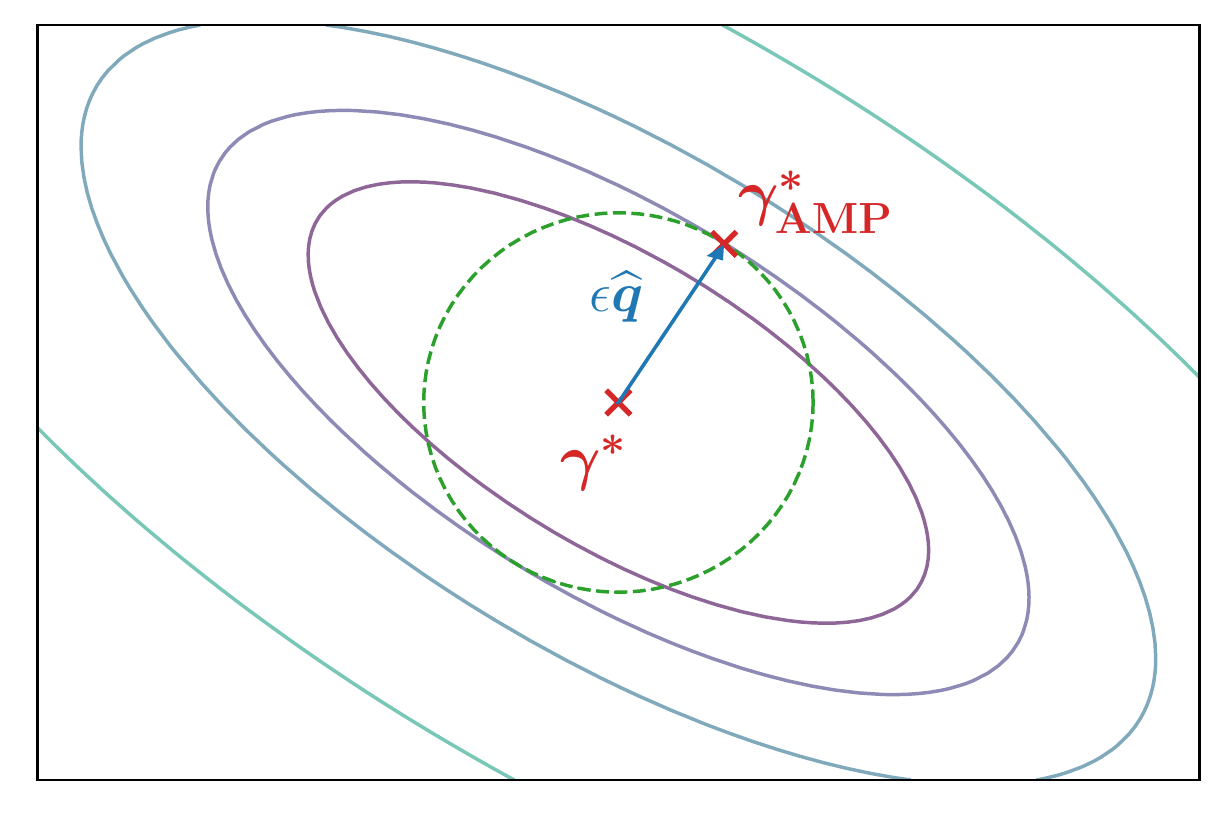}%
\caption{Locally Gaussian assumption and the minimum values of $\gamma$ and $\gamma_\mathrm{AMP}$, respectively.}
\label{fig:gaussian}
\end{figure}

From this theorem, it is clear that the minimum value achieved with $\gamma^\ast_\mathrm{AMP}$ although related to the minimum ERM loss value $\gamma^\ast$ through $A$ and $C$, it also takes into account the curvature of the surface around the local minimum. Specifically, the smallest eigenvalue $\sigma^2$ measures the ``width'' of the surface along its narrowest principal direction (noting that the cross-section of the surface is an ellipsoid).  Thus the value $\sigma^2$ can be treated as a ``worst-case'' measure of the flatness or width of the surface. The larger is $\sigma^2$, the flatter or wider is the local minimum. 

Following this theorem, we next show that minimizing the AMP loss ${\mathcal L}_\mathrm{AMP}$ favours the solutions corresponding to ``flatter'' local minima of the empirical risk ${\cal L}_\mathrm{ERM}$. 

\begin{cor}
Let $\boldsymbol{\mu}_1, \boldsymbol{\mu}_2\in\Theta$ be two local minima of $\mathcal{L}_\mathrm{ERM}$. Assume the locally Gaussian assumption hold such that the surface of the two local minima follow respectively $\gamma_1(\boldsymbol{\theta};\boldsymbol{\mu}_1,\boldsymbol{\kappa}_1,A_1,C_1)$ and $\gamma_2(\boldsymbol{\theta};\boldsymbol{\mu}_2,\boldsymbol{\kappa}_2,A_2,C_2)$. Then $\gamma_1^\ast<\gamma_2^\ast$ but $\gamma_{1,\mathrm{AMP}}^\ast>\gamma_{2,\mathrm{AMP}}^\ast$ if and only if 
\begin{align}
A_1 - A_2 & > C_1 - C_2 \nonumber\\
& > A_1 \exp\left(-\frac{\epsilon^2}{2\sigma_1^2}\right)
- A_2 \exp\left(-\frac{\epsilon^2}{2\sigma_2^2}\right) 
\end{align}
where $\sigma_1^2$ and $\sigma_2^2$ are the smallest eigenvalues of $\boldsymbol{\kappa}_1$ and $\boldsymbol{\kappa}_2$ respectively. 
\end{cor}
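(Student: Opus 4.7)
The plan is to treat the corollary as a direct algebraic consequence of the previous theorem. The theorem gives closed-form expressions for both the ERM minimum value and the AMP minimum value at a locally Gaussian minimum, namely $\gamma^\ast(\boldsymbol{\mu},\boldsymbol{\kappa},A,C) = C - A$ and $\gamma^\ast_\mathrm{AMP}(\boldsymbol{\mu},\boldsymbol{\kappa},A,C) = C - A\exp(-\epsilon^2/(2\sigma^2))$. So the two hypothesized inequalities $\gamma_1^\ast < \gamma_2^\ast$ and $\gamma_{1,\mathrm{AMP}}^\ast > \gamma_{2,\mathrm{AMP}}^\ast$ translate verbatim into two inequalities in $A_1, A_2, C_1, C_2, \sigma_1, \sigma_2$.

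First I would substitute the theorem's formulas to rewrite $\gamma_1^\ast < \gamma_2^\ast$ as $C_1 - A_1 < C_2 - A_2$, which rearranges to $A_1 - A_2 > C_1 - C_2$. This is the left inequality of the chain. Next I would rewrite $\gamma_{1,\mathrm{AMP}}^\ast > \gamma_{2,\mathrm{AMP}}^\ast$ as
\begin{equation*}
C_1 - A_1\exp\!\left(-\tfrac{\epsilon^2}{2\sigma_1^2}\right) > C_2 - A_2\exp\!\left(-\tfrac{\epsilon^2}{2\sigma_2^2}\right),
\end{equation*}
and rearrange to $C_1 - C_2 > A_1\exp(-\epsilon^2/(2\sigma_1^2)) - A_2\exp(-\epsilon^2/(2\sigma_2^2))$. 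This is the right inequality of the chain. Conjoining the two rearranged inequalities yields exactly the stated double inequality, and every step is an equivalence, so the ``if and only if'' holds automatically.

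Since the argument is a chain of reversible algebraic manipulations with no genuine analytic obstacle, there is no real hard part; the main thing to be careful about is direction-of-inequality bookkeeping when moving terms across the $<$ and $>$ signs (in particular when subtracting $A_i$ from $C_i$), and making sure that the substitution $\sigma_i^2 = $ smallest eigenvalue of $\boldsymbol{\kappa}_i$ is applied consistently for each minimum. I would close by noting the interpretation: the corollary characterizes exactly those configurations in which AMP and ERM disagree on which minimum is preferable, with AMP favoring the flatter (larger $\sigma^2$) one whenever the gap in $C - A$ is smaller than the gap in $C - A\exp(-\epsilon^2/(2\sigma^2))$.
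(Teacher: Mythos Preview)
Your proposal is correct and matches the paper's approach: the corollary is treated as an immediate algebraic consequence of Theorem~1 together with $\gamma^\ast = C - A$, and the paper does not even spell out a separate proof for it. Your two substitutions and rearrangements are exactly the intended ones, and since each step is an equivalence the biconditional follows.
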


Under the condition of this corollary, although $\boldsymbol{\mu}_1$ gives a lower empirical risk than $\boldsymbol{\mu}_2$, but when we minimize the AMP loss, $\boldsymbol{\mu}_2$ is a more preferred solution since the local curvatures of the two minima in $\mathcal{L}_\mathrm{ERM}$ are also considered. We next show, using the special case of $C_1=C_2$, that indeed minimizing AMP loss favours the local minima with a flatter surface. 

\begin{cor}\label{cor2}
Suppose that $C_1=C_2$. Let $A_2=\beta A_1$ for some $\beta<1$. Note that in this setting, $\gamma^\ast_1<\gamma^\ast_2$. Suppose that $\sigma_2^2=r\sigma^2_1$ for some positive $r$. Then 
\begin{equation}
\gamma^\ast_{2,\mathrm{AMP}}<\gamma^\ast_{1,\mathrm{AMP}}
\end{equation}
if and only if
\begin{equation}
\beta > \exp\left(-\frac{\epsilon^2}{2\sigma^2_1}\right) ~ {\rm and} ~
r > \frac{1}{1+\frac{2\sigma_1^2}{\epsilon^2}\log\beta}
\end{equation}
\end{cor}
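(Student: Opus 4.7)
The plan is to apply Theorem~1 separately to each local minimum and then reduce the inequality $\gamma^\ast_{2,\mathrm{AMP}}<\gamma^\ast_{1,\mathrm{AMP}}$ to an algebraic condition on $\beta$ and $r$. By Theorem~1, we immediately get
\[
\gamma^\ast_{i,\mathrm{AMP}}=C_i-A_i\exp\!\left(-\frac{\epsilon^2}{2\sigma_i^2}\right),\quad i=1,2.
\]
Using the hypothesis $C_1=C_2$, the $C$'s cancel and the inequality $\gamma^\ast_{2,\mathrm{AMP}}<\gamma^\ast_{1,\mathrm{AMP}}$ becomes $A_2\exp(-\epsilon^2/(2\sigma_2^2))>A_1\exp(-\epsilon^2/(2\sigma_1^2))$.

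Next, I would substitute $A_2=\beta A_1$ and $\sigma_2^2=r\sigma_1^2$, cancel $A_1>0$, and take logarithms. This reduces the inequality to
\[
\log\beta \;>\; \frac{\epsilon^2}{2\sigma_1^2}\!\left(\frac{1}{r}-1\right),
\]
which, after isolating $1/r$, is equivalent to
\[
\frac{1}{r} \;<\; 1+\frac{2\sigma_1^2}{\epsilon^2}\log\beta.
\]

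At this point the only subtlety is the sign of the right-hand side, so I would separate two cases. Since $r>0$ we have $1/r>0$, so the displayed inequality can hold only if $1+\frac{2\sigma_1^2}{\epsilon^2}\log\beta>0$, i.e.\ $\beta>\exp(-\epsilon^2/(2\sigma_1^2))$; this gives the first required condition (and shows it is necessary). Under that condition the right-hand side is positive and we may invert both sides, yielding the second condition $r>1/(1+\frac{2\sigma_1^2}{\epsilon^2}\log\beta)$. Conversely, if both conditions hold, running the chain of equivalences backward recovers $\gamma^\ast_{2,\mathrm{AMP}}<\gamma^\ast_{1,\mathrm{AMP}}$, giving the ``if'' direction.

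There is no real obstacle here: the entire argument is a direct substitution of the closed form from Theorem~1 followed by an elementary manipulation. The one place to be careful is the inversion step, where the case analysis on the sign of $1+\frac{2\sigma_1^2}{\epsilon^2}\log\beta$ is precisely what produces the two listed conditions as a conjunction rather than a single combined inequality.
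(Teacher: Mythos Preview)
Your proposal is correct and follows essentially the same route as the paper's own proof: apply Theorem~1 to obtain the closed forms for $\gamma^\ast_{i,\mathrm{AMP}}$, cancel $C_1=C_2$ and $A_1>0$, take logarithms, and then observe that the positivity of $1+\tfrac{2\sigma_1^2}{\epsilon^2}\log\beta$ (forced by $1/r>0$) yields the first condition, after which inversion gives the second. If anything, your write-up is slightly more careful than the paper's in explicitly noting that the chain of equivalences runs both ways, so the ``if'' direction is covered.
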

Please refer to Appendix~\ref{app:a} for the details of the proof.

Note that in this setting, the local minimum in the empirical risk corresponding to $\boldsymbol{\mu}_1$ is lower than that corresponding to $\boldsymbol{\mu}_2$. The value $\beta$ governs how close the two minimum values are; the closer to 1 is $\beta$, the closer two minimum values are. The value $r$ governs the flatness of the second local minimum relative to the first: $r>1$ indicates the second is flatter than the first, and the larger is $r$, the flatter is the second minimum. This corollary presents a sufficient and necessary condition for the second minimum to be preferred to the first when the AMP loss is minimized. Specifically, we may refer to the set of all $(\beta,r)$ pairs that satisfy the condition as the ``operational region of AMP'', since the region specifies all points on which minimizing the AMP loss will give a solution that deviates from that given by minimizing the ERM loss. The general shape of such a region is shown in Figure~\ref{fig:rel} (left). In Figure~\ref{fig:rel} (right), the regions are plotted for different values of $\epsilon^2/2\sigma_1^2$. Assuming $\sigma_1^2$ to be a fixed value, it can then be seen that as $\epsilon$ decreases, the operational region of AMP shrinks, namely that the minimization of the AMP loss has decreased opportunity to deviate from minimizing the empirical risk; in the limit when $\epsilon$ approaches 0, minimizing the AMP loss reduces to minimizing the ERM loss. On the other hand, for a large value of $\epsilon$, the operational region of AMP is large, then minimizing the AMP loss may frequently find different solutions from those obtained from minimizing the empirical risk.  In this case however, the operational region includes points $(\beta,r)$ with small $\beta$ and relatively small $r$. Such points, for example the one marked with ``$\boldsymbol{\times}$'' in Figure~\ref{fig:rel} (right), corresponds to local minima not flatter than $\mu_1$ by much but having much higher empirical risk values. When such solutions are obtained by minimizing the AMP loss, the learned model risks significant underfitting. Thus, in general there is a sweet spot of $\epsilon$ setting that gives the optimal tradeoff between the flatness and depth of the selected local minima.

\begin{figure}[t]
\centering
\begin{subfigure}{0.49\columnwidth}%
\centering%
\includegraphics[width=0.98\columnwidth]{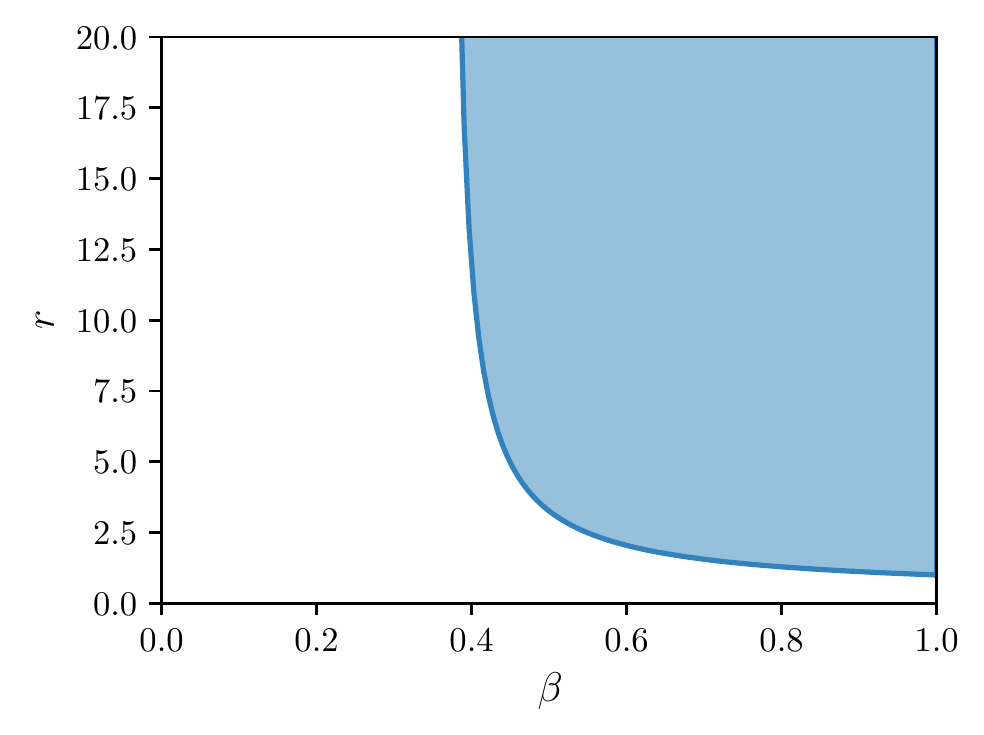}%
\end{subfigure}%
\begin{subfigure}{0.49\columnwidth}%
\centering%
\includegraphics[width=0.98\columnwidth]{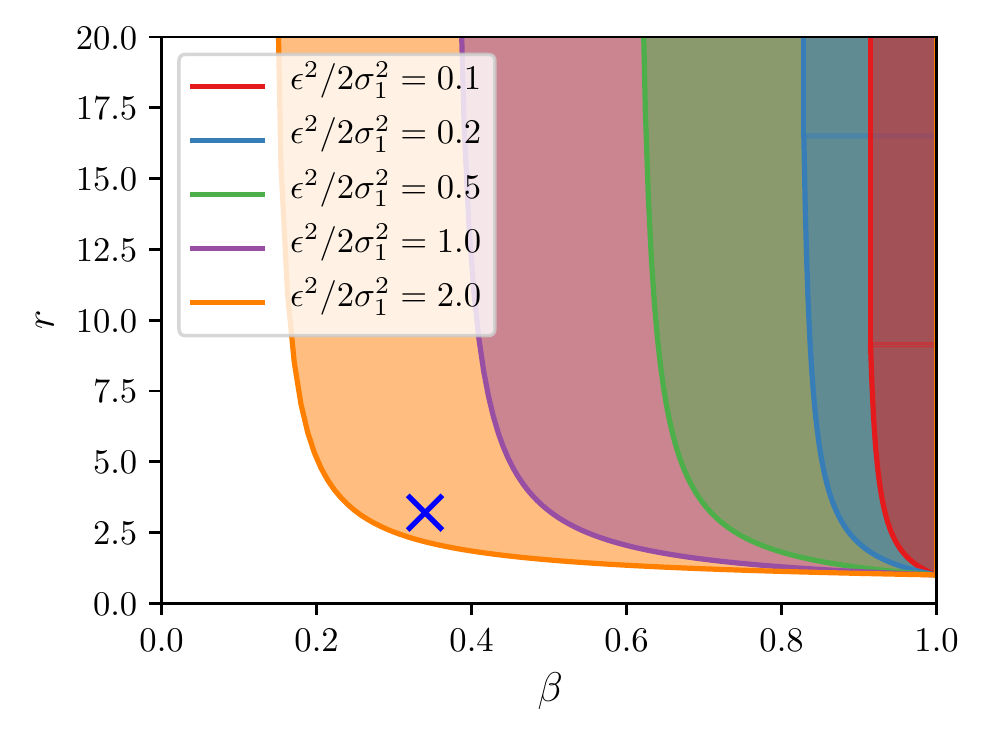}%
\end{subfigure}%
\caption{The operational region of AMP (left) and how it varies with varying values of $\epsilon^2/2\sigma_1^2$ (right).}
\label{fig:rel}
\end{figure}

\subsection{AMP Regularizes Gradient Norm}

The fact that AMP favours flatter minima can also be seen from the AMP training algorithm, independent of the above analysis. Specifically, we now show that the AMP algorithm may be viewed as minimizing the empirical risk with a certain penalty on its gradient norm.

To see this, consider that the number $N$ of inner updates per batch is 1 (this is in fact used in our experiments). Denote by $\mathcal{J}_{\mathrm{ERM},\mathcal{B}}$ the approximation of the empirical loss $\mathcal{L}_\mathrm{ERM}$ using batch $\mathcal{B}$, that is, 
\begin{equation}
\mathcal{J}_{\mathrm{ERM},\mathcal{B}}(\boldsymbol{\theta}):=\frac{1}{|\mathcal{B}|}\sum_{(\boldsymbol{x},\boldsymbol{y})\in\mathcal{B}}\ell(\boldsymbol{x},\boldsymbol{y};\boldsymbol{\theta})
\end{equation}

\begin{theorem}\label{thm2}
Let $N=1$. Then for a sufficiently small inner learning rate $\zeta$, a minimization update step in AMP training for batch $\mathcal{B}$ is equivalent to a gradient-descent step on the following loss function with learning rate $\eta$:
\begin{equation}
\widetilde{\mathcal{J}}_{\mathrm{ERM},\mathcal{B}}(\boldsymbol{\theta}):=\mathcal{J}_{\mathrm{ERM},\mathcal{B}}(\boldsymbol{\theta})+\Omega(\boldsymbol{\theta})
\end{equation}
where
\begin{equation}
\Omega(\boldsymbol{\theta}):=\begin{cases}
\zeta\Vert\nabla_{\boldsymbol{\theta}}\mathcal{J}_{\mathrm{ERM},\mathcal{B}}(\boldsymbol{\theta})\Vert_2^2,&\Vert\zeta\nabla_{\boldsymbol{\theta}}\mathcal{J}_{\mathrm{ERM},\mathcal{B}}(\boldsymbol{\theta})\Vert_2\le\epsilon\\
\epsilon\Vert\nabla_{\boldsymbol{\theta}}\mathcal{J}_{\mathrm{ERM},\mathcal{B}}(\boldsymbol{\theta})\Vert_2,&\Vert\zeta\nabla_{\boldsymbol{\theta}}\mathcal{J}_{\mathrm{ERM},\mathcal{B}}(\boldsymbol{\theta})\Vert_2>\epsilon
\end{cases}
\end{equation}
\end{theorem}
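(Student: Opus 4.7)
The plan is to trace one iteration of Algorithm~\ref{alg:amp} with $N=1$ and match it, at leading order in the perturbation $\Delta_\mathcal{B}$, to a single gradient-descent step on $\widetilde{\mathcal{J}}_{\mathrm{ERM},\mathcal{B}}$.

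\emph{Unrolling the inner loop.} With $\Delta_\mathcal{B}$ initialised to $\boldsymbol{0}$, the single inner gradient evaluation returns the batch ERM gradient $\boldsymbol{g}:=\nabla_{\boldsymbol{\theta}}\mathcal{J}_{\mathrm{ERM},\mathcal{B}}(\boldsymbol{\theta}_k)$, so the inner update sets $\Delta_\mathcal{B}\leftarrow\zeta\boldsymbol{g}$. The projection onto the $\text{L}_2$ ball then produces two regimes: if $\Vert\zeta\boldsymbol{g}\Vert_2\le\epsilon$ then $\Delta_\mathcal{B}=\zeta\boldsymbol{g}$; otherwise $\Delta_\mathcal{B}=\epsilon\boldsymbol{g}/\Vert\boldsymbol{g}\Vert_2$. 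In either case the outer update is
\begin{equation*}
\boldsymbol{\theta}_{k+1}=\boldsymbol{\theta}_k-\eta\,\nabla_{\boldsymbol{\theta}}\mathcal{J}_{\mathrm{ERM},\mathcal{B}}(\boldsymbol{\theta}_k+\Delta_\mathcal{B}).
\end{equation*}

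\emph{First-order expansion.} Assuming $\ell(\boldsymbol{x},\boldsymbol{y};\cdot)$ is twice continuously differentiable, the sufficient smallness of $\zeta$ justifies the Taylor approximation
\begin{equation*}
\nabla_{\boldsymbol{\theta}}\mathcal{J}_{\mathrm{ERM},\mathcal{B}}(\boldsymbol{\theta}_k+\Delta_\mathcal{B})=\boldsymbol{g}+\boldsymbol{H}\Delta_\mathcal{B}+o(\Vert\Delta_\mathcal{B}\Vert_2),
\end{equation*}
where $\boldsymbol{H}:=\nabla_{\boldsymbol{\theta}}^{2}\mathcal{J}_{\mathrm{ERM},\mathcal{B}}(\boldsymbol{\theta}_k)$. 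Substituting the two regime-specific forms of $\Delta_\mathcal{B}$, the AMP step becomes $\boldsymbol{\theta}_{k+1}=\boldsymbol{\theta}_k-\eta(\boldsymbol{g}+\zeta\boldsymbol{H}\boldsymbol{g})$ in the unclipped regime and $\boldsymbol{\theta}_{k+1}=\boldsymbol{\theta}_k-\eta(\boldsymbol{g}+\epsilon\boldsymbol{H}\boldsymbol{g}/\Vert\boldsymbol{g}\Vert_2)$ in the clipped regime.

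\emph{Matching to $\widetilde{\mathcal{J}}_{\mathrm{ERM},\mathcal{B}}$.} I would close the argument by differentiating $\Omega$ using the identities $\nabla_{\boldsymbol{\theta}}\Vert\boldsymbol{g}\Vert_2^{2}=2\boldsymbol{H}\boldsymbol{g}$ and $\nabla_{\boldsymbol{\theta}}\Vert\boldsymbol{g}\Vert_2=\boldsymbol{H}\boldsymbol{g}/\Vert\boldsymbol{g}\Vert_2$ (both relying on symmetry of $\boldsymbol{H}$), so that the gradient of $\mathcal{J}_{\mathrm{ERM},\mathcal{B}}+\Omega$ reproduces, in each regime, the very update direction obtained above. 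Multiplying by $-\eta$ then yields the claimed equivalence between one AMP step and one gradient-descent step on $\widetilde{\mathcal{J}}_{\mathrm{ERM},\mathcal{B}}$.

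\emph{Expected difficulty.} The main obstacle is making the ``sufficiently small $\zeta$'' clause rigorous: one must invoke a smoothness bound on $\ell$ to dominate the $o(\Vert\Delta_\mathcal{B}\Vert_2)$ remainder by the $\boldsymbol{H}\Delta_\mathcal{B}$ term we keep, and ensure that, even after clipping, the perturbation of fixed magnitude $\epsilon$ lies inside the range where the first-order expansion is accurate (this implicitly requires $\epsilon$ itself to be small relative to the local curvature scale). A secondary bookkeeping point is the coefficient in the unclipped regime: the elementary calculation gives $\nabla\Vert\boldsymbol{g}\Vert_2^{2}=2\boldsymbol{H}\boldsymbol{g}$, so the match is exact up to a conventional factor of $\tfrac{1}{2}$ in the definition of $\Omega$; the clipped regime matches on the nose.
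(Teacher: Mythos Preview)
Your approach is essentially the paper's: unroll the single inner step to get $\Delta_\mathcal{B}$ in the two regimes, Taylor-expand, and identify the outer update with a gradient step on $\mathcal{J}_{\mathrm{ERM},\mathcal{B}}+\Omega$. The only cosmetic difference is that the paper applies the first-order expansion to the \emph{loss} (writing $\mathcal{J}(\boldsymbol{\theta}+\boldsymbol{\delta})\approx\mathcal{J}(\boldsymbol{\theta})+\boldsymbol{\delta}^{T}\nabla\mathcal{J}(\boldsymbol{\theta})$ and then differentiating both sides in $\boldsymbol{\theta}$), whereas you expand the \emph{gradient} directly via the Hessian; the substance is the same.

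Your bookkeeping remark about the factor of $2$ in the unclipped regime is in fact sharper than the paper's derivation: the paper's swap of Taylor expansion and outer differentiation tacitly treats $\boldsymbol{\delta}=\zeta\nabla\mathcal{J}(\boldsymbol{\theta})$ as $\boldsymbol{\theta}$-dependent when differentiating, which is what produces $2\boldsymbol{H}\boldsymbol{g}$ and makes $\Omega=\zeta\|\nabla\mathcal{J}\|_2^2$ match as stated, while the actual algorithm computes the outer gradient with $\Delta_\mathcal{B}$ held fixed, giving only $\boldsymbol{H}\boldsymbol{g}$ and suggesting $\Omega=\tfrac{\zeta}{2}\|\nabla\mathcal{J}\|_2^2$ as you note. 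Your caveat that the clipped case needs $\epsilon$ itself small for the expansion to be valid is likewise a point the paper does not make explicit.
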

Please refer to Appendix~\ref{app:b} for the details of the proof.

We note that the regularization term $\Omega(\boldsymbol{\theta})$, in either one of the two cases, penalizes the gradient norm of $\mathcal{J}_{\mathrm{ERM},\mathcal{B}}$. Thus the AMP training algorithm effectively tries to find local minima of $\mathcal{J}_{\mathrm{ERM},\mathcal{B}}$ (and hence of $\mathcal{L}_\mathrm{ERM}$) that not only have low values, but also have small gradient norm near the minima. Note that a minimum with smaller gradient norms around it is a flatter minimum. This theorem therefore provides another justification of the AMP training algorithm, in addition to our development from constructing the AMP loss $\mathcal{L}_\mathrm{AMP}$.

\subsection{Perspectives from the Input Space}

\begin{figure}[t]
\centering
\begin{subfigure}{0.45\columnwidth}%
\centering%
\includegraphics[width=0.88\columnwidth]{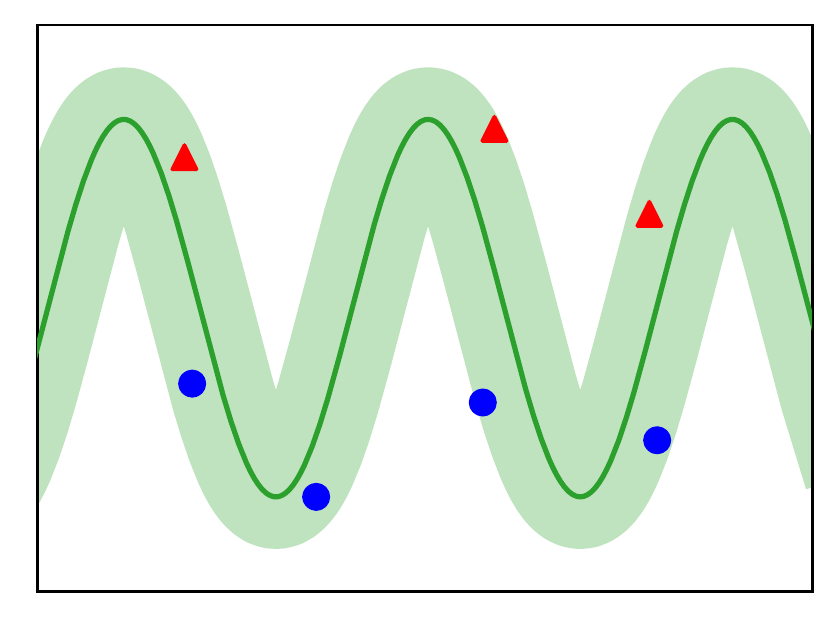}%
\end{subfigure}%
\begin{subfigure}{0.45\columnwidth}%
\centering%
\includegraphics[width=0.88\columnwidth]{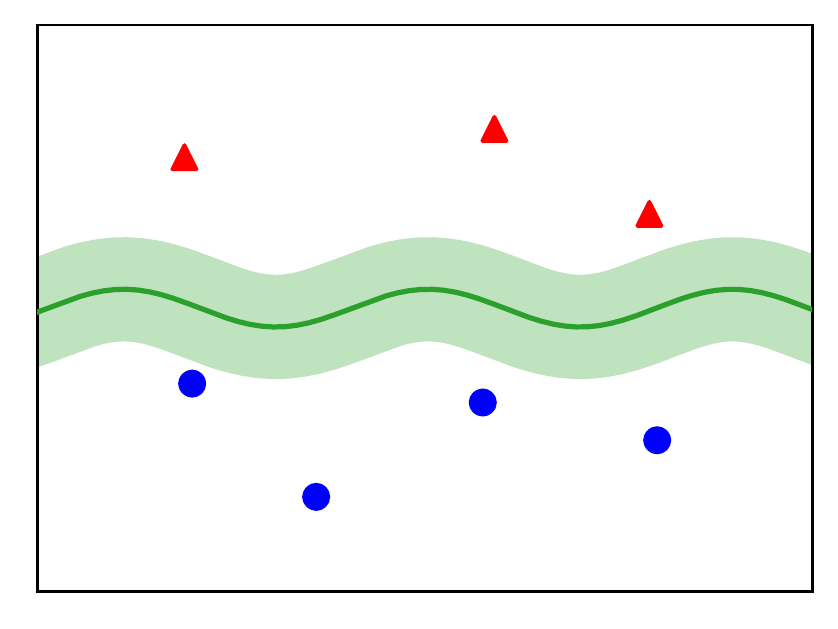}%
\end{subfigure}%
\caption{A bad classification boundary (left) usually has a poor generalization performance and is fragile to adversarial attacks. A better classification boundary (right) keeps itself far from the training examples, and tends to generalize well.}
\label{fig:boundary}
\end{figure}

\begin{figure}[t]
\centering
\begin{subfigure}{0.45\columnwidth}%
\centering%
\includegraphics[width=0.9\columnwidth]{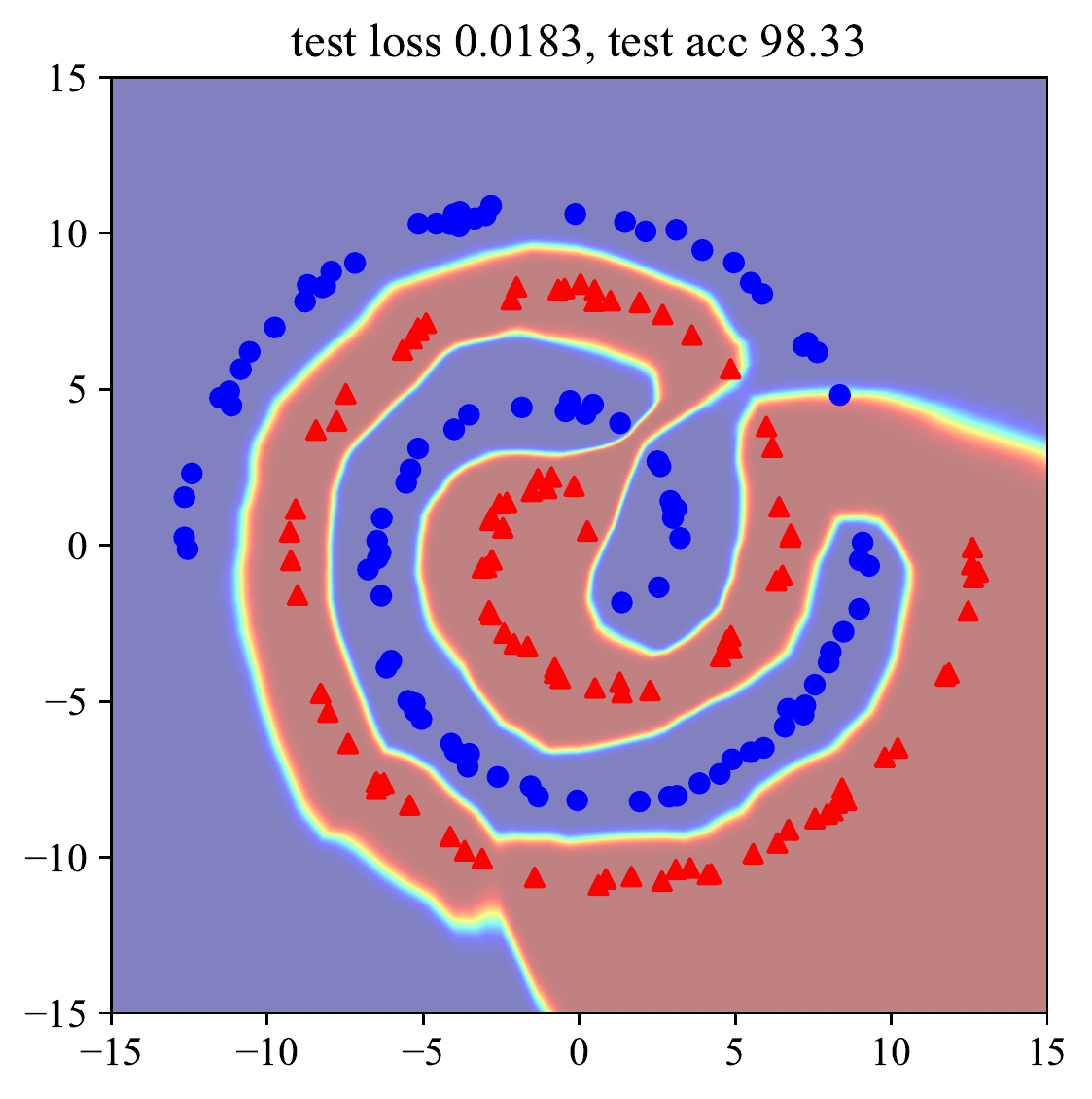}%
\end{subfigure}%
\begin{subfigure}{0.45\columnwidth}%
\centering%
\includegraphics[width=0.9\columnwidth]{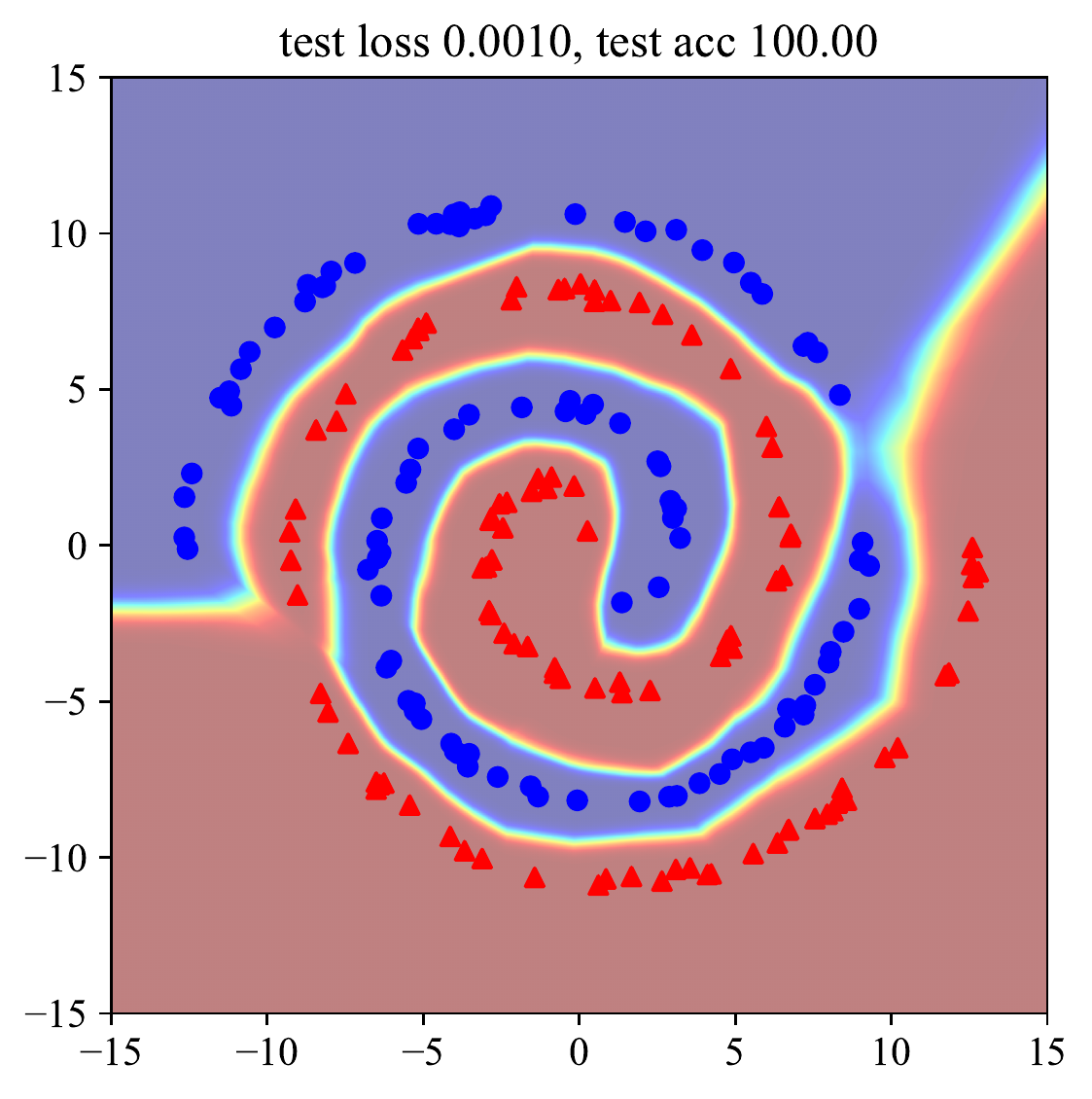}%
\end{subfigure}%
\caption{AMP yields a better classification boundary (right) than ERM does (left). The experiment is conducted on a spiral dataset \cite{sugiyama2015introduction} using a feed-forward network.}
\label{fig:datamanifold}
\end{figure}

The construction and analysis so far have focused on the parameter space. Further insights may be obtained by inspecting AMP in the input space. 

It is evident that the effect of parameter $\boldsymbol{\theta}$ on the input space is no more than defining the class boundaries and specifying how the loss $\ell(\boldsymbol{x}, \boldsymbol{y})$ changes with input $\boldsymbol{x}$. Then minimizing the AMP loss can be seen as finding a class boundary (and loss function $\ell(\boldsymbol{x}, \boldsymbol{y})$) which has the lowest average loss over the training examples, even when the worst $\epsilon$-bounded perturbation is applied. A consequence of such a minimization is arguably creating smoother class boundaries and keeping the training examples not too close to the boundaries. This is illustrated in Figure~\ref{fig:boundary} and experimentally validated in Figure~\ref{fig:datamanifold}. 

When viewed from input space, the adversarial perturbation of the model parameter in AMP shares some similarity with adversarial training \cite{goodfellow2015explaining}. There is also a significant difference between the two: adversarial training defends the model against adversarial attacks whereas AMP defends it against overfitting. We further elaborate why AMP is different from adversarial training in Appendix~\ref{app:c}.

\begin{table*}[t]
\centering
\subcaptionbox{SVHN\label{tab:svhn}}[.66\columnwidth]{%
\resizebox{.66\columnwidth}{!}{%
\begin{tabular}{lcc}
\toprule
PreActResNet18 & Test Error (\%) & Test NLL \\
\midrule
ERM & 2.95$\pm$0.063 & 0.166$\pm$0.004 \\
Dropout & 2.80$\pm$0.065 & 0.156$\pm$0.012 \\
Label Smoothing & 2.78$\pm$0.087 & 0.998$\pm$0.002 \\
Flooding & 2.84$\pm$0.047 & \underline{0.130$\pm$0.003} \\
MixUp & \underline{2.74$\pm$0.044} & 0.146$\pm$0.004 \\
Adv. Training & 2.77$\pm$0.080 & 0.151$\pm$0.018 \\
RMP & 2.93$\pm$0.066 & 0.161$\pm$0.010 \\
AMP & \textbf{2.30$\pm$0.025} & \textbf{0.096$\pm$0.002} \\
\midrule
VGG16 & Test Error (\%) & Test NLL \\
\midrule
ERM & 3.14$\pm$0.060 & 0.140$\pm$0.027 \\
Dropout & 2.96$\pm$0.049 & 0.134$\pm$0.027 \\
Label Smoothing & 3.07$\pm$0.070 & 1.004$\pm$0.002 \\
Flooding & 3.15$\pm$0.085 & 0.128$\pm$0.003 \\
MixUp & 3.09$\pm$0.057 & 0.160$\pm$0.003 \\
Adv. Training & \underline{2.94$\pm$0.091} & \underline{0.122$\pm$0.003} \\
RMP & 3.19$\pm$0.052 & 0.134$\pm$0.004 \\
AMP & \textbf{2.73$\pm$0.015} & \textbf{0.116$\pm$0.006} \\
\bottomrule
\end{tabular}
}%
}%
\subcaptionbox{CIFAR-10\label{tab:cifar10}}[.66\columnwidth]{%
\resizebox{.66\columnwidth}{!}{%
\begin{tabular}{lcc}
\toprule
PreActResNet18 & Test Error (\%) & Test NLL \\
\midrule
ERM & 5.02$\pm$0.212 & 0.239$\pm$0.009 \\
Dropout & 4.86$\pm$0.148 & 0.223$\pm$0.009 \\
Label Smoothing & 4.85$\pm$0.115 & 1.038$\pm$0.003 \\
Flooding & 4.97$\pm$0.082 & \underline{0.166$\pm$0.003} \\
MixUp & \underline{4.09$\pm$0.117} & 0.198$\pm$0.004 \\
Adv. Training & 4.99$\pm$0.085 & 0.247$\pm$0.006 \\
RMP & 4.97$\pm$0.167 & 0.239$\pm$0.008 \\
AMP & \textbf{3.97$\pm$0.091} & \textbf{0.129$\pm$0.003} \\
\midrule
VGG16 & Test Error (\%) & Test NLL \\
\midrule
ERM & 6.32$\pm$0.193 & 0.361$\pm$0.012 \\
Dropout & 6.22$\pm$0.147 & 0.314$\pm$0.009 \\
Label Smoothing & 6.29$\pm$0.158 & 1.076$\pm$0.003 \\
Flooding & 6.26$\pm$0.145 & \underline{0.234$\pm$0.005} \\
MixUp & \textbf{5.48$\pm$0.112} & 0.251$\pm$0.003 \\
Adv. Training & 6.49$\pm$0.130 & 0.380$\pm$0.010 \\
RMP & 6.30$\pm$0.109 & 0.363$\pm$0.010 \\
AMP & \underline{5.65$\pm$0.147} & \textbf{0.207$\pm$0.005} \\
\bottomrule
\end{tabular}
}%
}%
\subcaptionbox{CIFAR-100\label{tab:cifar100}}[.66\columnwidth]{%
\resizebox{.66\columnwidth}{!}{%
\begin{tabular}{lcc}
\toprule
PreActResNet18 & Test Error (\%) & Test NLL \\
\midrule
ERM & 24.31$\pm$0.303 & 1.056$\pm$0.013 \\
Dropout & 24.48$\pm$0.351 & 1.110$\pm$0.021 \\
Label Smoothing & 22.07$\pm$0.256 & 2.099$\pm$0.005 \\
Flooding & 24.50$\pm$0.234 & 0.950$\pm$0.011 \\
MixUp & \underline{21.78$\pm$0.210} & \underline{0.910$\pm$0.007} \\
Adv. Training & 25.23$\pm$0.229 & 1.110$\pm$0.012 \\
RMP & 24.28$\pm$0.138 & 1.059$\pm$0.011 \\
AMP & \textbf{21.51$\pm$0.308} & \textbf{0.774$\pm$0.016} \\
\midrule
VGG16 & Test Error (\%) & Test NLL \\
\midrule
ERM & 27.84$\pm$0.297 & 1.827$\pm$0.209 \\
Dropout & 27.72$\pm$0.337 & 1.605$\pm$0.062 \\
Label Smoothing & 27.49$\pm$0.179 & 2.310$\pm$0.005 \\
Flooding & 27.93$\pm$0.271 & 1.221$\pm$0.037 \\
MixUp & \underline{26.81$\pm$0.254} & \underline{1.136$\pm$0.013} \\
Adv. Training & 29.12$\pm$0.145 & 1.535$\pm$0.389 \\
RMP & 27.81$\pm$0.327 & 1.873$\pm$0.035 \\
AMP & \textbf{25.60$\pm$0.168} & \textbf{1.049$\pm$0.049} \\
\bottomrule
\end{tabular}
}%
}%
\caption{Top-1 classification errors and test neg-log-likelihoods on (a) SVHN, (b) CIFAR-10 and (c) CIFAR-100. We run experiments 10 times to report the mean and the standard deviation of errors and neg-log-likelihoods.}
\label{tab:cvresults}
\end{table*}

\section{Experiment}

We empirically investigate the performance of AMP in various perspectives. Firstly, we compare the generalization ability of AMP on benchmark image classification datasets with several popular regularization schemes in Section \ref{sec:compare}, including Dropout \cite{srivastava2014dropout}, label smoothing \cite{szegedy2016rethinking}, Flooding \cite{ishida2020we}, MixUp \cite{zhang2018mixup} and adversarial training \cite{goodfellow2015explaining}. We also compare our scheme with ERM \cite{vapnik1998statistical}, which does not utilize any regularization and optimizes the neural network with $\mathcal{L}_\mathrm{ERM}$. We include a baseline named random model perturbation (RMP) for comparison. Specifically, RMP applies a random perturbation (instead of the ``worst perturbation'') to the parameter within a small range to help the model to find a better minimum. Then, we study the performance of AMP on more complex deep architectures with powerful data augmentation techniques in Section~\ref{sec:augment}. In addition, we investigate the calibration effect of AMP in Section~\ref{sec:calibration} and demonstrate the influence of perturbation in Section~\ref{sec:influence}. Finally, we compare the computational cost of AMP with ERM in Section~\ref{sec:cost}. The implementation is on PyTorch framework \cite{paszke2019pytorch}, and the experiments are carried out on NVIDIA Tesla V100 GPUs.

\subsection{Comparison of the Generalization Ability}\label{sec:compare}

Three publicly available benchmark image datasets are used for performance evaluation. The SVHN dataset \cite{netzer2011reading} has 10 classes containing 73,257 digits for training and 26,032 digits for testing. Limited by the computing resource, we did not use the additional 531,131 images in SVHN training. The CIFAR datasets contain 32$\times$32-pixel colour images, where CIFAR-10 has 10 classes containing 5,000 images for training and 1,000 images for testing per class, CIFAR-100 has 100 classes containing 500 images for training and 100 images for testing per class \cite{krizhevsky2009learning}. 

Two representative deep architectures for image classification, PreActResNet18 \cite{he2016identity} and VGG16 \cite{simonyan2015very}, are taken as the underlying classifier. In the training procedure, random crops and horizontal flips are adopted as data augmentation schemes. We compute the mean and standard derivation on the training set to normalize the input images. SGD with momentum is exploited as the optimizer with a step-wise learning rate decay. Specifically, the outer learning rate is initialized as $0.1$ and divided by 10 after 100 and 150 epochs. We train each model for 200 epochs on the training set with 50 examples per mini-batch. Weight decay is set to $10^{-4}$ for all compared models. We tune hyper-parameters on each dataset using 10\% of the training set as the validation set. For Dropout, we randomly choose 10\% of the neurons in each layer after ReLU activation and deactivate them at each training iteration. The label smoothing coefficient is set to $0.2$ and the flooding level is set to $0.02$. For MixUp, we follow the original study \cite{zhang2018mixup} and linearly combine random pairs of training examples by using coefficient variables drawn from $\text{Beta}(1,1)$. For adversarial training, we set the perturbation size to 1 for each pixel and take one single step to generate adversarial examples. For RMP, we set the $\text{L}_2$ norm ball radius to $0.1$. For AMP, we fix the number of inner iteration as $N=1$, and adopt $\epsilon=0.5,\gamma=1$ for PreActResNet18 and $\epsilon=0.1,\gamma=0.2$ for VGG16. Top-1 classification error and test neg-log-likelihood are reported in Table~\ref{tab:cvresults}.

From Table~\ref{tab:cvresults}, it is evident that AMP outperforms the baseline methods in various settings, in terms of both classification error and test neg-log-likelihood, except on the CIFAR-10 dataset where VGG16 is employed. Despite the remarkable performance of AMP, MixUp also demonstrates competitive improvement in classification accuracy, and Flooding achieves small testing neg-log-likelihood in many settings. We note that, compared with AMP, the results of RMP suggest that randomly perturbing parameters cannot obtain comparable performance to the AMP. This result confirms that the adversarial perturbation provides the most useful information to the regularizer. Above results describe the efficiency of AMP in regularizing deep networks.

\subsection{Improvement over Data Augmentation}\label{sec:augment}

Data augmentation techniques can be viewed as regularization schemes since through introducing additional training examples, they impose additional constraints on the model parameter thereby improving generalization. To validate the effectiveness of AMP over other data augmentation techniques, we choose vanilla augmentation \cite{krizhevsky2012imagenet}, Cutout \cite{devries2017improved} and AutoAugment \cite{cubuk2019autoaugment} as underlying augmentation methods and compare the classification accuracy of AMP with ERM. The vanilla augmentation exploits manually designed policies including random crops and horizontal flips. We use the same Cutout configuration and AutoAugment policy as their corresponding original studies. For the hyper-parameters of AMP, we fix $N=1$ and adopt $\epsilon=0.5,\gamma=1$ for vanilla augmentation, $\epsilon=0.3,\gamma=0.5$ for Cutout, and $\epsilon=0.1,\gamma=0.1$ for AutoAugment. We employ two recent powerful deep architectures, WideResNet \cite{zagoruyko2016wide} and PyramidNet \cite{han2017pyramid}, with the compared data augmentation techniques on SVHN, CIFAR-10 and CIFAR-100. The top-1 classification errors are shown in Table~\ref{tab:cvresults2}. The results suggest AMP's regularization effect in the presence of advanced data augmentation techniques.

\begin{table}[t]
\centering
\resizebox{.99\columnwidth}{!}{%
\begin{tabular}{llcccc}
\toprule
 & & \multicolumn{2}{c}{WideResNet-28-10} & \multicolumn{2}{c}{PyramidNet-164-270}\\
 & & ERM & AMP & ERM & AMP \\
\midrule
 & Vanilla & 2.57$\pm$0.067 & \textbf{2.19$\pm$0.036} & 2.47$\pm$0.034 & \textbf{2.11$\pm$0.041} \\
SVHN & Cutout & 2.27$\pm$0.085 & \textbf{1.83$\pm$0.018} & 2.19$\pm$0.021 & \textbf{1.82$\pm$0.023} \\
 & AutoAug & 1.91$\pm$0.059 & \textbf{1.61$\pm$0.024} & 1.80$\pm$0.044 & \textbf{1.35$\pm$0.056} \\
\midrule
 & Vanilla & 3.87$\pm$0.167 & \textbf{3.00$\pm$0.059} & 3.60$\pm$0.197 & \textbf{2.75$\pm$0.040} \\
CIFAR-10 & Cutout & 3.38$\pm$0.081 & \textbf{2.67$\pm$0.043} & 2.83$\pm$0.102 & \textbf{2.27$\pm$0.034} \\
 & AutoAug & 2.78$\pm$0.134 & \textbf{2.32$\pm$0.097} & 2.49$\pm$0.128 & \textbf{1.98$\pm$0.062} \\
\midrule
 & Vanilla & 19.17$\pm$0.270 & \textbf{17.33$\pm$0.110} & 17.13$\pm$0.210 & \textbf{15.09$\pm$0.092} \\
CIFAR-100 & Cutout & 18.12$\pm$0.114 & \textbf{16.04$\pm$0.071} & 16.45$\pm$0.136 & \textbf{14.34$\pm$0.153} \\
 & AutoAug & 17.79$\pm$0.185 & \textbf{14.95$\pm$0.088} & 15.43$\pm$0.269 & \textbf{13.36$\pm$0.245} \\
\bottomrule
\end{tabular}
}
\caption{Mean and standard deviation of top-1 errors (\%) on SVHN, CIFAR-10 and CIFAR-100 over 10 trials.}
\label{tab:cvresults2}
\end{table}

\subsection{Calibration Effect}\label{sec:calibration}

A well calibrated neural network is one in which the predicted softmax scores give better indicators of the actual likelihood of a correct prediction. Deep neural networks without any regularizer are prone to overconfidence on incorrect predictions, and a well calibrated network is required especially in some application areas like object detection \cite{girshick2015fast,ren2015faster} and autonomous vehicle control \cite{chen2015deepdriving,levinson2011towards}. AMP chooses a flatter minimum of the empirical risk, which gives less confidence on possibly misclassified examples, ensuring the neural network to be better calibrated. On the contrary, the sharp minima given by ERM may contain incorrectly classified examples, which makes overconfident predictions on misclassified examples. In this section, we demonstrate that AMP can improve the calibration effect of neural networks.

We adopt the measurement of calibration described in \cite{guo2017calibration}, namely, the Expected Calibration Error (ECE) (see Appendix~\ref{app:d} for definition). To evaluate the calibration effect of different regularization schemes, we compute ECEs of the pretrained PreActResNet18 models on the SVHN, CIFAR-10 and CIFAR-100 datasets. The results are shown in Figure~\ref{fig:ece}. From the results, AMP achieves excellent calibration performance on various datasets. We can also find that Flooding outperforms other methods in calibration error on the CIFAR-10 dataset. Additionally, label smoothing degrades the calibration effect of neural networks, since it excessively biases the labels to the uniform distribution.

\begin{figure}[t]
\centering
\includegraphics[width=0.92\columnwidth]{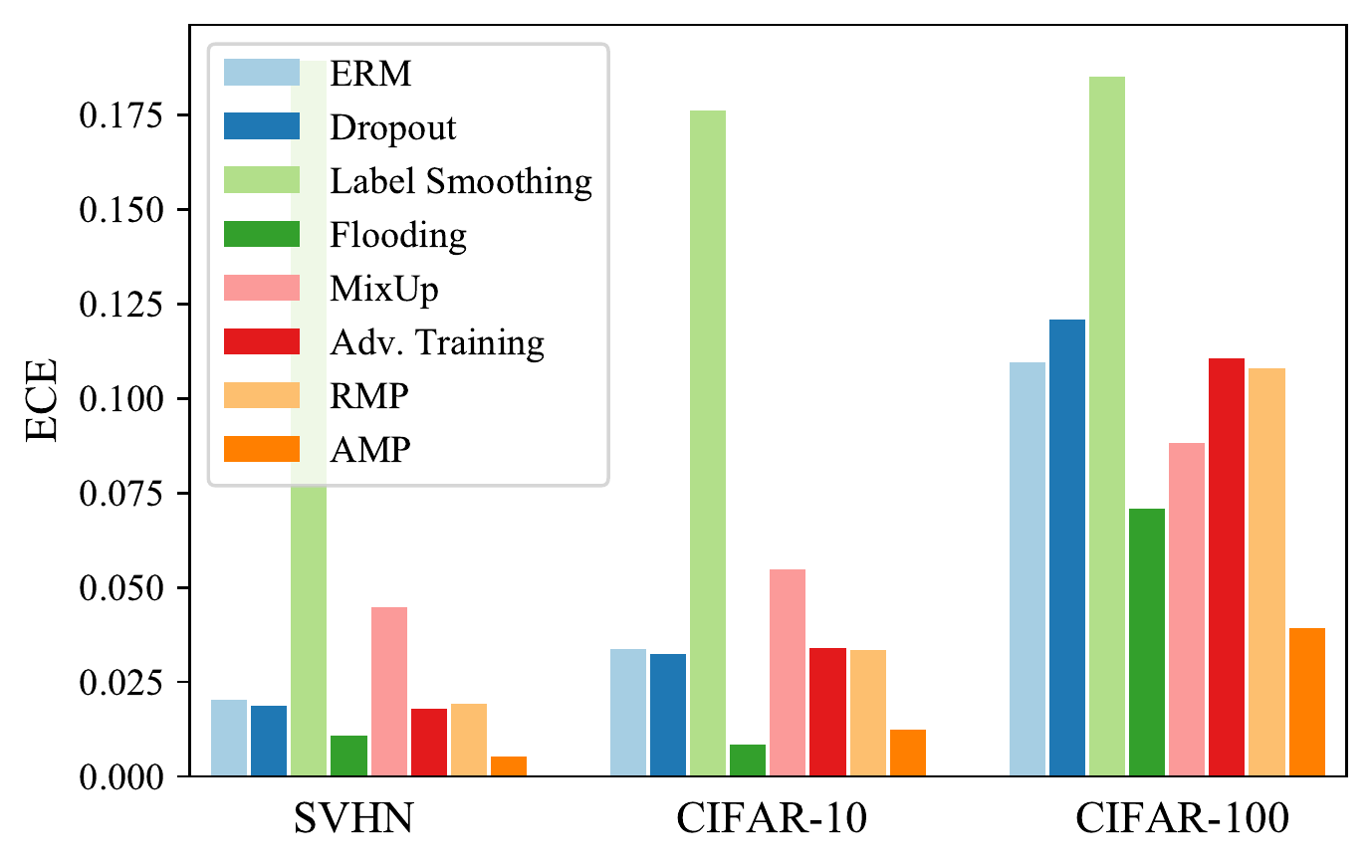}
\caption{Expected calibration errors (ECEs) of AMP and other baseline methods. Results are averaged over 10 trials.}
\label{fig:ece}
\end{figure}

\subsection{Influence of Perturbation}\label{sec:influence}

To investigate the relationship between the perturbation magnitudes and the geometry of the selected local minima, we compare the empirical risks of models trained with ERM and AMP. To clearly illustrate this, we adopt $\eta=2$ and $N=2$. Figure~\ref{fig:tune} shows the empirical risks on CIFAR-10 training and test set by varying the perturbation radius $\epsilon$. It can be seen that $\mathcal{L}_\mathrm{ERM}(\boldsymbol{\theta}^\ast_\mathrm{AMP})$ on the training set tends to be high when radius is large. This indicates that the selected minimum has a smaller depth. Moreover, when evaluating on the test set, $\mathcal{L}_\mathrm{ERM}(\boldsymbol{\theta}^\ast_\mathrm{AMP})$ arrives at minimum when the perturbation radius is around $0.06$. This suggests that an appropriate magnitude of perturbation regularizes networks efficiently, corresponds to the good properties of the selected minimum both in flatness and depth. Results on more datasets are given in Appendix~\ref{app:e}.

\begin{figure}[t]
\centering
\begin{subfigure}{0.49\columnwidth}%
\centering%
\captionsetup{width=0.88\columnwidth}%
\includegraphics[width=0.98\columnwidth]{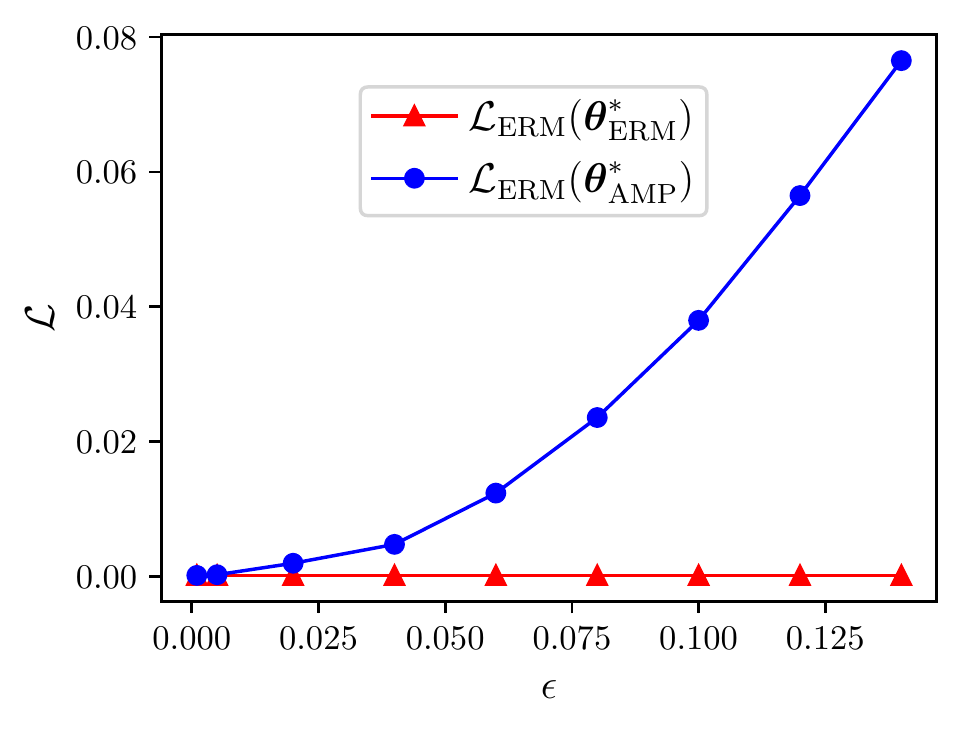}%
\caption{CIFAR-10 Training Set}%
\end{subfigure}%
\begin{subfigure}{0.49\columnwidth}%
\centering%
\captionsetup{width=0.88\columnwidth}%
\includegraphics[width=0.98\columnwidth]{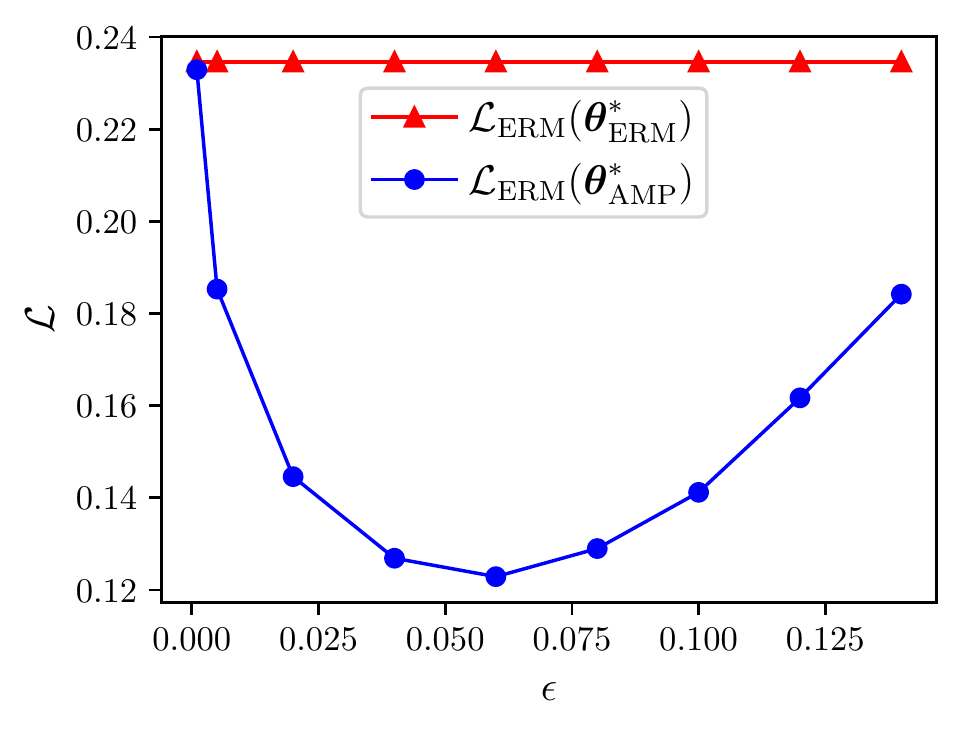}%
\caption{CIFAR-10 Test Set}%
\end{subfigure}%
\caption{The empirical risks of the model trained with ERM (red) and AMP (blue) on the CIFAR-10 training set and test set with varying perturbation radius.}
\label{fig:tune}
\end{figure}

\subsection{Computational Cost}\label{sec:cost}

AMP computes the adversarial perturbation of the model parameter in the training phase, such an operation arguably increases the computational cost. In addition to the gradient computation for updating the parameter, each stochastic gradient descent iteration requires multiple gradient computations to produce the adversarial perturbations. The computational cost will significantly increase as the inner iteration number $N$ grows. However, we find that $N=1$ is sufficient to regularize the neural networks. Under this setting, we evaluate the practical computational cost of AMP compared to the ERM method. From our observation, AMP usually takes around $1.8\times$ that of ERM training. Therefore, the extra effort for adversarial perturbation is affordable.

\section{Conclusion}

Regularization is the main tool for deep learning practitioners to combat overfitting. In this work, we propose a novel regularization scheme, Adversarial Model Perturbation (AMP), built upon the understanding that flat local minima lead to better generalization. Unlike many other data-dependent regularization schemes, which are large of a heuristic nature, AMP has strong theoretical justifications under a certain approximating assumption. These justifications also allow us to predict its behaviour with respect to varying hyper-parameters. Our theoretical analysis and the regularization performance of AMP are confirmed through extensive experiments on image classification datasets. It is also observed that AMP helps the model to better calibrate itself. The outstanding performance of AMP arguably makes it into the current state of the art among all regularization schemes. The empirical validation of AMP presented in this paper appears to further confirm the connection between flat minima and generalization.

\paragraph{Acknowledgements.}
This work is supported partly by the National Key Research and Development Program of China (2016YFB1000103), by the National Natural Science Foundation of China (No. 61772059), by the Beijing Advanced Innovation Center for Big Data and Brain Computing, by the Fundamental Research Funds for the Central Universities, by the Beijing S\&T Committee (No. Z191100008619007) and by the State Key Laboratory of Software Development Environment (No. SKLSDE-2020ZX-14). The authors specially thank Linfang Hou for helpful discussions.

{\small
\bibliographystyle{ieee_fullname}
\bibliography{main}
}

\clearpage

\section*{Appendix}

\renewcommand{\thesubsection}{\Alph{subsection}}

\subsection{Proof of Corollary~\ref{cor2}}\label{app:a}

\begin{innercor}{2}
Suppose that $C_1=C_2$. Let $A_2=\beta A_1$ for some $\beta<1$. Note that in this setting, $\gamma^\ast_1<\gamma^\ast_2$. Suppose that $\sigma_2^2=r\sigma^2_1$ for some positive $r$. Then 
\begin{equation*}
\gamma^\ast_{2,\mathrm{AMP}}<\gamma^\ast_{1,\mathrm{AMP}}
\end{equation*}
if and only if
\begin{equation*}
\beta > \exp\left(-\frac{\epsilon^2}{2\sigma^2_1}\right) ~ {\rm and} ~
r > \frac{1}{1+\frac{2\sigma_1^2}{\epsilon^2}\log\beta}
\end{equation*}
\end{innercor}

\begin{proof}
Since $C_1=C_2$, we have 
\begin{equation*}
0 > \exp\left(-\frac{\epsilon^2}{2\sigma_1^2}\right)
- \beta\exp\left(-\frac{\epsilon^2}{2r\sigma_1^2}\right)
\end{equation*}

It then follows that
\begin{equation}\label{eq:rAndBeta}
\frac{\epsilon^2}{2r\sigma_1^2}<\frac{\epsilon^2}{2\sigma_1^2}+\log\beta
\end{equation}
noting that $\log\beta<0$, we have $r>1$.  

Further manipulating (\ref{eq:rAndBeta}), we get
\begin{equation}\label{eq:rAndBeta2}
\frac{1}{r}<1+\frac{2\sigma_1^2}{\epsilon^2}\log\beta
\end{equation}

Since $r>0$, the right side of (\ref{eq:rAndBeta2}) is positive, which gives rise to 
\begin{equation*}
\beta>\exp\left(-\frac{\epsilon^2}{2\sigma^2_1}\right)
\end{equation*}

Continuing with (\ref{eq:rAndBeta2}), we arrive at
\begin{equation*}
r>\frac{1}{1+\frac{2\sigma_1^2}{\epsilon^2}\log\beta}
\end{equation*}

This proves the result.
\end{proof}

\subsection{Proof of Theorem \ref{thm2}}\label{app:b}

\begin{innerthm}{2}
Let $N=1$. Then for a sufficiently small inner learning rate $\zeta$, a minimization update step in AMP training for batch $\mathcal{B}$ is equivalent to a gradient-descent step on the following loss function with learning rate $\eta$:
\begin{equation*}
\widetilde{\mathcal{J}}_{\mathrm{ERM},\mathcal{B}}(\boldsymbol{\theta}):=\mathcal{J}_{\mathrm{ERM},\mathcal{B}}(\boldsymbol{\theta})+\Omega(\boldsymbol{\theta})
\end{equation*}
where
\begin{equation*}
\Omega(\boldsymbol{\theta}):=\begin{cases}
\zeta\Vert\nabla_{\boldsymbol{\theta}}\mathcal{J}_{\mathrm{ERM},\mathcal{B}}(\boldsymbol{\theta})\Vert_2^2,&\Vert\zeta\nabla_{\boldsymbol{\theta}}\mathcal{J}_{\mathrm{ERM},\mathcal{B}}(\boldsymbol{\theta})\Vert_2\le\epsilon\\
\epsilon\Vert\nabla_{\boldsymbol{\theta}}\mathcal{J}_{\mathrm{ERM},\mathcal{B}}(\boldsymbol{\theta})\Vert_2,&\Vert\zeta\nabla_{\boldsymbol{\theta}}\mathcal{J}_{\mathrm{ERM},\mathcal{B}}(\boldsymbol{\theta})\Vert_2>\epsilon
\end{cases}
\end{equation*}
\end{innerthm}

\begin{proof}
At each training step of AMP, we adversarially perturb the parameter with a step size of $\zeta$. If the norm of perturbation is larger than a preset value $\epsilon$, it will be projected onto the $\text{L}_2$-norm ball. Denoted by $\boldsymbol{\theta}_k$ the model parameter at the $k$-th iteration, the perturbed parameter is:
\begin{equation*}
\boldsymbol{\theta}_{k,\mathrm{adv}}\!\!=\!\!\begin{cases}
\boldsymbol{\theta}_k\!+\!\zeta\nabla_{\boldsymbol{\theta}}\mathcal{J}_{\mathrm{ERM},\mathcal{B}}(\boldsymbol{\theta}_k),\!\!&\!\!\!\Vert\zeta\nabla_{\boldsymbol{\theta}}\mathcal{J}_{\mathrm{ERM},\mathcal{B}}(\boldsymbol{\theta}_k)\Vert_2\!\le\!\epsilon\\
\boldsymbol{\theta}_k\!+\!\epsilon\frac{\nabla_{\boldsymbol{\theta}}\mathcal{J}_{\mathrm{ERM},\mathcal{B}}(\boldsymbol{\theta}_k)}{\Vert\nabla_{\boldsymbol{\theta}}\mathcal{J}_{\mathrm{ERM},\mathcal{B}}(\boldsymbol{\theta}_k)\Vert_2},\!\!&\!\!\!\Vert\zeta\nabla_{\boldsymbol{\theta}}\mathcal{J}_{\mathrm{ERM},\mathcal{B}}(\boldsymbol{\theta}_k)\Vert_2\!>\!\epsilon\\
\end{cases}
\end{equation*}

Then the parameter is updated according to the gradient computed by the perturbed parameter with a step size of $\eta$:
\begin{equation*}
\boldsymbol{\theta}_{k+1}=\boldsymbol{\theta}_k-\eta\nabla_{\boldsymbol{\theta}}\mathcal{J}_{\mathrm{ERM},\mathcal{B}}(\boldsymbol{\theta}_{k,\mathrm{adv}})
\end{equation*}

With a sufficient small $\zeta$, we can utilize the first-order Taylor expansion $f(\boldsymbol{x}+\boldsymbol{\delta})\approx f(\boldsymbol{x})+\boldsymbol{\delta}^T\nabla_{\boldsymbol{x}}f(\boldsymbol{x})$. In the former condition (i.e. $\Vert\zeta\nabla_{\boldsymbol{\theta}}\mathcal{J}_{\mathrm{ERM},\mathcal{B}}(\boldsymbol{\theta})\Vert_2\le\epsilon$), we have:
\begin{align*}
\boldsymbol{\theta}_{k+1}&=\boldsymbol{\theta}_k-\eta\nabla_{\boldsymbol{\theta}}\mathcal{J}_{\mathrm{ERM},\mathcal{B}}\left(\boldsymbol{\theta}_k+\zeta\nabla_{\boldsymbol{\theta}}\mathcal{J}_{\mathrm{ERM},\mathcal{B}}(\boldsymbol{\theta}_k)\right)\nonumber\\
&\approx\boldsymbol{\theta}_k-\eta\nabla_{\boldsymbol{\theta}}\left(\mathcal{J}_{\mathrm{ERM},\mathcal{B}}(\boldsymbol{\theta}_k)+\zeta\Vert\nabla_{\boldsymbol{\theta}}\mathcal{J}_{\mathrm{ERM},\mathcal{B}}(\boldsymbol{\theta}_k)\Vert_2^2\right)
\end{align*}

In the latter condition (i.e. $\Vert\zeta\nabla_{\boldsymbol{\theta}}\mathcal{J}_{\mathrm{ERM},\mathcal{B}}(\boldsymbol{\theta})\Vert_2>\epsilon$), we have:
\begin{align*}
\boldsymbol{\theta}_{k+1}&=\boldsymbol{\theta}_k-\eta\nabla_{\boldsymbol{\theta}}\mathcal{J}_{\mathrm{ERM},\mathcal{B}}\left(\boldsymbol{\theta}_k+\epsilon\frac{\nabla_{\boldsymbol{\theta}}\mathcal{J}_{\mathrm{ERM},\mathcal{B}}(\boldsymbol{\theta}_k)}{\Vert\nabla_{\boldsymbol{\theta}}\mathcal{J}_{\mathrm{ERM},\mathcal{B}}(\boldsymbol{\theta}_k)\Vert_2}\right)\nonumber\\
&\approx\boldsymbol{\theta}_k-\eta\nabla_{\boldsymbol{\theta}}\left(\mathcal{J}_{\mathrm{ERM},\mathcal{B}}(\boldsymbol{\theta}_k)+\epsilon\Vert\nabla_{\boldsymbol{\theta}}\mathcal{J}_{\mathrm{ERM},\mathcal{B}}(\boldsymbol{\theta}_k)\Vert_2\right)
\end{align*}

This proves the theorem.
\end{proof}

\subsection{Why AMP is not Adversarial Training}\label{app:c}

In this section, we will further discuss the difference between AMP and adversarial training (ADV).

It is sensible that perturbing weights $\boldsymbol{\theta}$ may have an effect similar to perturbing the examples $\boldsymbol{x}$ since $\boldsymbol{\theta}$ and $\boldsymbol{x}$ usually appear together via inner product $\boldsymbol{\theta}^\mathsf{T}\boldsymbol{x}$. However we note that except for some peculiar cases (such as linear network with some peculiar choices of the loss function or a set of peculiarly constructed training examples), in general the solution $\boldsymbol{\theta}^*_{\rm AMP}$ to the AMP optimization problem is different from the solution $\boldsymbol{\theta}^*_{\rm ADV}$ to the ADV counterpart. The difference between $\boldsymbol{\theta}^*_{\rm AMP}$ and $\boldsymbol{\theta}^*_{\rm ADV}$ can be attributed to two sources. 

First, let $\ell(\boldsymbol{x};\boldsymbol{\theta})$ denote the ERM loss for a single training example $\boldsymbol{x}$. For $N$ examples, the overall ERM loss $\mathcal{L}_{\rm ERM}$ is the sum (or average) of $\ell(\boldsymbol{x}_i;\boldsymbol{\theta})$ over all examples $\boldsymbol{x}_i$, $i=1,\ldots,N$. In AMP, the perturbation is to maximize the {\em overall} empirical loss $\mathcal{L}_{\rm ERM}$ and this perturbation is applied {\em globally} to weights $\boldsymbol{\theta}$. However, in ADV, the perturbation is applied {\em individually} to {\em each} training example $x_i$, with the objective of maximizing the {\em individual} ERM loss $\ell(\boldsymbol{x}_i;\boldsymbol{\theta})$.

Second, even in the case when there is only one training example $\boldsymbol{x}$ so that $\mathcal{L}_{\rm ERM}=\ell$, $\boldsymbol{\theta}^*_{\rm AMP}$ and $\boldsymbol{\theta}^*_{\rm ADV}$ may still be different. Here is an example. Let 
\begin{equation*}
g(z):=\begin{cases}
z   & {\rm if}~z\ge0\\
-2z & {\rm if}~z<0
\end{cases}
\end{equation*}

\begin{figure}[t]
\centering
\includegraphics[width=0.73\columnwidth]{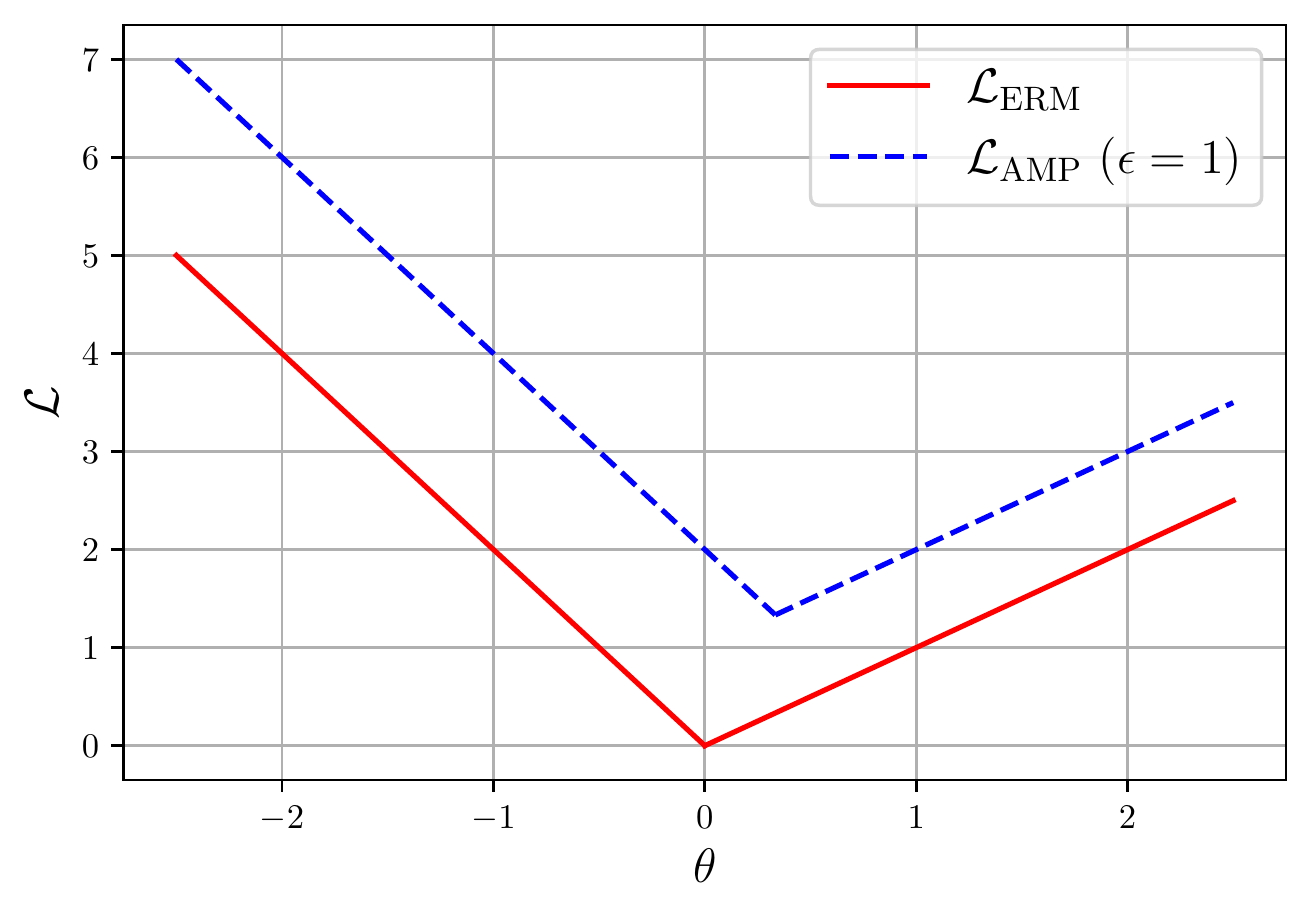}
\caption{The losses of ERM and AMP with varying $\theta$.}
\label{fig:diff}
\end{figure}

Consider that there is a single scalar example $x=1$ and  the weight $\theta$ is a scalar. Define $\ell(x; \theta)=g(\theta x)$.  It can be verified that $\theta^*_{\rm ADV}=\theta^*_{\rm ERM}=0$ regardless of the perturbation radius $\epsilon$, but $\theta^*_{\rm AMP}=\epsilon/3$ (see Figure~\ref{fig:diff}, where the losses are plotted as functions of $\theta$).

\subsection{Definition of Expected Calibration Error}\label{app:d}

We follow the definition presented in the previous work \cite{guo2017calibration}. Firstly, the predictions are grouped into $M$ interval bins of equal sizes. Let $B_m$ be the set of indices of samples whose prediction scores (the winning softmax score) fall into the interval $I_m=(\frac{m-1}{M},\frac mM]$. The accuracy and confidence of $B_m$ are defined as:
\begin{align*}
\text{acc}(B_m)&=\frac{1}{|B_m|}\sum_{i\in B_m}\mathbf{1}(\hat{y}_i=y_i)\\
\text{conf}(B_m)&=\frac{1}{|B_m|}\sum_{i\in B_m}\hat{p}_i
\end{align*}
where $\hat{y}_i$ and $y_i$ are the predicted label and true class labels for sample $i$, $\hat{p}_i$ is the confidence (the winning softmax score) of sample $i$. The {\em Expected Calibration Error} (ECE) is defined as the difference in expectation between confidence and accuracy, i.e.:
\begin{equation*}
\text{ECE}=\sum_{m=1}^M\frac{|B_m|}{n}\bigg\vert\text{acc}(B_m)-\text{conf}(B_m)\bigg\vert
\end{equation*}
where $n$ is the number of samples.

\subsection{Influence of Perturbation}\label{app:e}

We plot the empirical risks of the pretrained PreActResNet18 models on three image datasets with varying perturbation radius in Figure~\ref{fig:tune_full}. To clearly illustrate this, we adopt $\eta=2$ and $N=2$. In these experiments, the perturbation radius $\epsilon$ meets the sweet spots around $0.06$ on all the three datasets, where $\mathcal{L}_\mathrm{ERM}(\boldsymbol{\theta}_\mathrm{AMP}^\ast)$ gets the minimum value. 

\begin{figure*}[t]
\centering
\begin{subfigure}{0.66\columnwidth}%
\centering%
\captionsetup{width=0.9\columnwidth}%
\includegraphics[width=0.95\columnwidth]{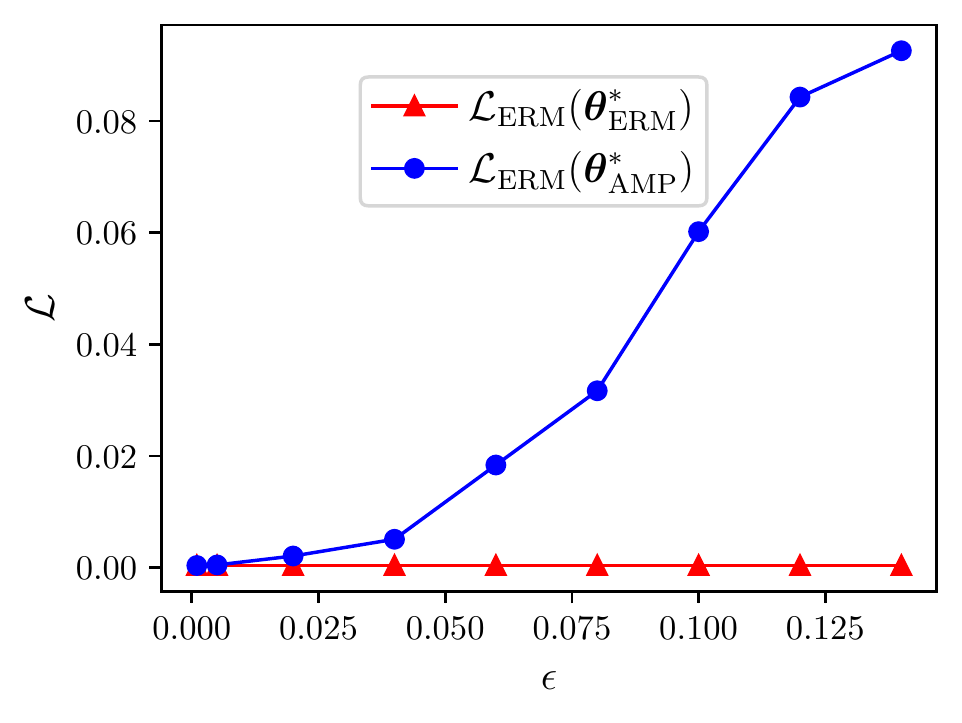}%
\caption{SVHN Training Set}%
\end{subfigure}%
\begin{subfigure}{0.66\columnwidth}%
\centering%
\captionsetup{width=0.9\columnwidth}%
\includegraphics[width=0.95\columnwidth]{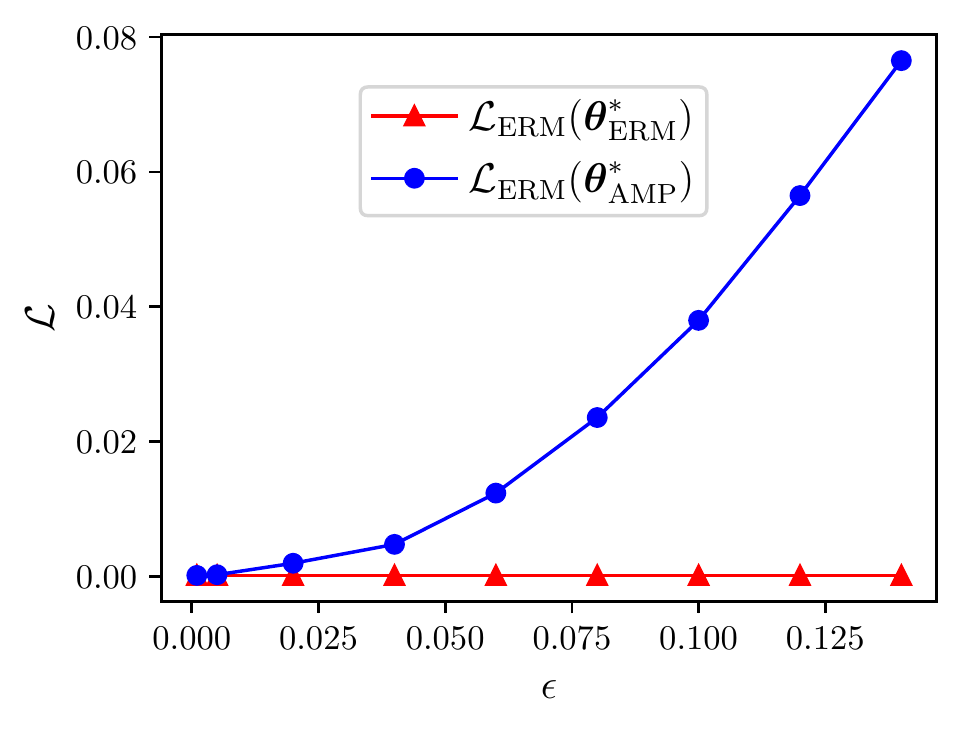}%
\caption{CIFAR-10 Training Set}%
\end{subfigure}%
\begin{subfigure}{0.66\columnwidth}%
\centering%
\captionsetup{width=0.9\columnwidth}%
\includegraphics[width=0.95\columnwidth]{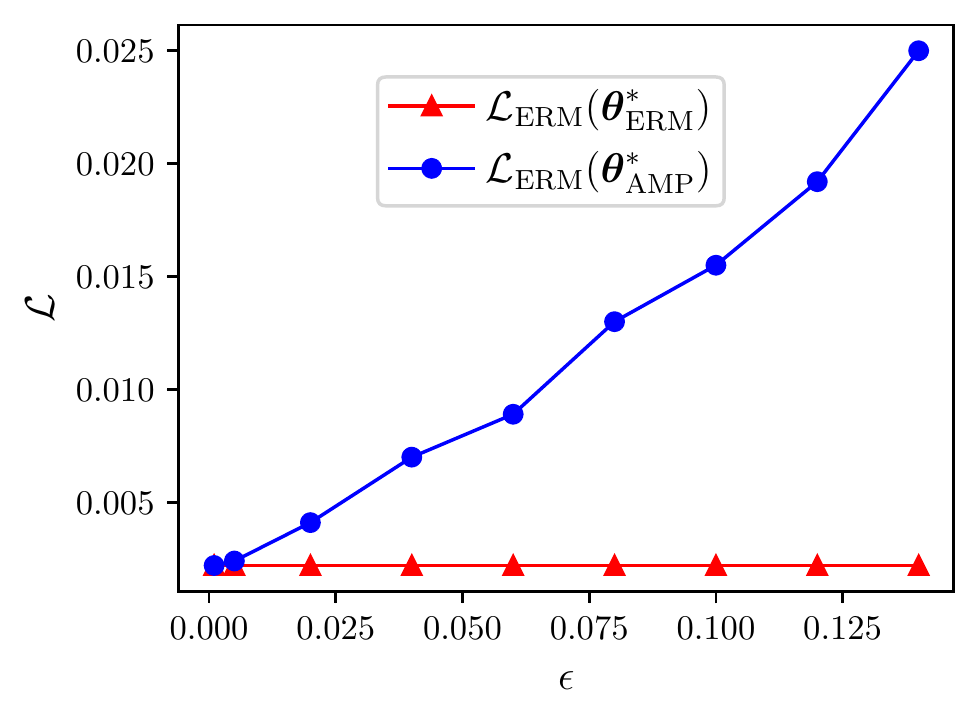}%
\caption{CIFAR-100 Training Set}%
\end{subfigure}%
\\
\begin{subfigure}{0.66\columnwidth}%
\centering%
\captionsetup{width=0.9\columnwidth}%
\includegraphics[width=0.95\columnwidth]{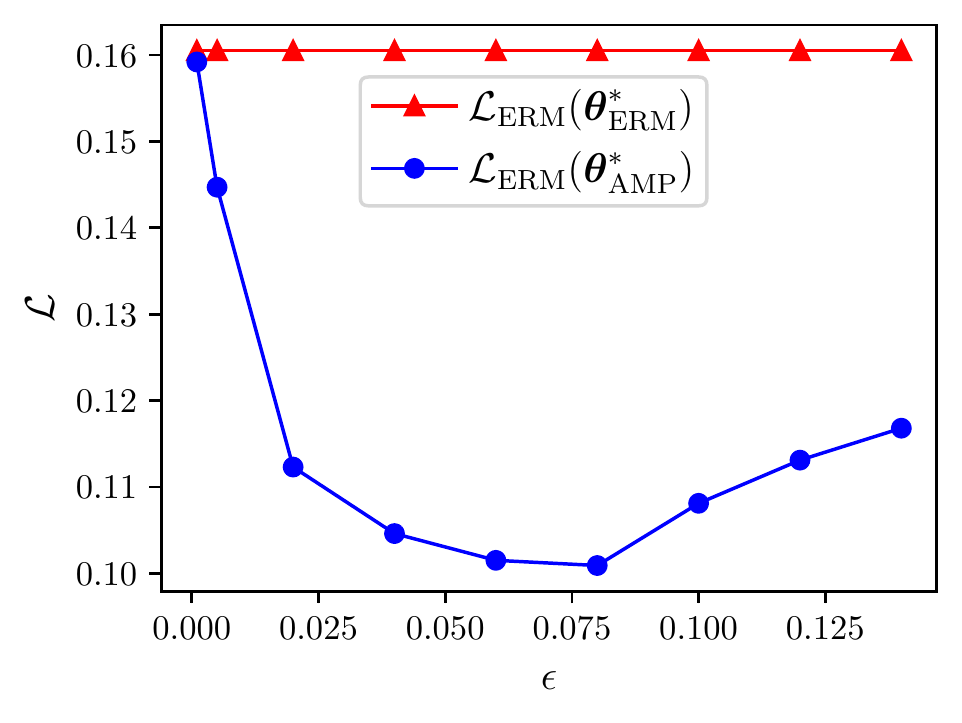}%
\caption{SVHN Test Set}%
\end{subfigure}%
\begin{subfigure}{0.66\columnwidth}%
\centering%
\captionsetup{width=0.9\columnwidth}%
\includegraphics[width=0.95\columnwidth]{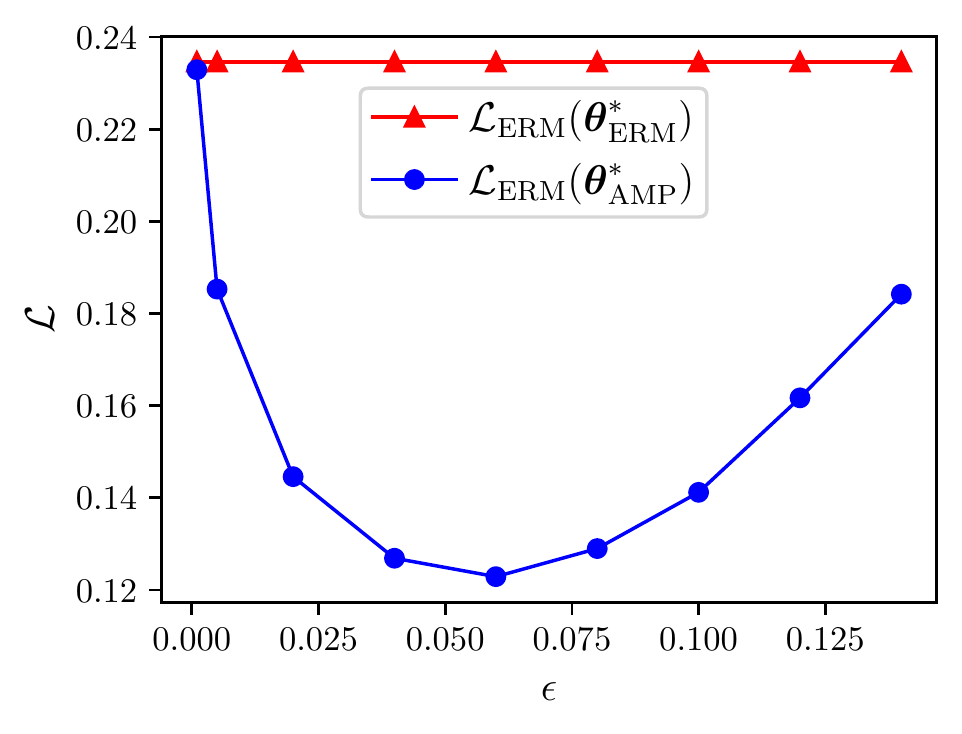}%
\caption{CIFAR-10 Test Set}%
\end{subfigure}%
\begin{subfigure}{0.66\columnwidth}%
\centering%
\captionsetup{width=0.9\columnwidth}%
\includegraphics[width=0.95\columnwidth]{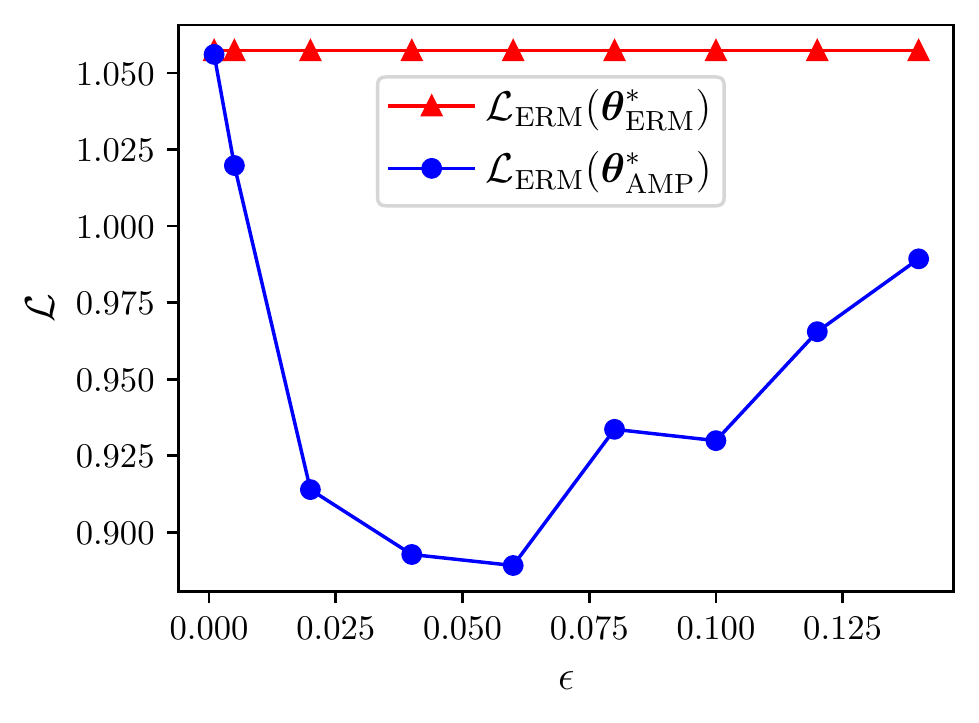}%
\caption{CIFAR-100 Test Set}%
\end{subfigure}%
\caption{The comparison of $\mathcal{L}_{\rm ERM}$ of the models trained with ERM (red) and AMP (blue) with varying perturbation radius.}
\label{fig:tune_full}
\end{figure*}

\subsection{Robustness to Adversarial Attacks}

\begin{table}[t]
\centering
\resizebox{.99\columnwidth}{!}{%
\begin{tabular}{lccc}
\toprule
FGSM & SVHN & CIFAR-10 & CIFAR-100 \\
\midrule
ERM & 23.41$\pm$0.569 & 36.06$\pm$1.908 & 68.78$\pm$0.699 \\
Dropout & 22.36$\pm$0.591 & 34.13$\pm$0.844 & 64.70$\pm$0.549 \\
Label Smoothing & 17.74$\pm$1.674 & \underline{23.24$\pm$0.427} & \underline{57.30$\pm$0.410} \\
Flooding & 17.40$\pm$0.656 & 36.42$\pm$1.303 & 68.45$\pm$0.407 \\
MixUp & 19.95$\pm$0.637 & 25.82$\pm$0.384 & 65.90$\pm$0.498 \\
Adv. Training & \textbf{14.33$\pm$0.200} & \textbf{18.58$\pm$0.304} & \textbf{48.51$\pm$0.260} \\
RMP & 23.73$\pm$0.965 & 35.40$\pm$0.572 & 68.52$\pm$0.515 \\
AMP & \underline{16.82$\pm$1.561} & 28.61$\pm$0.359 & 59.04$\pm$1.325 \\
\midrule
PGD & SVHN & CIFAR-10 & CIFAR-100 \\
\midrule
ERM & 45.17$\pm$1.085 & 58.88$\pm$2.296 & 85.46$\pm$0.770 \\
Dropout & 41.76$\pm$1.346 & 55.21$\pm$1.088 & 78.46$\pm$1.081 \\
Label Smoothing & 32.55$\pm$2.005 & \underline{34.93$\pm$0.443} & \underline{65.31$\pm$0.700} \\
Flooding & 33.50$\pm$1.707 & 60.32$\pm$1.393 & 84.66$\pm$0.285 \\
MixUp & 75.75$\pm$2.129 & 62.77$\pm$1.018 & 89.58$\pm$0.596 \\
Adv. Training & \textbf{20.20$\pm$0.409} & \textbf{21.46$\pm$0.373} & \textbf{51.72$\pm$0.327} \\
RMP & 44.74$\pm$0.960 & 58.06$\pm$0.650 & 84.80$\pm$0.488 \\
AMP & \underline{25.15$\pm$1.942} & 49.72$\pm$0.785 & 73.95$\pm$2.608 \\
\bottomrule
\end{tabular}
}%
\caption{Test errors (\%) against the while-box FGSM and PGD adversarial attacks. Each experiment has been run ten times to report the mean and standard derivation of errors.}
\label{tab:advresults}
\end{table}

The previous work \cite{zhao2020bridging} suggests that the flat minima make the adversarial attacks take more efforts for the input to leave the minima, so AMP is expected to improve the model's adversarial robustness. To validate this, we use the models trained with different regularization schemes to evaluate their adversarial robustness against the Fast Gradient Sign Method (FGSM) \cite{goodfellow2015explaining} and Projected Gradient Decent (PGD) \cite{madry2017towards} attacks. For FGSM, we set the perturbation radius to 4 per pixel. For PGD, we set the step size to 1 and perform 10 steps to generate adversarial examples, the perturbation radius is the same as FGSM. PreActResNet18 is chosen as the model architecture. We report the top-1 classification error on the adversarial examples constructed from the test set in Table~\ref{tab:advresults}. From the results, adversarial training outperforms all other schemes, since it directly trains models on the adversarial examples. AMP and label smoothing also show an effect in improving the model's robustness against both single-step FGSM attack and multi-step PGD attack.

\subsection{Loss Curve}

To investigate the mechanism of different regularization schemes in the training course, we plot the evolution curves of the training loss and the test loss in Figure~\ref{fig:loss} using PreActResNet18. We select ERM and two representative analogues (Flooding and MixUp) which achieved the second-best performance in the previous experiment to compare with AMP. From Figure~\ref{fig:loss}, ERM obtains the smallest training loss, and MixUp retains a high training loss since it trains models on augmented examples. AMP injects a small perturbation into the model parameter, and hence the training loss is slightly increased. It appears that the Flooding scheme affects training only when the training loss drops to a very low value, whereas MixUp and AMP take effects much earlier. For the test loss, AMP converges at a similar speed as other schemes, and reduces the test loss to a smaller value at the final stage.

\begin{figure*}[t]
\centering
\begin{subfigure}{0.66\columnwidth}%
\centering%
\captionsetup{width=0.9\columnwidth}%
\includegraphics[width=0.95\columnwidth]{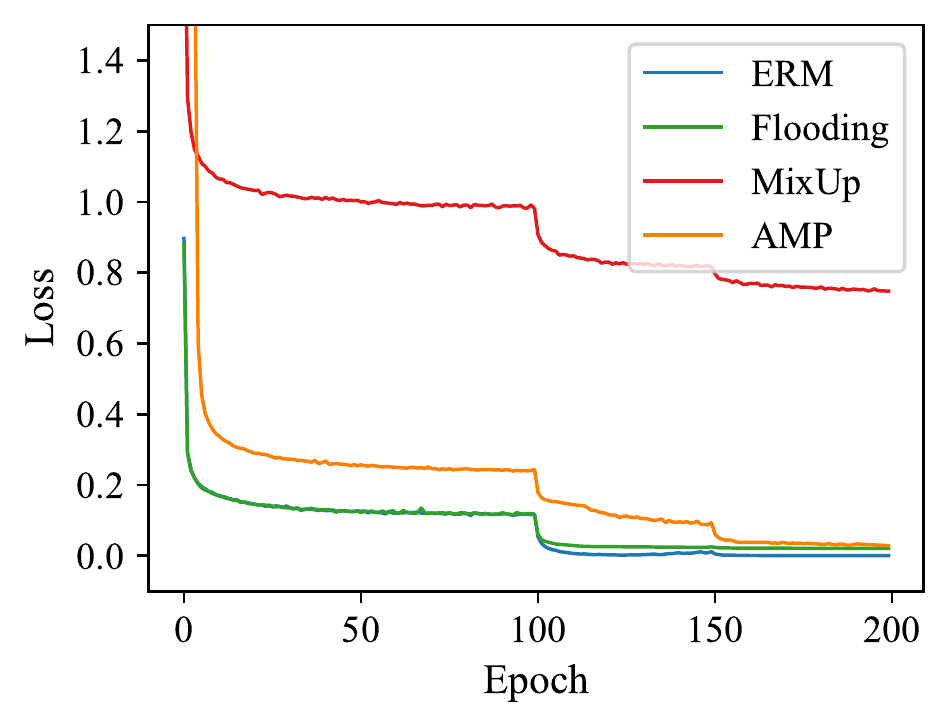}%
\caption{SVHN Training Loss}%
\end{subfigure}%
\begin{subfigure}{0.66\columnwidth}%
\centering%
\captionsetup{width=0.9\columnwidth}%
\includegraphics[width=0.95\columnwidth]{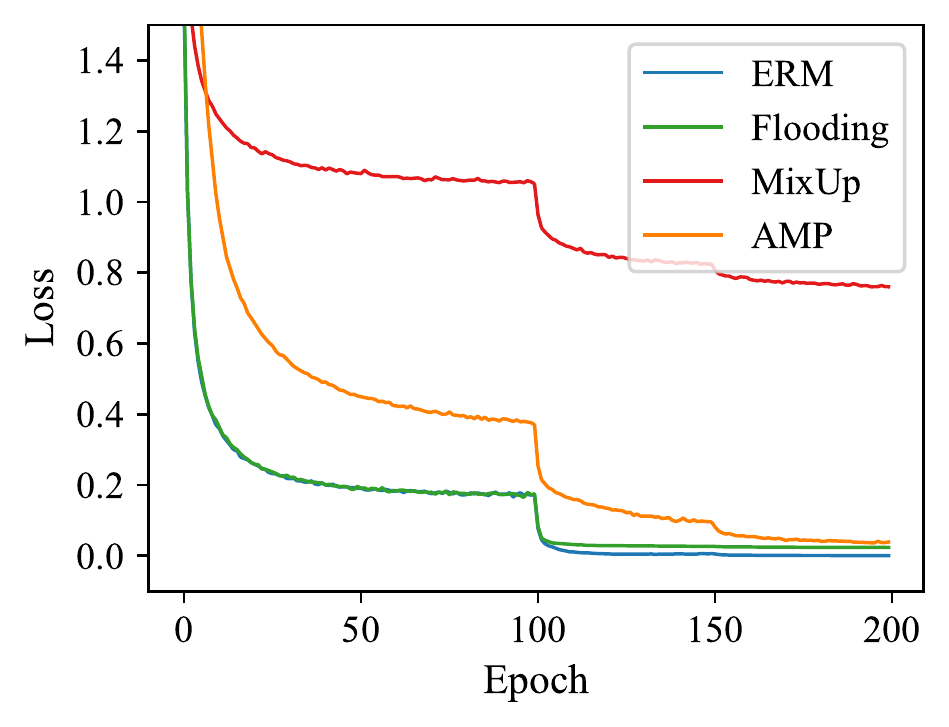}%
\caption{CIFAR-10 Training Loss}%
\end{subfigure}%
\begin{subfigure}{0.66\columnwidth}%
\centering%
\captionsetup{width=0.9\columnwidth}%
\includegraphics[width=0.95\columnwidth]{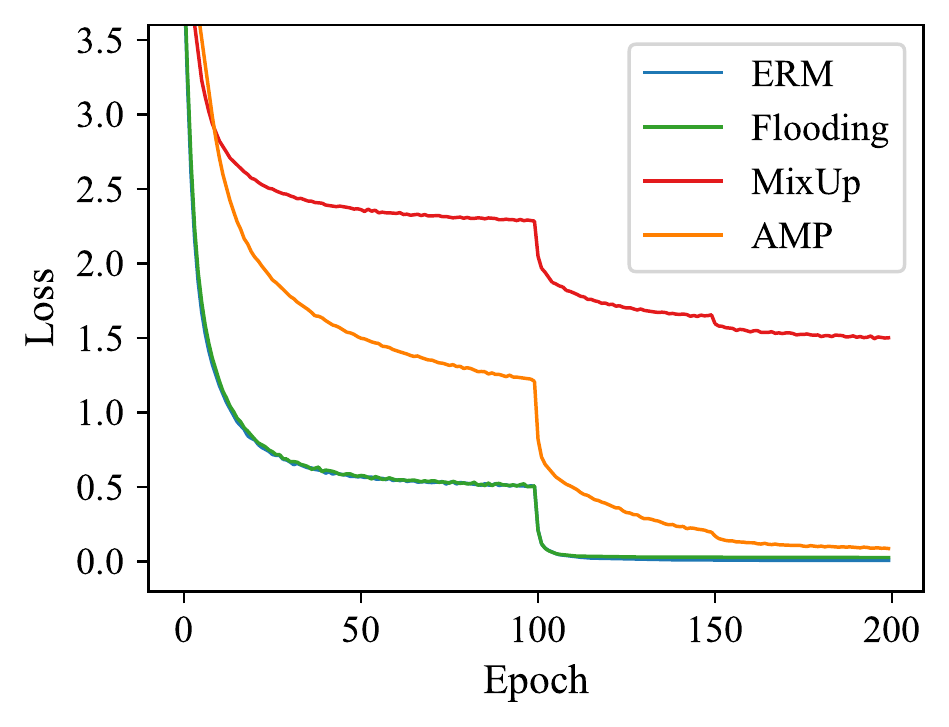}%
\caption{CIFAR-100 Training Loss}%
\end{subfigure}%
\\
\begin{subfigure}{0.66\columnwidth}%
\centering%
\captionsetup{width=0.9\columnwidth}%
\includegraphics[width=0.95\columnwidth]{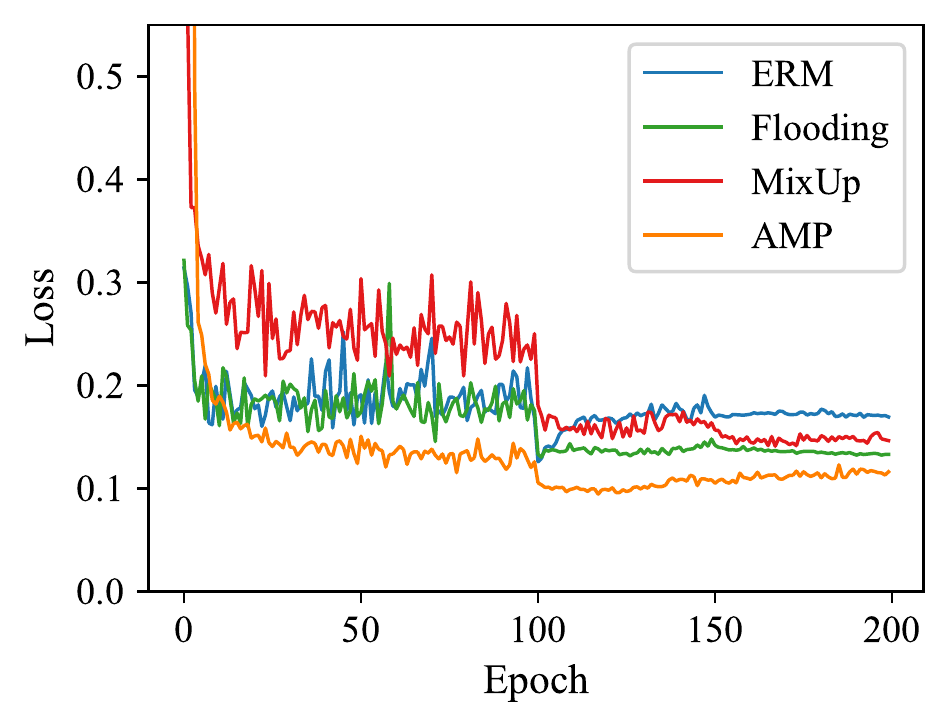}%
\caption{SVHN Test Loss}%
\end{subfigure}%
\begin{subfigure}{0.66\columnwidth}%
\centering%
\captionsetup{width=0.9\columnwidth}%
\includegraphics[width=0.95\columnwidth]{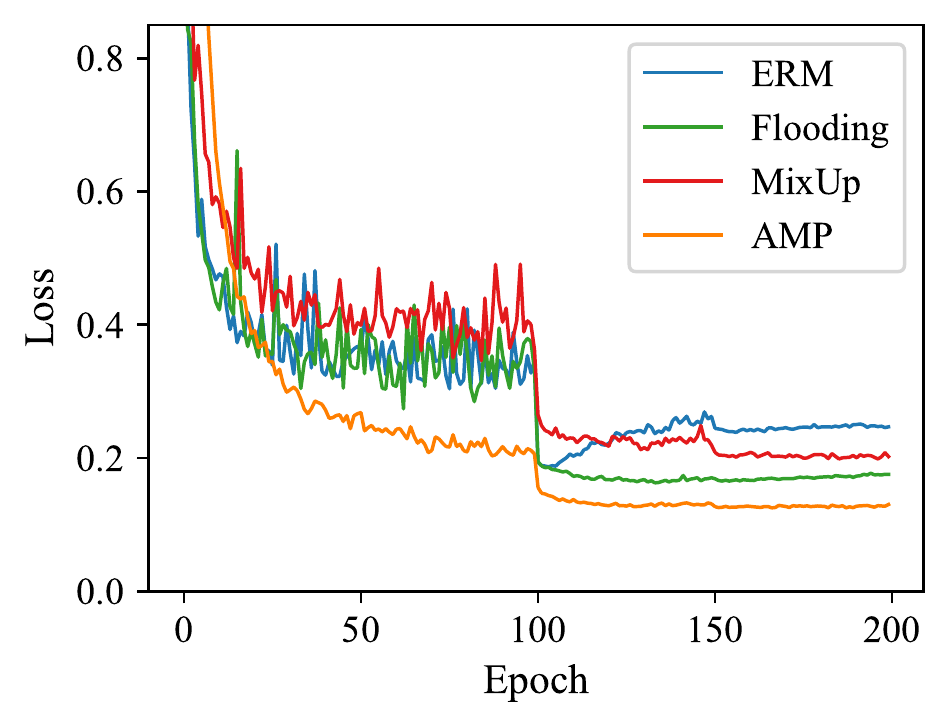}%
\caption{CIFAR-10 Test Loss}%
\end{subfigure}%
\begin{subfigure}{0.66\columnwidth}%
\centering%
\captionsetup{width=0.9\columnwidth}%
\includegraphics[width=0.95\columnwidth]{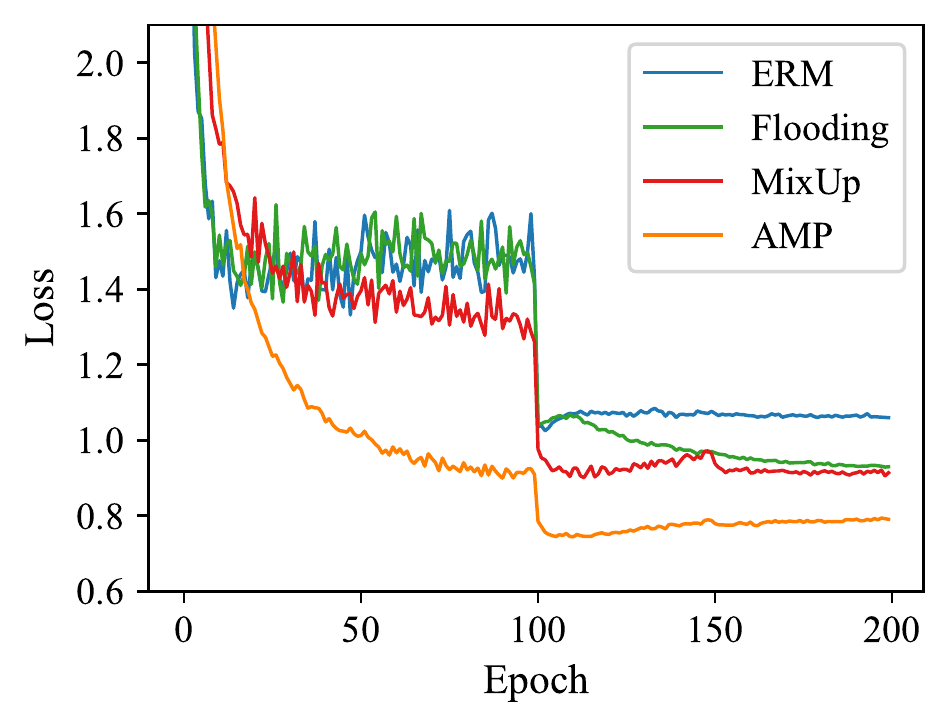}%
\caption{CIFAR-100 Test Loss}%
\end{subfigure}%
\caption{Loss curves for PreActResNet18 with different regularization schemes on three benchmark image datasets.}
\label{fig:loss}
\end{figure*}

\subsection{Flatness of Selected Minima}

We visualize the landscapes around the minima of the empirical risk selected by ERM or AMP, the 2D views are plotted in Figure~\ref{fig:flatness2d} and the 3D views are in Figure~\ref{fig:flatness3d}. Specifically, we compute the empirical risks of the PreActResNet18 models whose parameter is perturbed along two random directions $\boldsymbol{d}_x,\boldsymbol{d}_j$ with different step sizes $\delta_x,\delta_y$, where the direction vectors are normalized by the norm of filters suggested by \cite{li2018visualizing}. 
Specifically, we visualize the landscapes by computing
\begin{equation*}
\mathcal{L}_\mathrm{ERM}(\boldsymbol{\theta}^\ast+\delta_x\boldsymbol{d}_x+\delta_y\boldsymbol{d}_y)
\end{equation*}

The results suggest that AMP indeed selects flatter minima via adversarial perturbations.

\begin{figure*}[t]
\centering
\begin{subfigure}{0.66\columnwidth}%
\centering%
\captionsetup{width=0.9\columnwidth}%
\includegraphics[width=0.9\columnwidth]{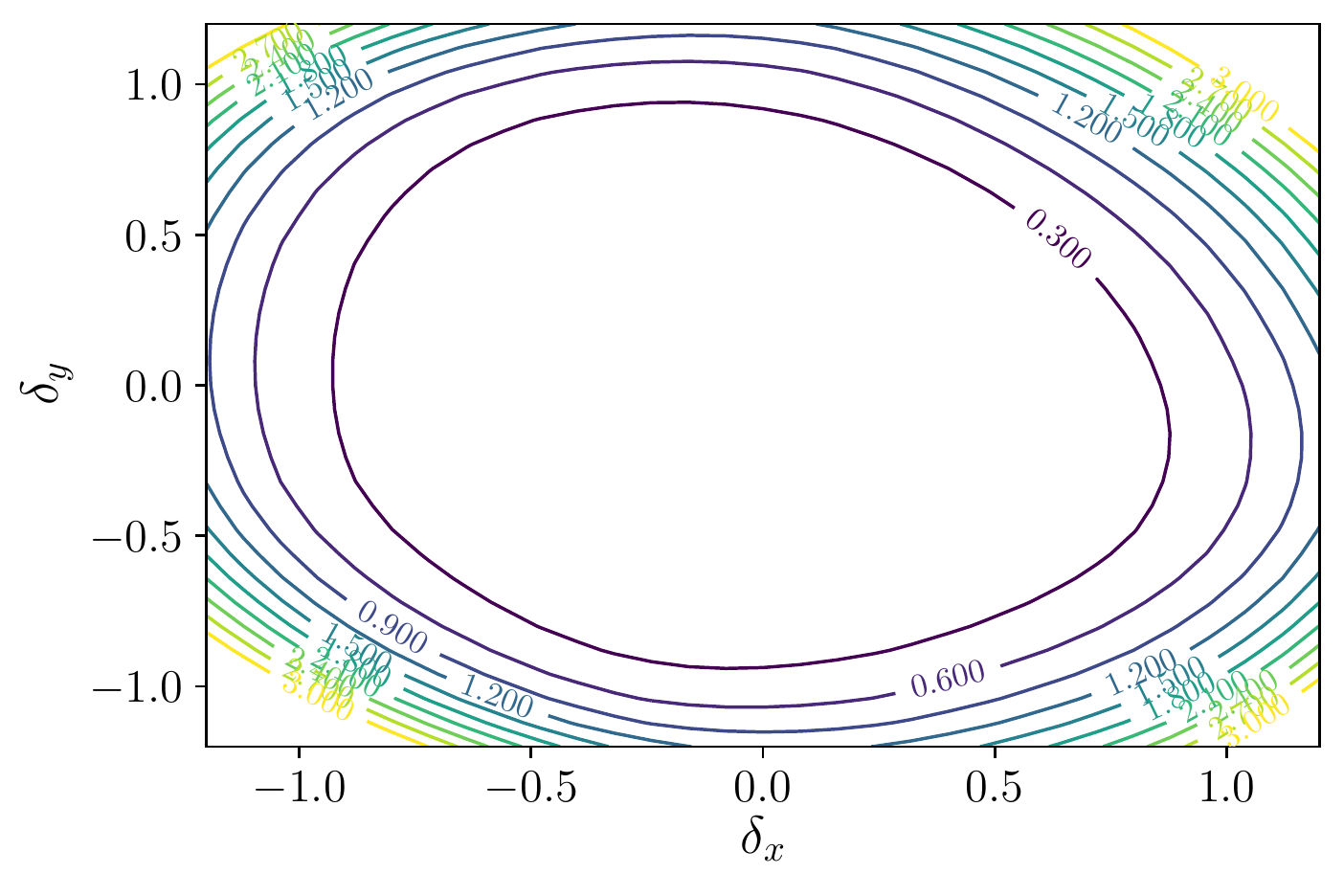}%
\caption{ERM Training Loss on SVHN}%
\end{subfigure}%
\begin{subfigure}{0.66\columnwidth}%
\centering%
\captionsetup{width=0.9\columnwidth}%
\includegraphics[width=0.9\columnwidth]{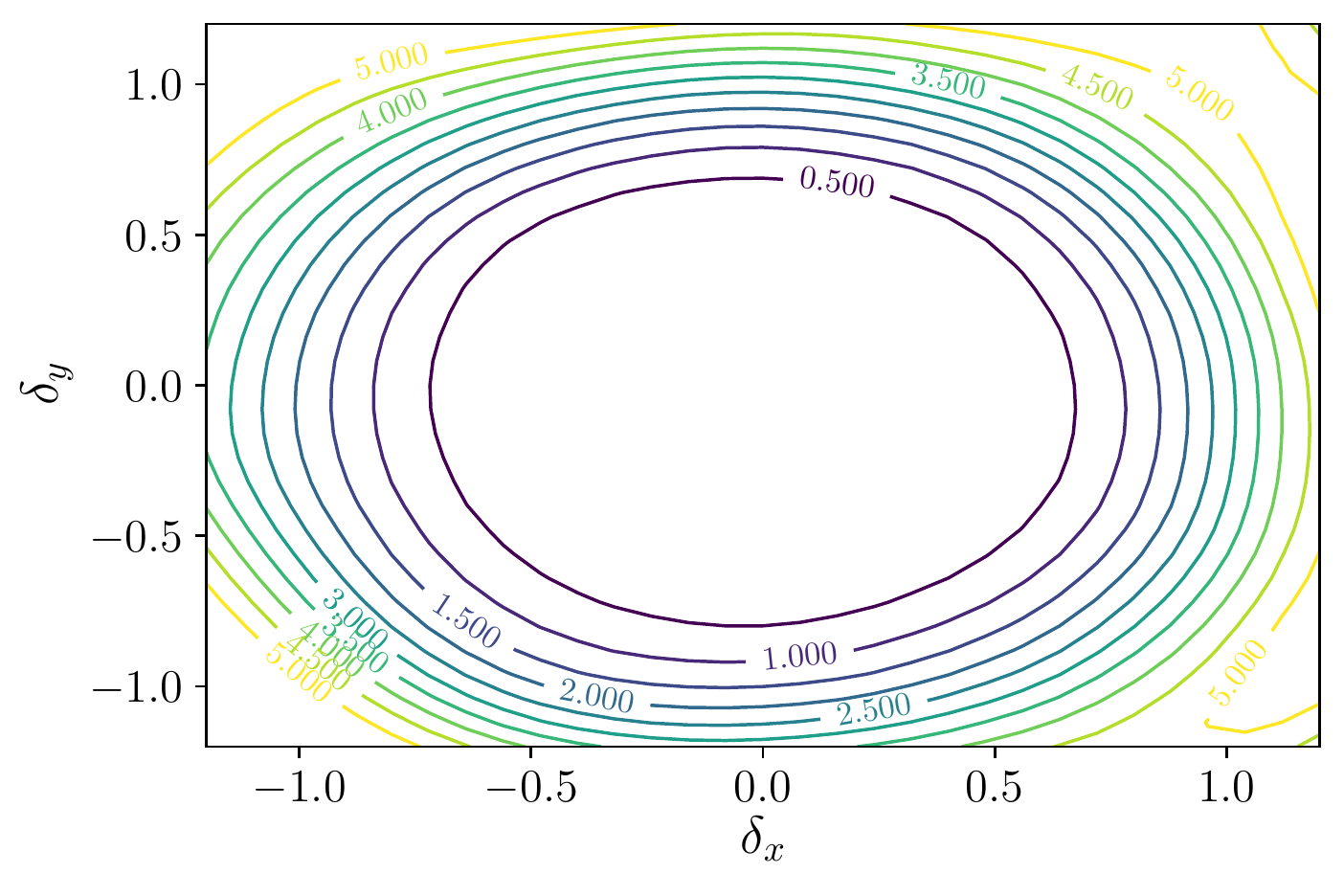}%
\caption{ERM Training Loss on CIFAR-10}%
\end{subfigure}%
\begin{subfigure}{0.66\columnwidth}%
\centering%
\captionsetup{width=0.9\columnwidth}%
\includegraphics[width=0.9\columnwidth]{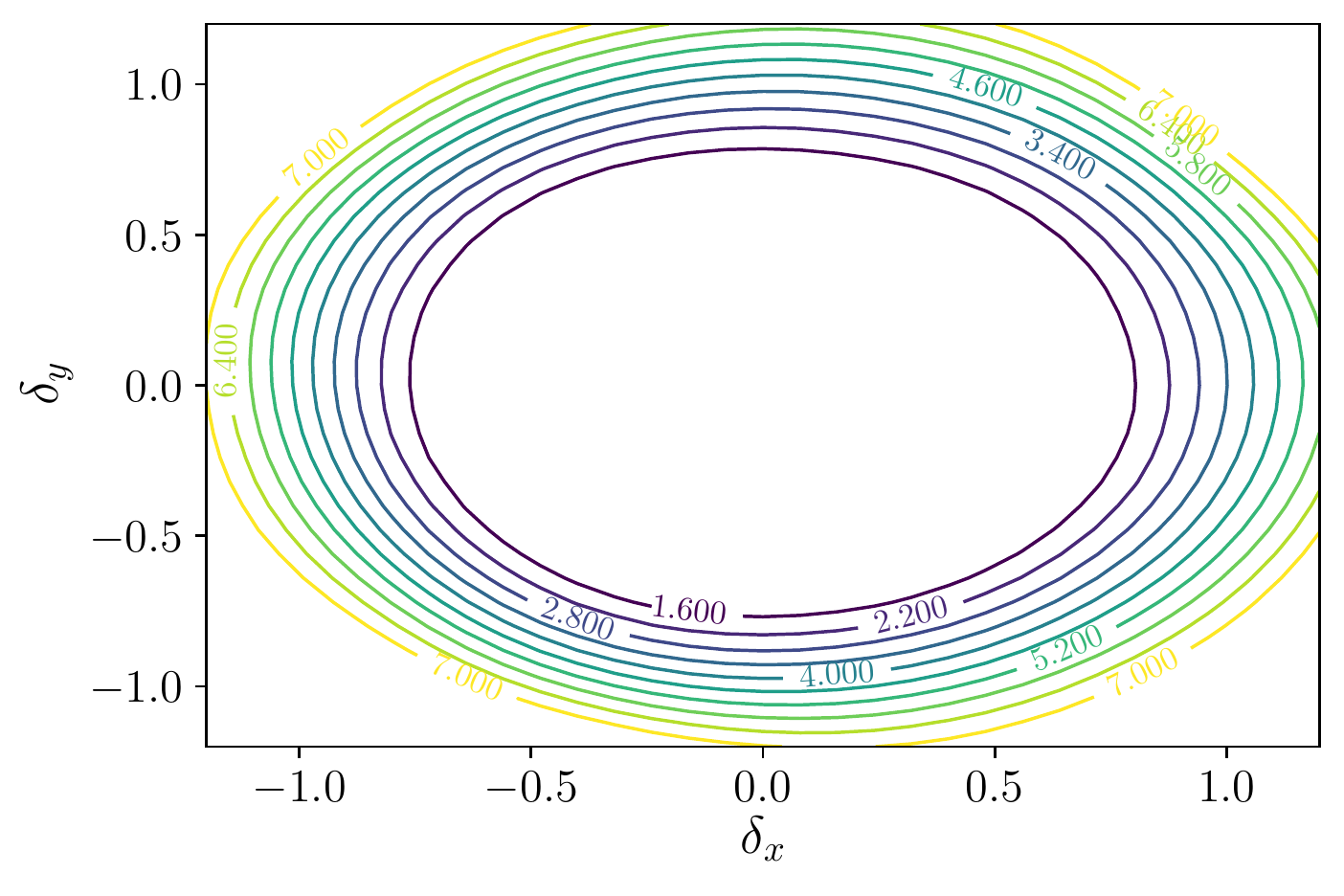}%
\caption{ERM Training Loss on CIFAR-100}%
\end{subfigure}%
\\
\begin{subfigure}{0.66\columnwidth}%
\centering%
\captionsetup{width=0.9\columnwidth}%
\includegraphics[width=0.9\columnwidth]{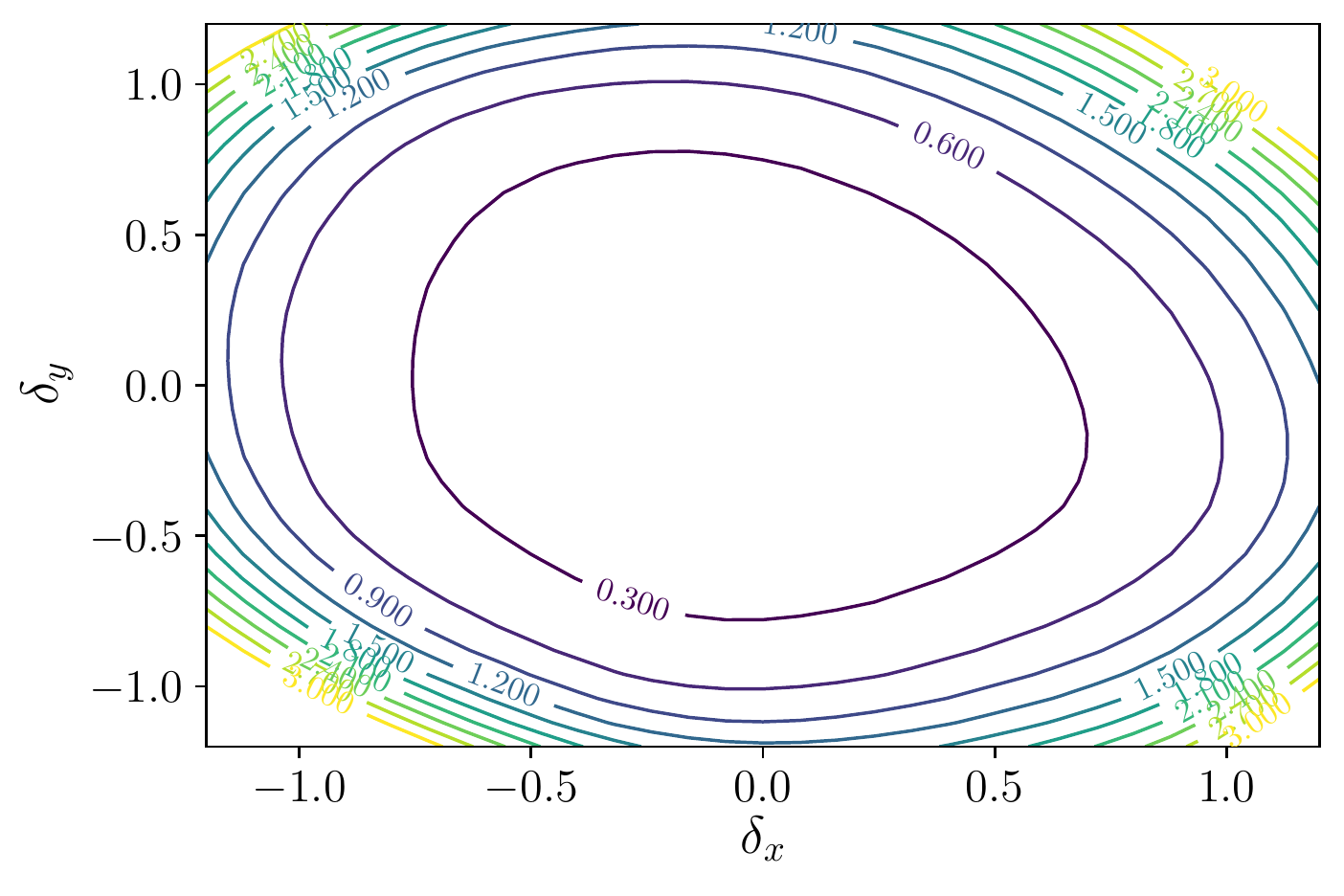}%
\caption{ERM Test Loss on SVHN}%
\end{subfigure}%
\begin{subfigure}{0.66\columnwidth}%
\centering%
\captionsetup{width=0.9\columnwidth}%
\includegraphics[width=0.9\columnwidth]{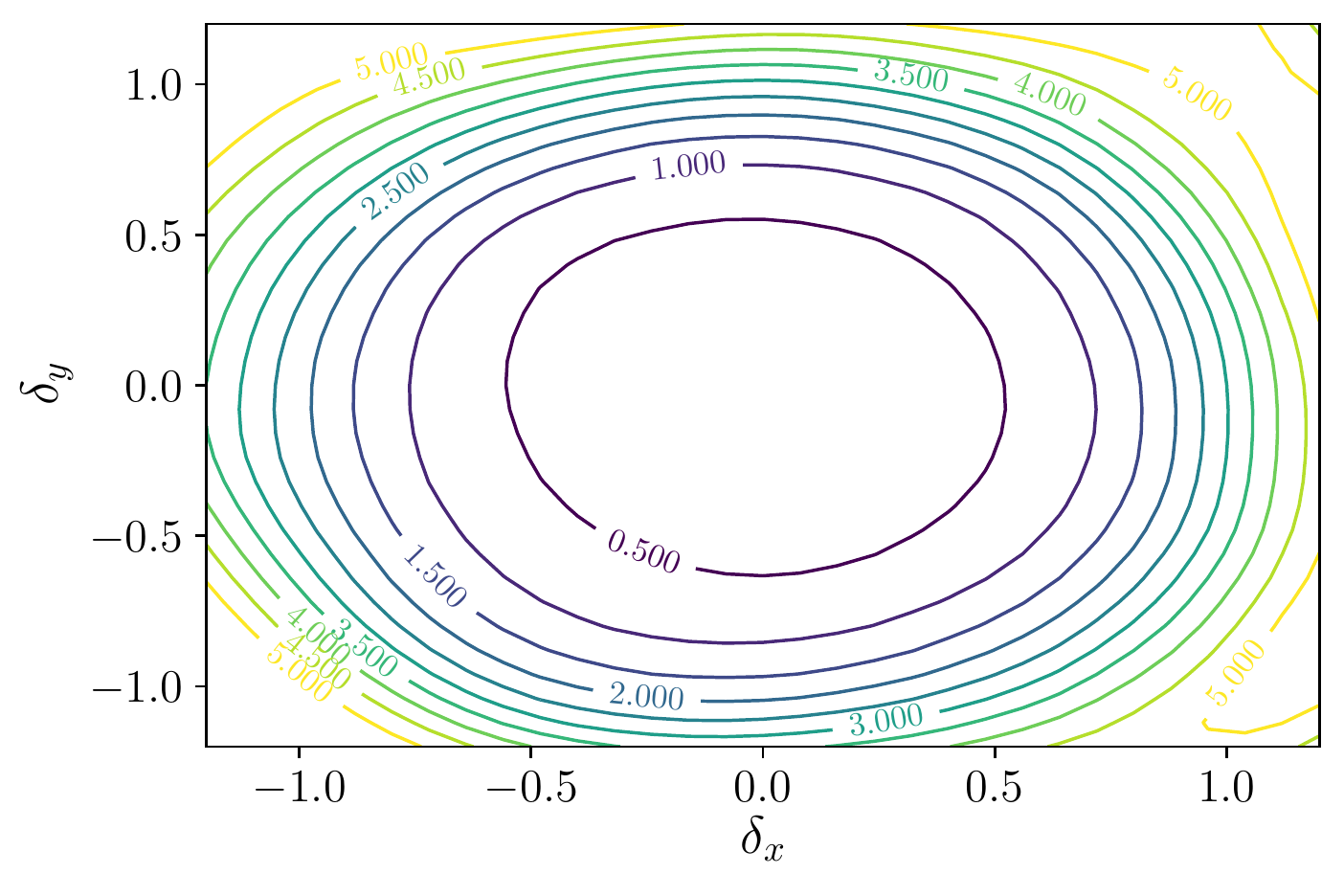}%
\caption{ERM Test Loss on CIFAR-10}%
\end{subfigure}%
\begin{subfigure}{0.66\columnwidth}%
\centering%
\captionsetup{width=0.9\columnwidth}%
\includegraphics[width=0.9\columnwidth]{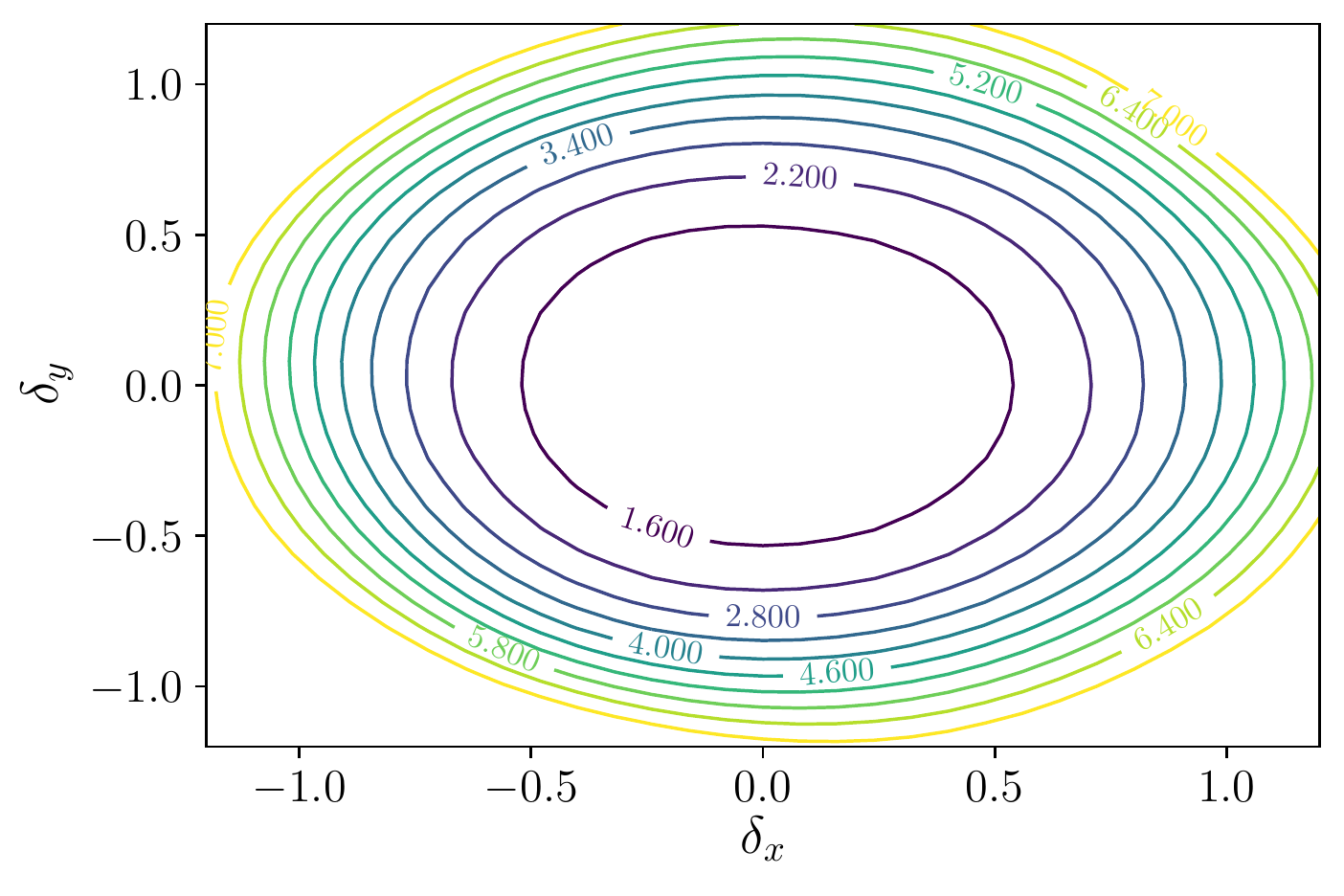}%
\caption{ERM Test Loss on CIFAR-100}%
\end{subfigure}%
\\
\begin{subfigure}{0.66\columnwidth}%
\centering%
\captionsetup{width=0.9\columnwidth}%
\includegraphics[width=0.9\columnwidth]{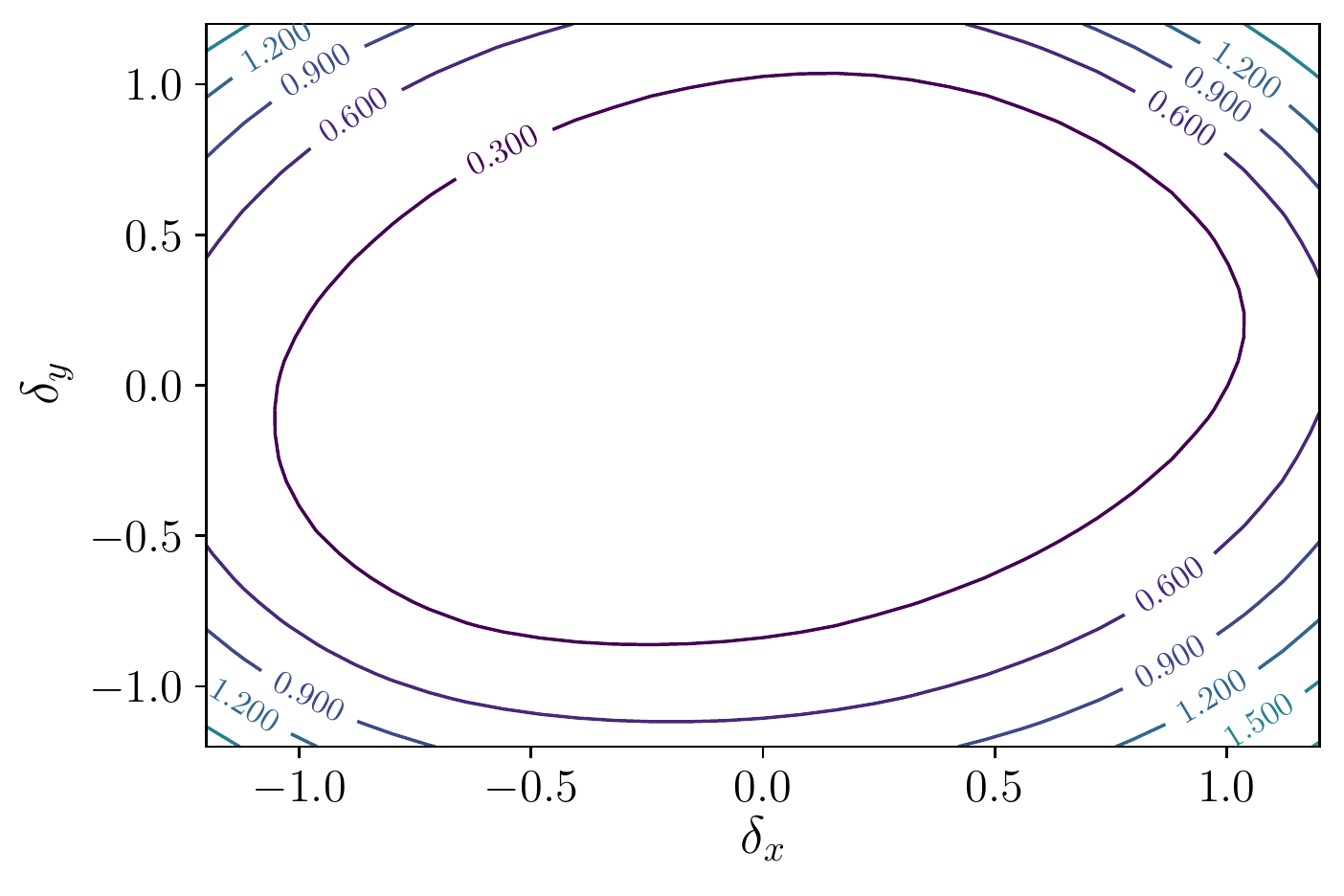}%
\caption{AMP Training Loss on SVHN}%
\end{subfigure}%
\begin{subfigure}{0.66\columnwidth}%
\centering%
\captionsetup{width=0.9\columnwidth}%
\includegraphics[width=0.9\columnwidth]{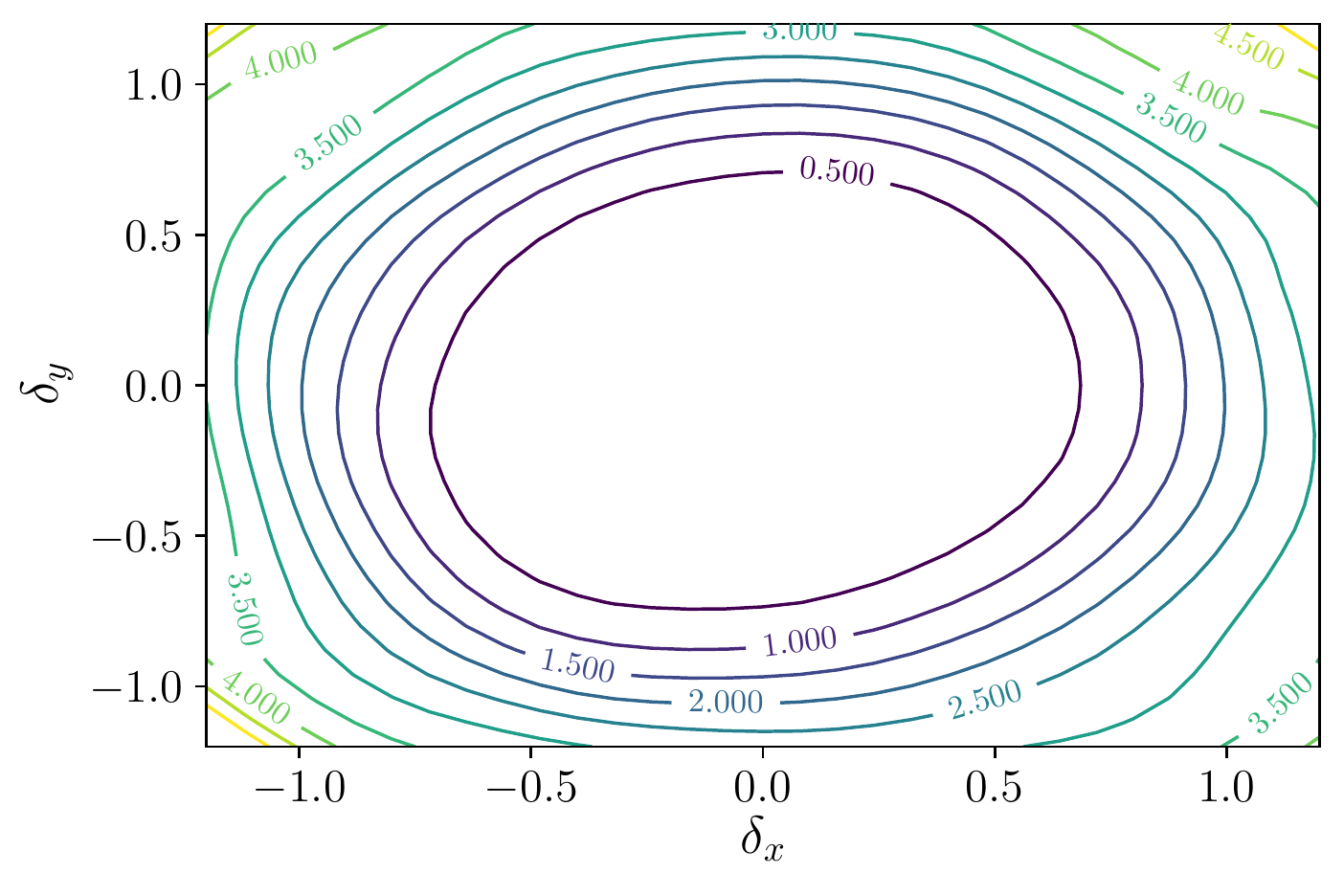}%
\caption{AMP Training Loss on CIFAR-10}%
\end{subfigure}%
\begin{subfigure}{0.66\columnwidth}%
\centering%
\captionsetup{width=0.9\columnwidth}%
\includegraphics[width=0.9\columnwidth]{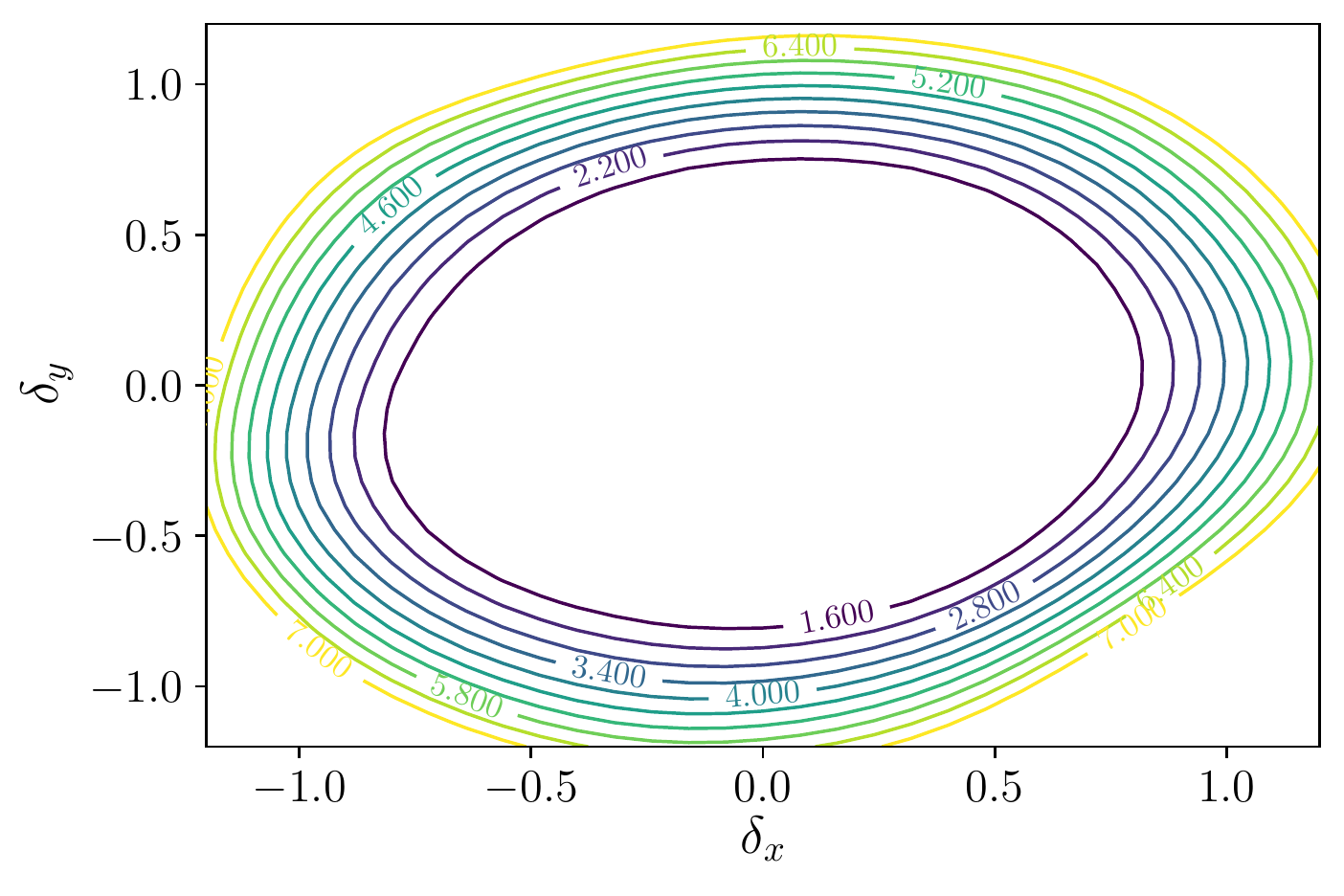}%
\caption{AMP Training Loss on CIFAR-100}%
\end{subfigure}%
\\
\begin{subfigure}{0.66\columnwidth}%
\centering%
\captionsetup{width=0.9\columnwidth}%
\includegraphics[width=0.9\columnwidth]{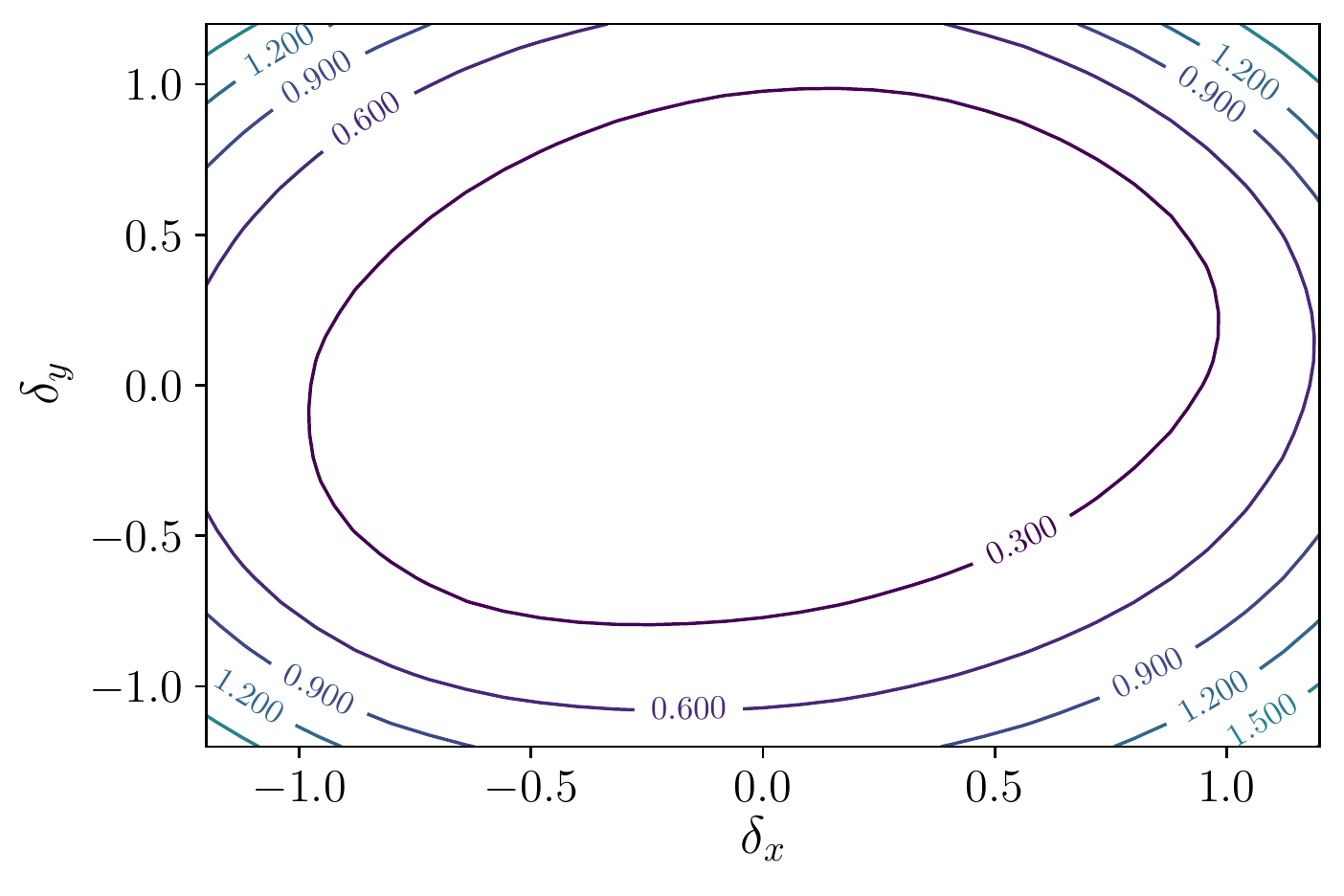}%
\caption{AMP Test Loss on SVHN}%
\end{subfigure}%
\begin{subfigure}{0.66\columnwidth}%
\centering%
\captionsetup{width=0.9\columnwidth}%
\includegraphics[width=0.9\columnwidth]{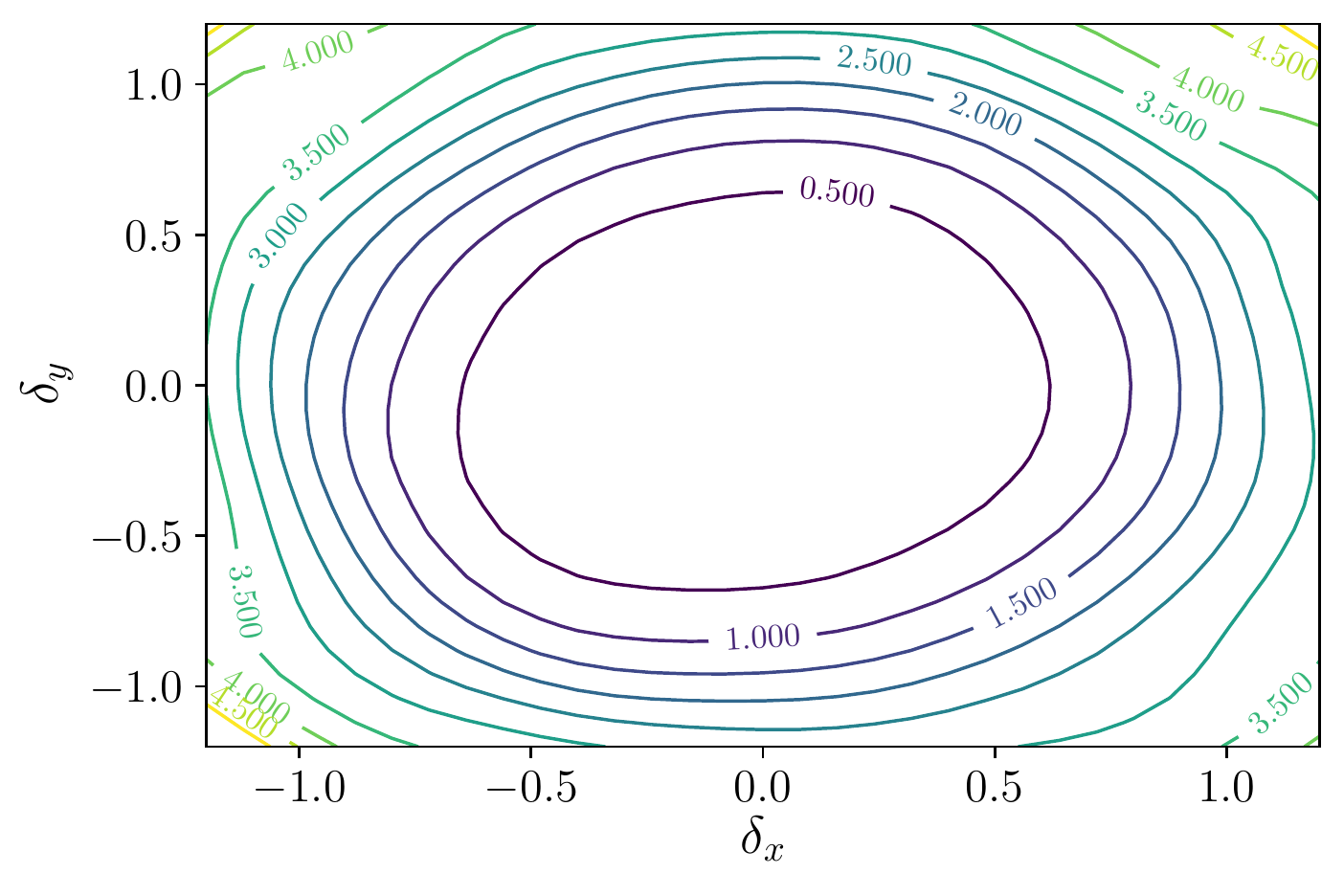}%
\caption{AMP Test Loss on CIFAR-10}%
\end{subfigure}%
\begin{subfigure}{0.66\columnwidth}%
\centering%
\captionsetup{width=0.9\columnwidth}%
\includegraphics[width=0.9\columnwidth]{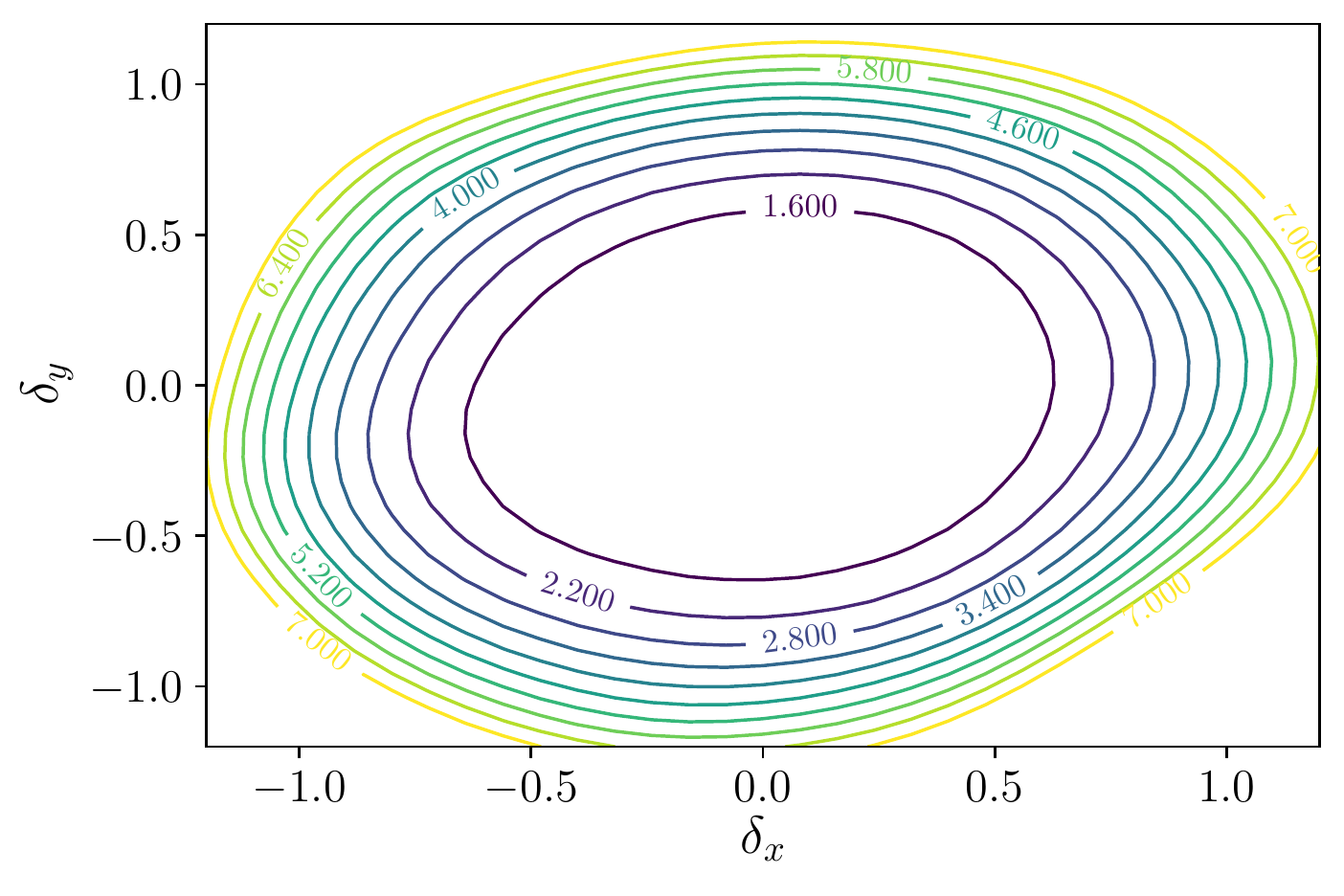}%
\caption{AMP Test Loss on CIFAR-100}%
\end{subfigure}%
\caption{2D visualization of the minima of the empirical risk selected by ERM and AMP on three benchmark image datasets.}
\label{fig:flatness2d}
\end{figure*}

\begin{figure*}[t]
\centering
\begin{subfigure}{0.9\columnwidth}%
\centering%
\captionsetup{width=0.9\columnwidth}%
\includegraphics[width=0.9\columnwidth]{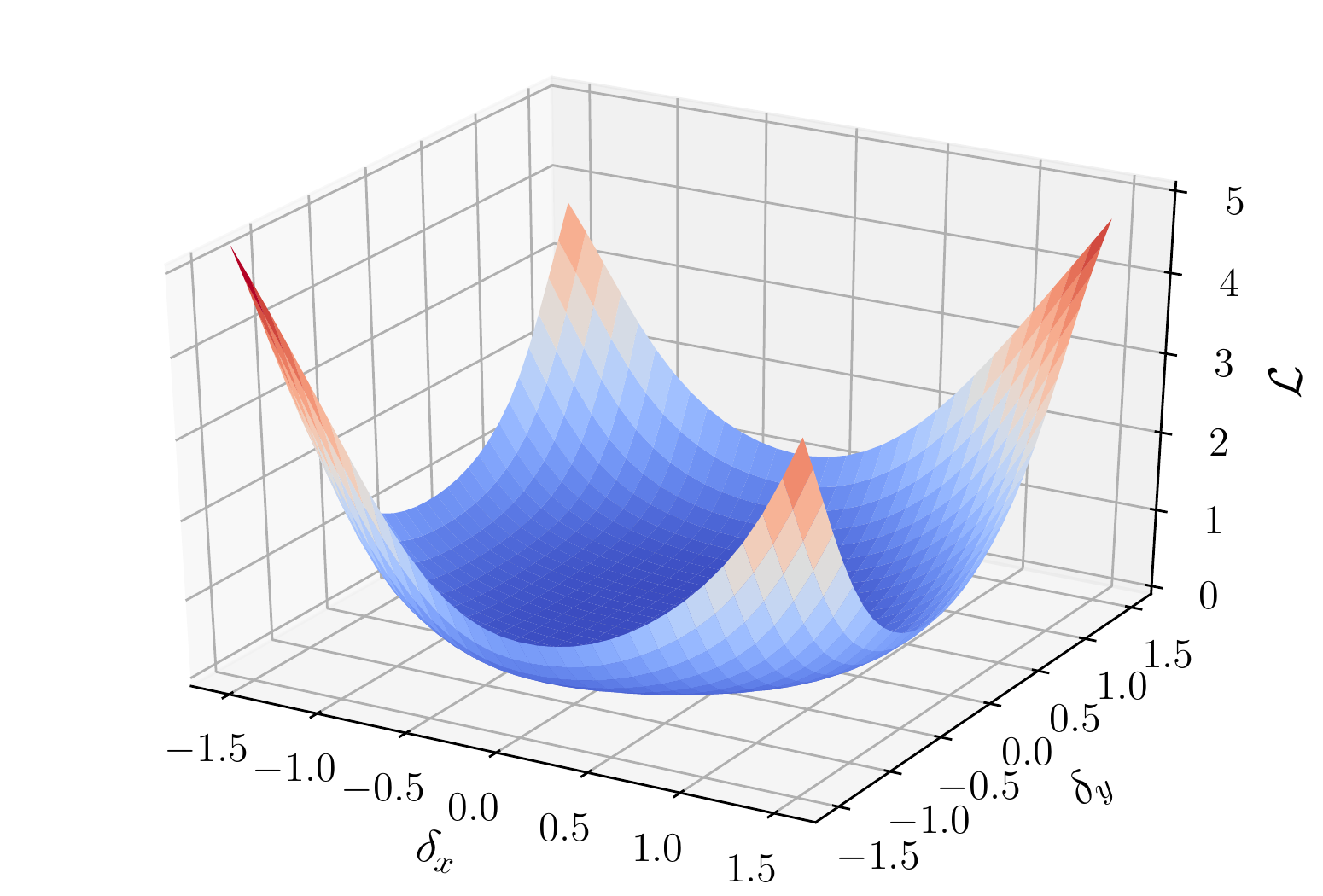}%
\caption{ERM Training Loss}%
\end{subfigure}%
\begin{subfigure}{0.9\columnwidth}%
\centering%
\captionsetup{width=0.9\columnwidth}%
\includegraphics[width=0.9\columnwidth]{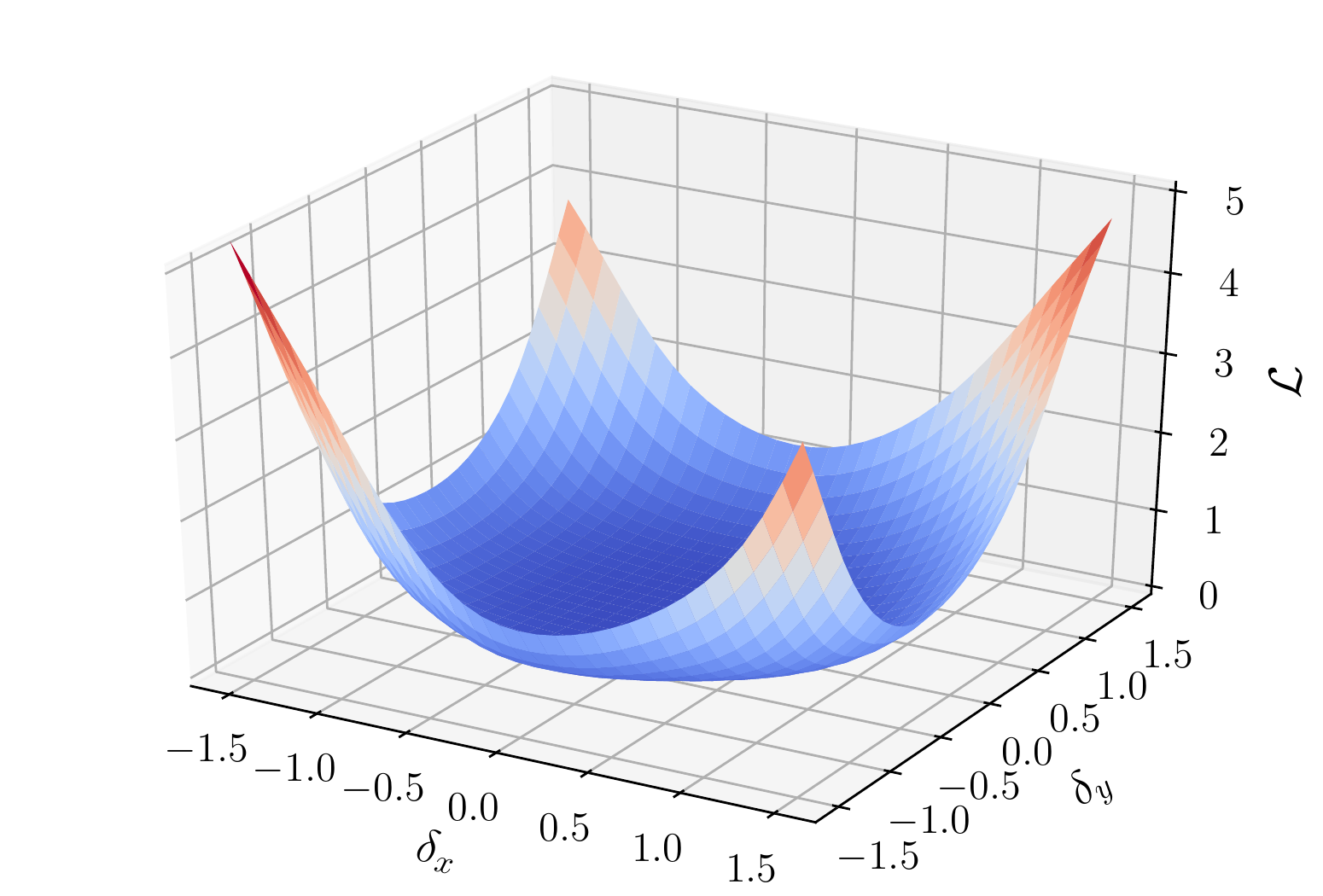}%
\caption{ERM Test Loss}%
\end{subfigure}%
\\
\begin{subfigure}{0.9\columnwidth}%
\centering%
\captionsetup{width=0.9\columnwidth}%
\includegraphics[width=0.9\columnwidth]{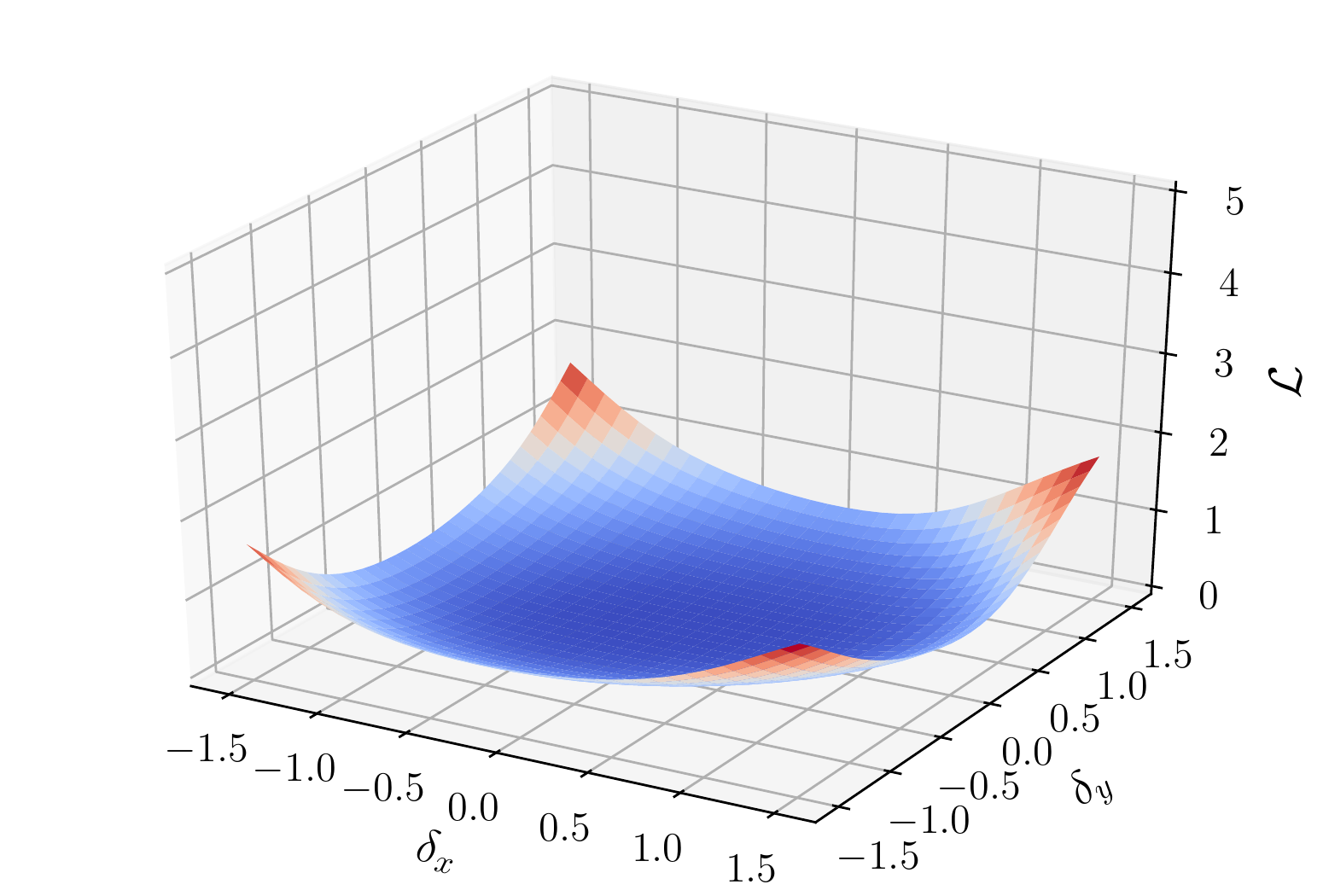}%
\caption{AMP Training Loss}%
\end{subfigure}%
\begin{subfigure}{0.9\columnwidth}%
\centering%
\captionsetup{width=0.9\columnwidth}%
\includegraphics[width=0.9\columnwidth]{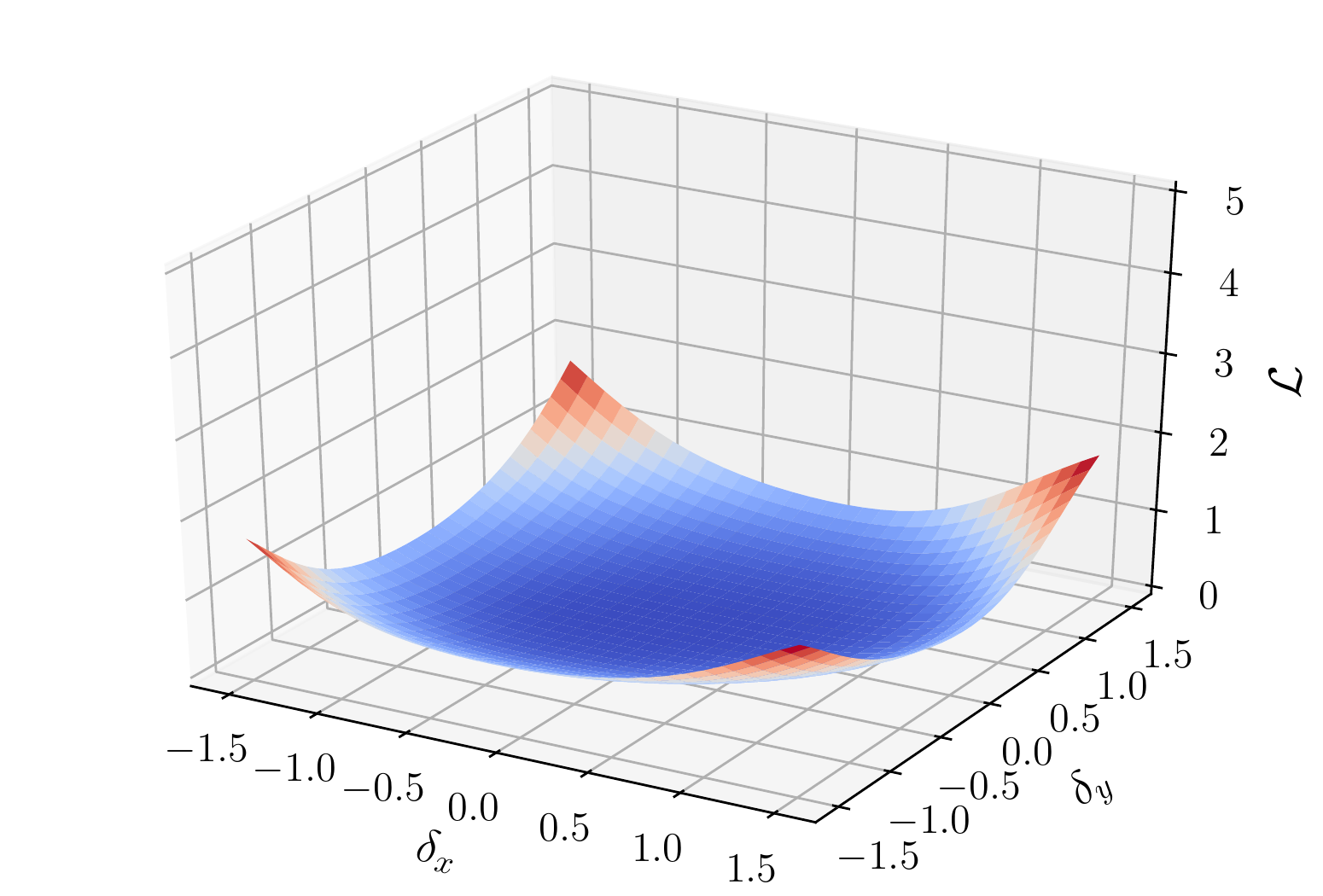}%
\caption{AMP Test Loss}%
\end{subfigure}%
\caption{3D visualization of the minima of the empirical risk selected by ERM and AMP on the SVHN dataset.}
\label{fig:flatness3d}
\end{figure*}

\subsection{Computing Environment and Resources}

Our PyTorch code is executed in a CUDA environment. When evaluated on a single Tesla V100 GPU, the code takes around 2.4 hours to train a PreActResNet18 model with ERM on the CIFAR-10 dataset, and around 4.2 hours with AMP. The computation time mainly depends on the number of inner iterations, the number of epochs, and the number of GPUs. The code and datasets for reproduction can be found at \url{https://github.com/hiyouga/AMP-Regularizer}.

\end{document}